\newtheorem{theorem}{Theorem}[section]
\newtheorem{lemma}[theorem]{Lemma}
\newtheorem{proposition}[theorem]{Proposition}
\newtheorem{corollary}[theorem]{Corollary}
\newtheorem{definition}[theorem]{Definition}
\newtheorem{remark}[theorem]{Remark}
\newtheorem{assumption}[theorem]{Assumption}
\newcommand{\argmin}{\mathop{\mathrm{argmin\,}}}
\newcommand{\argmax}{\mathop{\mathrm{argmax\,}}}
\newcommand{\mathbbR}{\mathbb{R}}
\newcommand{\boldone}{{\boldsymbol{1}}}
\newcommand{\boldH}{{\boldsymbol{H}}}
\newcommand{\boldI}{{\boldsymbol{I}}}
\newcommand{\boldK}{{\boldsymbol{K}}}
\newcommand{\boldL}{{\boldsymbol{L}}}
\newcommand{\boldtheta}{{\boldsymbol{\theta}}}
\newcommand{\calD}{{\mathcal{D}}}
\newcommand{\calK}{{\mathcal{K}}}
\newcommand{\calL}{{\mathcal{L}}}
\newcommand{\calS}{{\mathcal{S}}}
\newcommand{\calX}{{\mathcal{X}}}
\newcommand{\calY}{{\mathcal{Y}}}
\newcommand{\Pb}{{P^{\ast}}'}
\newcolumntype{L}{D{.}{.}{2,5}}
\def\mds{\medskip}
\def\Rb{{\mathbb R}}
\def\Hc{{\mathcal H}}
\long\def\symbolfootnote[#1]#2{\begingroup%
\def\thefootnote{\fnsymbol{footnote}}\footnotetext[#1]{#2}\footnotemark[#1]\endgroup}
\renewcommand{\theequation}{\thesection.\arabic{equation}}
\newcounter{Fig}[figure]
\newcounter{Tab}[table]
   \samepage\vspace{0.2cm}
\newcommand{\xx}{\mbox{\boldmath $x$}}
\newcommand{\KK}{\mbox{\boldmath $K$}}
\newcommand{\pp}{\mbox{\boldmath $p$}}
\newcommand{\uu}{\mbox{\boldmath $u$}}
\newcommand{\vvv}{\mbox{\boldmath $v$}}
\def \Eb{{\mathbb E}}
\def \Gb{{\mathbb G}}
\def \Jb{{\mathbb J}}
\def \Lb{{\mathbb L}}
\def \Nb{{\mathbb N}}
\def \Pb{{\mathbb P}}
\def \Rb{{\mathbb R}}
\def \Ac{{\mathcal A}}
\def \Bc{{\mathcal B}}
\def \Gc{{\mathcal G}}
\def \Ec{{\mathcal E}}
\def \Sc{{\mathcal S}}
\def \Hc{{\mathcal H}}
\def \Xc{{\mathcal X}}
\def \Yc{{\mathcal Y}}
\def \Ic{{\mathcal I}}
\newcommand{\bqa}{\begin{eqnarray*}}
\newcommand{\eqa}{\end{eqnarray*}}
\newcommand{\bqan}{\begin{eqnarray}}
\newcommand{\eqan}{\end{eqnarray}}
\newcommand{\bqt}{\begin{quote}}
\newcommand{\eqt}{\end{quote}}
\newcommand{\bt}{\begin{tabbing}}
\newcommand{\et}{\end{tabbing}}
\newcommand{\bit}{\begin{itemize}}
\newcommand{\eit}{\end{itemize}}
\newcommand{\ben}{\begin{enumerate}}
\newcommand{\een}{\end{enumerate}}
\newcommand{\beq}{\begin{equation}}
\newcommand{\eeq}{\end{equation}}
\newcommand{\beqw}{\begin{equation*}}
\newcommand{\eeqw}{\end{equation*}}
\newcommand{\bdefi}{\begin{definition}}
\newcommand{\edefi}{\end{definition}}
\newcommand{\bpro}{\begin{proposition}}
\newcommand{\epro}{\end{proposition}}
\newcommand{\blem}{\begin{lemma}}
\newcommand{\elem}{\end{lemma}}
\newcommand{\bco}{\begin{corollary}}
\newcommand{\eco}{\end{corollary}}
\newcommand{\bdes}{\begin{description}}
\newcommand{\edes}{\end{description}}
\newcommand{\eps}{\epsilon}
\def\mds{\medskip}
\def\1{{\mathbf 1}}
\def\0{{\mathbf 0}}
\long\def\acks#1{\vskip 0.3in\noindent{\large\bf Acknowledgments and Disclosure of Funding}\vskip 0.2in
\noindent #1}
\newcommand*{\addFileDependency}[1]{% argument=file name and extension
  \typeout{(#1)}
  \@addtofilelist{#1}
  \IfFileExists{#1}{}{\typeout{No file #1.}}
}
\title{Sparse minimum Redundancy Maximum Relevance for feature selection}
\author[1]{Peter Naylor}
\author[1,2,3]{Benjamin Poignard}
\author[1]{Héctor Climente-González}
\author[1,3]{Makoto Yamada}
\affil[1]{High-Dimensional Statistical Modeling Team, RIKEN AIP, Kyoto, 606-8501, Japan}
\affil[2]{Keio University, Faculty of Science and Technology, Department of Mathematics, 3-14-1 Hiyoshi, Kohoku-ku, Yokohama, 2238522, Japan}
\affil[4]{Machine Learning and Data Science Unit, Okinawa Institute of Science and Technology, Okinawa, 904-0412, Japan}
\date{\today}
\date{\today}
\begin{document}

\maketitle

\begin{abstract}
We propose a feature screening method that integrates both feature-feature and feature-target relationships. Inactive features are identified via a penalized minimum Redundancy Maximum Relevance (mRMR) procedure, which is the continuous version of the classic mRMR penalized by a non-convex regularizer, and where the parameters estimated as zero coefficients represent the set of inactive features. We establish the conditions under which zero coefficients are correctly identified to guarantee accurate recovery of inactive features. We introduce a multi-stage procedure based on the knockoff filter enabling the penalized mRMR to discard inactive features while controlling the false discovery rate (FDR). Our method performs comparably to HSIC-LASSO but is more conservative in the number of selected features. It only requires setting an FDR threshold, rather than specifying the number of features to retain. The effectiveness of the method is illustrated through simulations and real-world datasets. The code to reproduce this work is available on the following GitHub: \url{https://github.com/PeterJackNaylor/SmRMR}.

% Please include a maximum of seven keywords
\textit{Keywords: FDR, Feature screening, mRMR, Redundancy, Sparsistency}
% \keywords{}
\end{abstract}

\section{Introduction}

A much employed approach to solve high-dimensional problems is sparse modelling, which focuses on variable selection by discarding unimportant variables for prediction. It aims to recover the true signal when the underlying model admits a sparse representation by applying a penalization procedure to the objective function. A flourishing literature has been dedicated to the devise of suitable penalty functions that can best fit the observed patterns and that satisfy desirable properties: in particular, see \cite{fan2001variable} and \cite{fan2004} for asymptotic results; see \cite{loh2017} or \cite{poignard2022_aism} for a non-asymptotic analysis.

When the data is ultra high-dimensional, that is $p \gg n$ with $p$ the number of variables and $n$ the sample size, sparsity-based methods may not perform well due to their computational cost, lack in both statistical accuracy and algorithmic stability. In light of these issues, \cite{fan2008sure} developed the sure independent screening (SIS) approach. The latter is based on marginal Pearson correlation learning between the features and the target variable. However, the procedure relies on a linear model with Gaussian predictors and responses so that it may not be robust to model misspecification. This gave rise to a broad range of studies on model-free SIS methods: the Distance Covariance of \cite{rizzo2007} and \cite{rizzo2009}; the Distance Correlation of \cite{li2012feature}; the $\sup$-HSIC of \cite{balasubramanian2013ultrahigh}; the Kolmogorov filter of \cite{mai2013} and \cite{mai2015}; the Projection Correlation of \cite{liu2022}; and the mixture Kendall’s Tau and Spearman Rho of \cite{poignard2022}, among others. As pointed out by \cite{mai2015}, feature screening and sparsity-based selection methods differ in their objectives: on the one hand, variable selection attempts to recover the true sparse signal, which may require high-level conditions such as the incoherence assumption; whereas on the other hand, feature screening aims to discover a majority of inactive features and it is thus less ambitious and requires weaker assumptions. 

Furthermore, the technique employed to discover the features truly associated with the target variable should result in a limited number of false discoveries, that is, the False Discovery Rate (FDR) should be low. The selected model should be as large as possible while controlling the FDR, that is the type S error introduced by \cite{gelman2000type}. To satisfy this constraint in linear regression analysis, \cite{barber2015} developed the knockoff filter, which creates copies of the original variables serving as control variables to guarantee the control of the FDR by the feature selection technique. In addition to  upper bounding the directional FDR as well as the standard FDR, it allows to report the selected features for a given confidence level. This novel technique has been extended in \cite{barber2019} to accommodate the $p \gg n$ case and to deal with the model-free viewpoint, as in, e.g., \cite{romano2020}. In particular, the pertinence of the method has been illustrated in the context of biological data: \cite{srinivasan2021compositional} proposed a two-step compositional knockoff filter to control the FDR in microbiome data; \cite{tian2022grace} developed Grace-AKO for graph-constrained estimation and improved the feature selection stability of gene selection by repeating the knockoff procedure and aggregating information.

In this paper, we propose a two step approach for feature screening. After screening out the inactive features, the second step aims to upper bound the FDR at a pre-specified level. In contrast with the standard SIS procedure, which only considers the feature-target variable relationship, our approach refines the screening step by incorporating the feature-feature relationship. This procedure builds upon the minimum redundancy maximum relevance (mRMR) of \cite{PAMI:Peng+etal:2005} and the HSIC-LASSO of \cite{yamada2014high} to solve the redundancy issue. More precisely, we consider the following problem: given $n$ observations of $p$ features $X_1,\ldots,X_p$, we screen out the features whose marginal utility with the target variable $Y$ is not significant. To do so, we develop a novel screening procedure, called sparse mRMR (SmRMR hereafter), which extends the standard mRMR algorithm through the introduction of a non-convex penalization. Indeed, the SmRMR criterion can be recast as a penalized M-estimation problem, where the mRMR parameters and their zero entries correspond to the relaxation parameters of the original mRMR problem and to non-significant dependence, respectively. The optimal threshold selection for screening out the features with the SmRMR is performed via cross-validation, contrary to model-free screening procedures. In the second step, we apply the knockoff filter to the features selected in the first step, allowing us to report the selected features for a given confidence level. Our main contributions are as follows: we establish the existence of a consistent estimator for the SmRMR problem and show that the true zero elements of the mRMR criterion can be consistently estimated, ensuring the correct identification of the non-active features; conditional on the screening step, using the knockoff filter on the selected features, we verify that our SmRMR method controls the FDR for a given confidence level. 

The remainder of the paper is organized as follows. Section \ref{prelim} details the framework and the SmRMR procedure. Section \ref{asymptotic_properties} is devoted to the theoretical properties of the SmRMR. Section \ref{mRMR_fdr} details the knockoff filter for SmRMR and FDR control. Section \ref{applications} illustrates the performances of the proposed method through simulations and real data experiments. All the proofs of the
main text and auxiliary results are relegated to the Appendix.

\textbf{\emph{Notations.}} Throughout this paper, we denote the cardinality of a set $E$ by $\text{card}(E)$. For a vector $\vvv \in \Rb^{d}$, the $\ell_p$ norm is $\|\vvv\|_p = \big(\sum^{\text{d}}_{k=1} |\vvv_k|^p \big)^{1/p}$ for $p > 0$, and $\|\vvv\|_{\infty} = \max_i|\vvv_i|$. 
%Let the subset $\Sc \subseteq \{1,\ldots,d\}$, then $\vvv_{\Sc} \in \Rb^{|\Sc|}$ is the vector $\vvv$ restricted to $\Sc$. 
For a symmetric matrix $A$, $\lambda_{\min}(A)$ (resp. $\lambda_{\max}(A)$) denotes the minimum (resp. maximum) eigenvalue of $A$, and $\text{tr}(A)$ is the trace operator. For a matrix $B$, $\|B\|_s = \lambda^{1/2}_{\max}(B^\top B)^{1/2}$ and $\|B\|_F=\text{tr}(B^\top B)$ are the spectral and Frobenius norms, respectively, and $\|B\|_{\max} = \max_{ij}|B_{i,j}|$ is the coordinate-wise maximum. 
For a function $f: \Rb^{d} \rightarrow \Rb$, we denote by $\partial f$ the component-by-component partial derivative of $f$. We write $\Sc^c$ to denote the complement of the set $\Sc$. 

\section{Framework} \label{prelim}

\subsection{Feature screening}\label{feature_screening_framework}

Standard feature screening methods consider a response variable $Y$ and $p$ features $(X_1,\ldots,X_p)$ among which they aim to discover the inactive ones through a marginal utility $\text{D}(.,.)$ between $Y$ and $X_k, 1 \leq k \leq p$. The viewpoint is the Sure Independence Screening (SIS). In the same vein as in \cite{li2012feature}, let $\Sc$ be the set of active features and $\Ic$ be the set of inactive features, respectively defined as
\begin{equation*}
\begin{array}{llll}
\Sc &:=& \big\{k: \, \text{F}(Y|X_1,\ldots,X_p) \;\text{functionally depends on} \; X_k, 1 \leq k \leq d\big\},\\
\Ic &:=& \big\{k: \, \text{F}(Y|X_1,\ldots,X_p) \; \text{does not functionally depend on} \; X_k, 1 \leq k \leq d\big\},
\end{array}
\end{equation*}
where $\text{F}(Y|X_1,\ldots,X_p)$ is the probability distribution of $Y|X_1,\ldots,X_p$. The SIS approach aims to find a majority of $\Ic$, that is given the set of active features $\{X_k, k \in \Sc\}$, it checks that the features $\{X_k, k \in \Ic\}$ are independent of $Y$. To estimate $\Sc$ and screen out $\{X_k, k \in \Ic\}$, SIS computes an estimator of the marginal utility $\text{D}(.,.)$, denoted by $\widehat{\text{D}}_v(.,.)$, which quantifies the dependence between $Y$ and feature $X_k$. More formally, the estimator $\widehat{\text{D}}_v(\mathbf{X}_k,\mathbf{Y})$ is deduced from the sample of observations $\mathbf{Y}=(Y_1,\ldots,Y_n)$ and $\mathbf{X}_k = (X_{1k},\ldots,X_{nk}), k = 1,\ldots,p$, and the theoretical set $\Sc$ is estimated by the set
\begin{equation}\label{active_var_estim}
\widehat{\Sc}^{\lambda_n} := \big\{k: \; \widehat{\text{D}}_v(\mathbf{X}_k,\mathbf{Y}) \geq \lambda_n, \; k = 1,\ldots,p\big\},
\end{equation}
where $\lambda_n \geq 0$ is a threshold parameter that depends on the sample size and controls the number of selected active features. The calibration of $\lambda_n$ is key to obtain the sure independence screening (SIS) property of the procedure, that is recovering with high probability the set $\Sc$. Here, $\widehat{\text{D}}_v(.,.)$ is a V-statistic estimator, non-negative, of the marginal utility $\text{D}(.,.)$.

\subsection{Minimal-redundancy Maximal-relevance (mRMR)}

Aiming to the same objective of finding a set $S$ of size $s$ of active features, an alternative approach is the maximum relevance (MR) feature selection developed by \cite{PAMI:Peng+etal:2005}. Let $\widehat{\text{D}}_v(\mathbf{X}_k,\mathbf{Y})$ be the V-statistic estimator of $\text{D}(.,.)$, then MR aims to maximize the relevance $(\text{card}(S))^{-1}\sum_{k \in \calS}\widehat{\text{D}}_v(\mathbf{X}_k,\mathbf{Y})$.
% \begin{equation*}
%     \widehat{\Sc} = \argmax_{\calS} \;\; \frac{1}{s}\sum_{k \in \calS}\widehat{\text{D}}_v(\mathbf{X}_k,\mathbf{Y}).
% \end{equation*}
In their seminal work, \cite{PAMI:Peng+etal:2005} use the Mutual Information as the marginal utility. One drawback of the MR method is the tendency to select redundant features, that is the selected features can be highly correlated since it does not use the feature-feature relationship. This is a drawback also shared by the SIS approach. Redundant features refers to the situation where adding highly correlated variables does not improve information. Since no feature-feature relationship is considered in both the SIS and MR approaches, they are unable to discover which combination of features would result in the best performance: see \cite{PAMI:Peng+etal:2005} or \cite{guyon2003} for further details on the redundancy issue.

To overcome the redundancy problem, \cite{PAMI:Peng+etal:2005} devised the mRMR feature selection algorithm, which estimates the set of active features as
\begin{equation}
\label{eq:mrmr}
\widehat{\Sc} = \argmax_{\calS} \; \frac{1}{s}\sum_{k \in \calS}\widehat{\text{D}}_v(\mathbf{X}_k,\mathbf{Y}) \!-\! \frac{1}{s^2}\sum_{k \in \calS} \sum_{l \in \calS}\!\widehat{\text{D}}_v(\mathbf{X}_k,\mathbf{X}_{l}).
\end{equation}
The second term in (\ref{eq:mrmr}) can be viewed as a ``penalization'' term that selects independent features: this is the redundancy term. Since $\widehat{\text{D}}_v(\mathbf{X}_k,\mathbf{X}_{l})$ takes non-negative values when $X_k$ and $X_{l}$ are non-independent, the second term takes large negative values if the selected features are not mutually independent. Thus, by selecting features through the maximization of equation (\ref{eq:mrmr}), we can select features that are dependent on the output, and the selected features are mutually independent. This criterion selects the features with both minimum redundancy, approximated as the mean value of the association measure between each pair of variables in the subset, and maximum relevance, estimated as the mean value of the association measure between each feature and the target class. 

\subsection{Penalized mRMR}

The original mRMR algorithm consists of a discrete optimization problem, and the optimization is in general difficult so that continuous optimization tends to be used to mitigate the problem. The mRMR algorithm can be recast as
\begin{equation*}
\widehat{\boldsymbol{\beta}} = \underset{\mathbf{\beta} \in \{0,1\}^p}{\argmin} \;\; -\frac{1}{\mathbf{\beta}^\top \boldone_p}\sum_{k =1}^p \beta_k \widehat{\text{D}}_v(\mathbf{X}_k,\mathbf{Y}) + \frac{1}{(\mathbf{\beta}^\top \boldone_p)^2}\sum_{k,l =1}^p \beta_k \beta_{l}\widehat{\text{D}}_v(\mathbf{X}_k,\mathbf{X}_{l}).
\end{equation*}
We then relax $\mathbf{\beta}$ as $\mathbf{\theta} = (\theta_1,\theta_2, \ldots, \theta_p)^\top \in \mathbbR^p$ so that the problem becomes
\begin{equation*}
\widetilde{\mathbf{\theta}} = \underset{\mathbf{\theta} \in \Rb^p_+ }{\argmin} \; -\sum_{k = 1}^p \theta_k\widehat{\text{D}}_v(\mathbf{X}_k,\mathbf{Y}) + \frac{1}{2} \sum_{k,l = 1}^p\theta_k \theta_{l}\widehat{\text{D}}_v(\mathbf{X}_k,\mathbf{X}_{l}).
\end{equation*}
The problem is equivalent to minimizing, up to some constant terms, the loss function
\begin{equation} \label{unpen_loss}
\Lb_{v,n}(\mathbf{\theta}) = \frac{1}{2}\widehat{\text{D}}_v(\mathbf{Y},\mathbf{Y}) -  \overset{p}{\underset{k=1}{\sum}} \theta_k \widehat{\text{D}}_v(\mathbf{X}_k,\mathbf{Y}) + \frac{1}{2} \overset{p}{\underset{k,l=1}{\sum}} \theta_k\theta_l \widehat{\text{D}}_v(\mathbf{X}_k,\mathbf{X}_l).
\end{equation}
To perform feature screening while taking into account the feature-feature relationship, we apply a penalty term to $\Lb_{v,n}(\theta)$ in the minimization problem so that the problem becomes
\begin{equation}\label{penalized_mrmr}
\widehat{\mathbf{\theta}} \in \underset{\mathbf{\theta} \in \Rb^p_+}{\argmin} \; n\,\Lb_{v,n}(\mathbf{\theta}) + n\overset{p}{\underset{k=1}{\sum}}\pp(\lambda_n,\theta_k),
\end{equation}
when such a minimizer exists, where $\pp(\lambda_n,x), x \geq 0$ is a penalty function that fosters sparsity among $\mathbf{\theta}$ with regularization parameter $\lambda_n \geq 0$. Problem (\ref{penalized_mrmr}) is our proposed SmRMR procedure. In this paper, we consider the LASSO penalty of \cite{tibshirani1996}, defined as $\pp(\lambda,x) = \lambda x$ for $x \geq 0$, and the non-convex penalty functions SCAD and MCP. The SCAD penalty of \cite{fan2001variable} is defined as: for every $x \geq 0$,
\begin{equation*}
\pp(\lambda,x) = \lambda x \, \mathbf{1}\big(x \leq \lambda\big) + \frac{2 a_{\text{scad}}\lambda x-x^2-\lambda^2}{2(a_{\text{scad}}-1)}\, \mathbf{1}\big(\lambda < x \leq a_{\text{scad}} \lambda\big) + \frac{1}{2}(a_{\text{scad}}+1)\lambda^2 \, \mathbf{1}\big(x > a_{\text{scad}}\lambda\big),
\end{equation*}
where $a_{\text{scad}}>2$ and whose derivative is $\partial_x \pp(\lambda,x) = \lambda \boldone_{(x\leq \lambda)} + \frac{(a_{\text{scad}}\lambda-x)^+}{(a_{\text{scad}}-1)}$.
The MCP due to~\cite{zhang2010} is defined for $b_{\text{mcp}}>0$ as: for every $x\geq 0$, 
\begin{equation*}
\pp(\lambda,x)  = \big(\lambda x-\frac{x^2}{2 b_{\text{mcp}}}\big) \boldone\big(x\leq b_{\text{mcp}}\lambda\big) + \lambda^2\frac{b_{\text{mcp}}}{2} \boldone
\big(x> b_{\text{mcp}}\lambda\big),
\end{equation*}
and whose derivative is $\partial_x \pp(\lambda,x) = (\lambda - \frac{x}{b_{\text{mcp}}})^+$.
Note that when $a_{\text{scad}} \rightarrow \infty$ and $b_{\text{mcp}} \rightarrow \infty$, then the SCAD and MCP become the LASSO. In light of (\ref{unpen_loss}) and (\ref{penalized_mrmr}), we observe the following properties: a strong dependence between $Y$ and the $k$-th feature $X_k$ implies a large positive value for $\widehat{\text{D}}_v(\mathbf{X}_k,\mathbf{Y})$, so that $\theta_k$ should be estimated as a large positive value in (\ref{penalized_mrmr}). On the contrary, a weak relationship should translate into small values of $\widehat{\text{D}}_v(\mathbf{X}_k,\mathbf{Y})$, so that the corresponding parameter $\theta_k$ should be shrunk to zero by the penalty function. Finally, when the $k$-th and $l$-th features $X_k, X_l$ are strongly correlated, then $\widehat{\text{D}}_v(\mathbf{X}_k,\mathbf{X}_l)$ takes a large positive value, so that $\theta_k$ or $\theta_l$ should be shrunk to zero in (\ref{penalized_mrmr}) for minimization. Problem (\ref{penalized_mrmr}) is actually a modification of the standard feature screening method (\ref{active_var_estim}). Consider the first order condition (righ-derivative) of problem (\ref{penalized_mrmr}) with the LASSO penalty. We get
\begin{equation*}
\forall 1 \leq k \leq p,\; -n\widehat{\text{D}}_v(\mathbf{X}_k, \mathbf{Y}) + n\sum_{l = 1}^p \widehat\theta_{l} \widehat{\text{D}}_v(\mathbf{X}_k,\mathbf{X}_l)  + n\partial_{\theta_k}\big(\pp(\lambda_n,\theta_k) \big) = 0,
\end{equation*}
where $\partial_{\theta_k}\big(\pp(\lambda_n,|\theta_k|) \big)= \lambda_n \widehat{\mathbf{z}}_{k}$, with $\widehat{\mathbf{z}}_{k}$ the subdifferential defined as
\begin{equation*}
\widehat{\mathbf{z}}_{k} = \text{sgn}(\widehat{\theta}_{k}) = 1 \;\; \text{when} \;\; \widehat{\theta}_{k} > 0, \;\; \text{and} \;\; \widehat{\mathbf{z}}_{k} \in \big\{\widehat{\mathbf{z}}_k:\widehat{\mathbf{z}}_{k}\leq 1\big\} \;\; \text{when} \;\; \widehat{\theta}_{k}=0,
\end{equation*}
under the non-negative constraint. It can be noted that the subdifferential of the LASSO under the non-negative constraint is similar to the standard LASSO. Now consider the case $\widehat{\theta}_{k} > 0, 1 \leq k \leq p$. The orthogonality condition becomes $\widehat{\text{D}}_v(\mathbf{X}_k,\mathbf{Y}) =\sum_{l= 1}^p \widehat\theta_{l} \widehat{\text{D}}_v(\mathbf{X}_k,\mathbf{X}_l) + \lambda_n$.
Under the non-negative constraint on the parameters and since the V-statistic estimator of $\text{D}(.,.)$ is non-negative, $\sum_{l = 1}^p \widehat\theta_{l} \widehat{\text{D}}_v(\mathbf{X}_k,\mathbf{X}_l) \geq 0$ so that $\widehat{\text{D}}_v(\mathbf{X}_k,\mathbf{Y}) \geq \lambda_n$ by the orthogonality condition. Now, when $\widehat{\theta}_{k}=0, 1 \leq k \leq p$, then
\begin{equation*}
\widehat{\text{D}}_v(\mathbf{X}_k,\mathbf{Y})=\sum_{l\neq k}^p \widehat\theta_{l} \widehat{\text{D}}_v(\mathbf{X}_k,\mathbf{X}_l) + \lambda_n\widehat{\mathbf{z}}_{k}, \;\text{so that} \; \widehat{\text{D}}_v(\mathbf{X}_k,\mathbf{Y})\leq \sum_{l \neq k}^p \widehat\theta_{l} \widehat{\text{D}}_v(\mathbf{X}_k,\mathbf{X}_l) +  \lambda_n.
\end{equation*}
Hence, the sparse estimation of $\mathbf{\theta}$ under the non-negative constraint allows to perform feature screening in the same vein as in the standard procedure for feature screening. More precisely, a positive estimated parameter $\widehat\theta_k$ implies $\widehat{\text{D}}_v(\mathbf{X}_k,\mathbf{Y})\geq \lambda_n$: from this inequality, the standard feature screening procedure identifies $k$ as an important feature for $Y$ in light of the set (\ref{active_var_estim}). Now, when the $k$-th component of $\mathbf{\theta}$ is shrunk to zero by the penalty function, then $\widehat{\text{D}}_v(\mathbf{X}_k,\mathbf{Y}) \leq \sum_{l \neq k}^p \widehat\theta_{l} \widehat{\text{D}}_v(\mathbf{X}_k,\mathbf{X}_l)+\lambda_n$. Standard feature screening screens out an unimportant feature when $\widehat{\text{D}}_v(\mathbf{X}_k,\mathbf{Y})<\lambda_n$. Thus, our proposed methodology refines the standard approach by incorporating the feature-feature relationship in the decision to select/exclude the feature.

In the SCAD case, when $0<\widehat{\theta}_{k}\leq \lambda_n$, the problem is similar to the LASSO shrinkage. If $\widehat{\theta}_{k}>\lambda_n\neq 0, 1 \leq k \leq p$ (coefficient sufficiently large), the first order condition becomes $\widehat{\text{D}}_v(\mathbf{X}_k,\mathbf{Y}) = \sum_{l = 1}^p \widehat\theta_{l} \widehat{\text{D}}_v(\mathbf{X}_k,\mathbf{X}_l) + \frac{(a_{\text{scad}}\lambda_n-\widehat{\theta}_{k})^+}{(a_{\text{scad}}-1)}$.
When $\lambda_n<\widehat{\theta}_{k}<a_{\text{scad}}\lambda_n$, then the selection rule is $\widehat{\text{D}}_v(\mathbf{X}_k,\mathbf{Y})\geq \frac{a_{\text{scad}}\lambda_n-\widehat{\theta}_{k}}{(a_{\text{scad}}-1)}\geq \lambda_n-\frac{\widehat{\theta}_{k}}{a_{\text{scad}}-1}\cdot$ When $a_{\text{scad}}\rightarrow \infty$, then the selection rule is the same as in the LASSO case, since the SCAD penalty tends to the LASSO penalty. In the same manner, when the penalty is the MCP, when $0<\widehat{\theta}_{k}<b_{\text{mcp}}\lambda_n$, then the selection rule is $\widehat{\text{D}}_v(\mathbf{X}_k,\mathbf{Y})\geq \lambda_n-\frac{\widehat{\theta}_{k}}{b_{\text{mcp}}}\cdot$

\subsection{Association measures}\label{association_measure}

Solving problem (\ref{penalized_mrmr}) requires the specification of an association measure $\text{D}(.,.)$ together with its estimator $\widehat{\text{D}}_v(.,.)$ to characterize the dependence between two random variables, say $X$ and $Y$. In this paper, we will consider the following two commonly used measures: the squared Projection Correlation ($\text{PC}^2$); the squared Hilbert-Schmidt norm of the cross-covariance operator ($\text{HSIC}$), which includes the Distance Covariance ($\text{Dcov}$) for a suitable kernel. These measures share common properties when measuring the dependence: both $\text{HSIC}$ and $\text{PC}$ are non-negative and equal zero if and only if the two arguments are independent. Hereafter, the two random variables $X,Y$ take values on $(\Xc,\Bc_{x})$ and $(\Yc,\Bc_{y})$, respectively, where $\Xc,\Yc$ are two separable metric spaces, $\Bc_{x},\Bc_{y}$ are Borel $\sigma$-algebras. Then, $(\Xc \times \Yc, \Bc_{x} \times \Bc_{y})$ is measurable, and the joint distribution is defined as $\Pb_{X \, Y}$, which assigns values to the product space $(\Xc \times \Yc, \Bc_{x} \times \Bc_{y})$. 

\subsubsection{Projection Correlation ($\text{PC}$)}

The Projection Correlation of $X$ and $Y$ of \cite{zhu2017}, denoted by $\text{PC}(X,Y)$, is defined as the square root of $\text{PC}(X,Y)^2 = \frac{\text{Pcov}(X,Y)^2}{\text{Pcov}(X,X)\text{Pcov}(Y,Y)}$, 
where the squared Projection Covariance $\text{Pcov}(X,Y)^2$ is:
\begin{align*}
\text{Pcov}(X,Y)^2 &= \Eb\big[\text{arccos}\big(\frac{(X_1-X_3)^\top(X_4-X_3)}{\|X_1-X_3\|\|X_4-X_3\|}\big)\text{arccos}\big(\frac{(Y_1-Y_3)^\top(Y_4-Y_3)}{\|Y_1-Y_3\|\|Y_4-Y_3\|}\big)\big]\\
&  + \Eb\big[\text{arccos}\big(\frac{(X_1-X_3)^\top(X_4-X_3)}{\|X_1-X_3\|\|X_4-X_3\|}\big)\text{arccos}\big(\frac{(Y_2-Y_3)^\top(Y_5-Y_3)}{\|Y_2-Y_3\|\|Y_5-Y_3\|}\big)\big] \\
&  -2 \,\Eb\big[\text{arccos}\big(\frac{(X_1-X_3)^\top(X_4-X_3)}{\|X_1-X_3\|\|X_4-X_3\|}\big)\text{arccos}\big(\frac{(Y_2-Y_3)^\top(Y_4-Y_3)}{\|Y_2-Y_3\|\|Y_4-Y_3\|}\big)\big] \\ &=: S_1+S_2-2S_3,
\end{align*}
with $(X_1,Y_1),\ldots,(X_5,Y_5)$ independent copies of $(X,Y)$. By construction, $\text{Pcov}(X,Y)^2$ does not require moment conditions on $X,Y$ as it relies on vectors $(X_k-X_l)/\|X_k-X_l\|$ and $(Y_k-Y_l)/\|Y_k-Y_l\|$. By proposition 1 of \cite{zhu2017}, $0 \leq \text{PC}(X,Y) \leq 1$ in general, and $\text{PC}(X,Y) = 0$ if and only if $X$ and $Y$ are independent. Equipped with the sample of observations $\mathbf{X} = (X_1,\ldots,X_n)$ and $\mathbf{Y} = (Y_1,\ldots,Y_n)$ from the population $(X,Y)$, $\text{Pcov}(X,Y)^2$ can be estimated by the V-statistic $\widehat{\text{Pcov}}_v(\mathbf{X},\mathbf{Y})^2$ defined as $\widehat{\text{Pcov}}_v(\mathbf{X},\mathbf{Y})^2 = \widehat{S}_{1,v} + \widehat{S}_{2,v} - 2\widehat{S}_{3,v}$,
where 
\begin{align*}
\widehat{S}_{1,v} &=\frac{1}{n^2}\overset{n}{\underset{i,j,l=1}{\sum}}\text{arccos}\big(\frac{(X_i-X_j)^\top(X_l-X_j)}{\|X_i-X_j\|\|X_l-X_j\|}\big)\text{arccos}\big(\frac{(Y_i-Y_j)^\top(Y_l-Y_j)}{\|Y_i-Y_j\|\|Y_l-Y_j\|}\big),\\
\widehat{S}_{2,v}&=\frac{1}{n^5}\overset{n}{\underset{i,j,l,m,r=1}{\sum}} \text{arccos}\big(\frac{(X_i-X_l)^\top(X_m-X_l)}{\|X_i-X_l\|\|X_m-X_l\|}\big)\text{arccos}\big(\frac{(Y_j-Y_l)^\top(Y_r-Y_l)}{\|Y_j-Y_l\|\|Y_r-Y_l\|}\big),\\
\widehat{S}_{3,v}& =\frac{1}{n^4}\overset{n}{\underset{i,j,l,m=1}{\sum}} \text{arccos}\big(\frac{(X_i-X_l)^\top(X_m-X_l)}{\|X_i-X_l\|\|X_m-X_l\|}\big)\text{arccos}\big(\frac{(Y_j-Y_l)^\top(Y_m-Y_l)}{\|Y_j-Y_l\|\|Y_m-Y_l\|}\big).
\end{align*}
By Theorem 1 of \cite{zhu2017}, the estimator $\widehat{\text{Pcov}}_v(\mathbf{X},\mathbf{Y})^2$ of $\text{Pcov}(X,Y)^2$ can be written as
\begin{equation}\label{pc_etim_2}
\widehat{\text{Pcov}}_v(\mathbf{X},\mathbf{Y})^2 = \frac{1}{n^3}\overset{n}{\underset{i,l,r=1}{\sum}} [\calK_X]_{ilr}[\calL_Y]_{ilr},
\end{equation}
with $[\calK_X]_{ilr} = K_{ilr} - \overline{K}_{i.r} - \overline{K}_{.lr} + \overline{K}_{..r}$, $[\calL_Y]_{ilr} = L_{ilr} - \overline{L}_{i.r} - \overline{L}_{.lr} + \overline{L}_{..r}$, where 
\begin{align*}
K_{ilr} &= \text{arccos}\big(\frac{(X_i-X_r)^\top(X_l-X_r)}{\|X_i-X_r\|\|X_l-X_r\|}\big), \overline{K}_{i.r} = \frac{1}{n}\overset{n}{\underset{l=1}{\sum}}K_{ilr},  \overline{K}_{.lr} = \frac{1}{n}\overset{n}{\underset{i=1}{\sum}}K_{ilr},  \overline{K}_{..r} = \frac{1}{n^2}\overset{n}{\underset{i,l=1}{\sum}}K_{ilr},\\
L_{ilr} &= \text{arccos}\big(\frac{(Y_i-Y_r)^\top(Y_l-Y_r)}{\|Y_i-Y_r\|\|Y_l-Y_r\|}\big), \overline{L}_{i.r} = \frac{1}{n}\overset{n}{\underset{l=1}{\sum}}L_{ilr},  \overline{L}_{.lr} = \frac{1}{n}\overset{n}{\underset{i=1}{\sum}}L_{ilr},  \overline{L}_{..r} = \frac{1}{n^2}\overset{n}{\underset{i,l=1}{\sum}}L_{ilr},
\end{align*}
with $K_{ilr}=L_{ilr}=0$ if $i=r$ or $l=r$.
The V-statistic estimator of the squared projection correlation is $\widehat{\text{PC}}_v(\mathbf{X},\mathbf{Y})^2=\widehat{\text{Pcov}}_v(\mathbf{X},\mathbf{Y})^2/(\widehat{\text{Pcov}}_v(\mathbf{X},\mathbf{X})\widehat{\text{Pcov}}_v(\mathbf{Y},\mathbf{Y}))$. 
In this paper, we will implement the estimator defined in (\ref{pc_etim_2}) when using the (squared) projection correlation measure.

\subsubsection{Hilbert-Schmidt Independence Criterion (HSIC)}

$\text{HSIC}$ is a cross-covariance operator in reproducing kernel Hilbert spaces (RKHS). It has fostered a broad range of studies dedicated to feature selection: see \cite{ALT:Gretton+etal:2005}, \cite{song2012feature} for a theoretical analysis of $\text{HSIC}$. Let us first fix the setup to define this measure. Let $\Xc$ be a metric space and $\Hc$ a Hilbert space of functions $f: \Xc \rightarrow \Rb$.  $\Hc$ is a reproducing kernel Hilbert space (RKHS) induced by the inner product $\langle.,.\rangle$ if there exists a function $k: \Xc \times \Xc \rightarrow \Rb$ such that $\forall \xx \in \Xc, \; k(\xx,.) \in \Hc$, and $\forall f \in \Hc, \forall \xx \in \Xc, \langle f , k(\xx,.)\rangle = f(\xx)$. For any probability measure $\Pb$ defined on $\Xc$, the mean $\mu(\Pb) \in \Hc$ is defined as $\Eb[f(X)] = \langle f, \mu(\Pb)\rangle$ for any $f \in \Hc$, where the random variable $X$ is sampled from $\Xc$. Likewise, consider $\Gc$ another RKHS defined on a metric space $\Yc$ with kernel $l(\cdot)$. We then define the symmetric bounded kernels $k(\cdot,\cdot)$ and $l(\cdot,\cdot)$ on the spaces $\Xc$ and $\Yc$, respectively. Now given two separable RKHSs $\Hc$ and $\Gc$ and a joint measure $\Pb_{X\, Y}$ defined over $(\Xc \times \Yc, \Bc_{x} \times \Bc_{y})$, by Lemma 1 of \cite{ALT:Gretton+etal:2005}, $\text{HSIC}$ is defined as:
\begin{align*}
\text{HSIC}(X,Y) &= \Eb_{X_1X_2Y_1Y_2}\big[k(X_1,X_2)l(Y_1,Y_2)\big]\\
&+ \Eb_{X_1X_2}\big[k(X_1,X_2)\big]\Eb_{Y_1Y_2}\big[l(Y_1,Y_2)\big] - 2\Eb_{X_1Y_1}\big[\Eb_{X_2}[k(X_1,X_2)] \Eb_{Y_2}[l(Y_1,Y_2)]\big],
\end{align*}
with $(X_1,Y_1),(X_2,Y_2)$ independent copies of $(X,Y)$. When $X$ and $Y$ are independent, $\text{HSIC}(X,Y)=0$, and $\text{HSIC}(X,Y)>0$ otherwise: see Section 5 in \cite{gretton2005} or Subsection 2.3 in \cite{song2012feature}. 
%Interestingly, one can recover the Distance Covariance measure of \cite{szekely2009brownian} when $k(X_1,X_2) = \frac{1}{2}(\|X_1\|+\|X_2\|+\|X_1-X_2\|)$, $l(Y_1,Y_2) = \frac{1}{2}(\|Y_1\|+\|Y_2\|+\|Y_1-Y_2\|)$ as emphasized by \cite{balasubramanian2013ultrahigh}. 
Equation (4) of \cite{gretton2007kernel} provides the V-statistic estimator
\begin{equation}\label{hsic_v1}
\widehat{\text{HSIC}}_{v}(\mathbf{X},\mathbf{Y}) = \frac{1}{n^2}\overset{n}{\underset{i,j=1}{\sum}} K_{ij}L_{ij} + \frac{1}{n^4} \overset{n}{\underset{i,j,m,l=1}{\sum}} K_{ij}L_{ml} - \frac{2}{n^3} \overset{n}{\underset{i,j,m=1}{\sum}} K_{ij} L_{im},
\end{equation}
with $K_{ij} = k(X_i,X_j)$, $L_{ij} = l(Y_i,Y_j)$. It can be alternatively written as
\begin{equation} \label{hsic_v2}
    \widehat{\text{HSIC}}_{v}(\mathbf{X},\mathbf{Y}) = \frac{1}{n^2} \sum_{i,j= 1}^n [\calK_X]_{ij}[\calL_Y]_{ij} = \frac{1}{n^2} \text{tr}\left(\boldK_{X} \boldH_n\boldL_{Y} \boldH_n\right),
\end{equation}
where $\calK_X = \boldH_n \boldK_{X} \boldH_n$ and $\calL_Y = \boldH_n \boldL_Y \boldH_n$, $[\boldK_X]_{ij} = k(X_i,X_j)$, $[\boldL_Y]_{ij} = l(Y_i,Y_j)$ and $\boldH_n = \boldI_n - \frac{1}{n}\boldone_n \boldone_n^\top$ is the centering matrix with $\boldone_n$ the $n$-dimensional vector containing $1$s only, so that
\begin{align*}
[\calK_X]_{ij} & = K_{ij} - \overline{K}_{.j} - \overline{K}_{i.} - \overline{K}, \overline{K}_{.j} = \frac{1}{n}\overset{n}{\underset{i=1}{\sum}} k(X_i,X_j), \overline{K}_{i.} = \frac{1}{n}\overset{n}{\underset{j=1}{\sum}} k(X_i,X_j), \overline{K} = \frac{1}{n^2}\overset{n}{\underset{i,j=1}{\sum}} k(X_i,X_j),\\
[\calL_Y]_{ij} & = L_{ij} - \overline{L}_{.j} - \overline{L}_{i.} - \overline{L}, \overline{L}_{.j} = \frac{1}{n}\overset{n}{\underset{i=1}{\sum}} l(Y_i,Y_j), \overline{L}_{i.} = \frac{1}{n}\overset{n}{\underset{j=1}{\sum}} l(Y_i,Y_j), \overline{L} = \frac{1}{n^2}\overset{n}{\underset{i,j=1}{\sum}} l(Y_i,Y_j). 
\end{align*}
This V-statistic estimator is biased with a bias of order $O(n^{-1})$ according to Theorem 1 of \cite{ALT:Gretton+etal:2005}. The unbiased version of $\text{HSIC}$ proposed by, e.g., \cite{gretton2007kernel} - see their equation (3) - is defined as
\begin{eqnarray}\label{hsic_u}
\begin{split}
    \widehat{\text{HSIC}}_u(\mathbf{X},\mathbf{Y}) = \frac{1}{n(n-1)}\underset{(i,j)\in\mathbf{i}^n_2}{\sum} K_{ij}L_{ij}+ \frac{1}{n(n-1)(n-2)(n-3)} \underset{(i,j,m,l)\in\mathbf{i}^n_4}{\sum} K_{ij}L_{ml} \\ - \frac{2}{n(n-1)(n-2)} \underset{(i,j,m)\in\mathbf{i}^n_3}{\sum} K_{ij} L_{im},
\end{split}
\end{eqnarray}
with $\mathbf{i}^n_m$ the index set denoting the set of all $m$-tuples drawn without replacement from $\{1,\ldots,n\}$. Hereafter and to clarify our arguments, we consider non-negative symmetric and continuous kernels $k(\cdot,\cdot),l(\cdot,\cdot)$, bounded by $1$. This condition includes several commonly used kernels, such as the Gaussian kernel or the Laplacian kernel. We will consider the Gaussian kernel in our empirical studies. Both of these kernels are universal in the sense of \cite{gretton2005} and \cite{fukumizu2009}.

In this paper, the theoretical and empirical results will be based on the normalized version of HSIC formulated in \cite{daveiga2015} as a sensitivity index, which will be denoted by $\text{nr-HSIC}$, and defined as:
\begin{equation}\label{nr-HSIC}
\text{nr-HSIC}(X,Y) = \cfrac{\text{HSIC}(X,Y)}{\sqrt{\text{HSIC}(X,X)\text{HSIC}(Y,Y)}},
\end{equation}
where $\text{HSIC}(X,X)>0, \text{HSIC}(Y,Y)>0$. Replacing $\text{HSIC}(X,Y), \text{HSIC}(X,X), \text{HSIC}(Y,Y)$ by their estimators based on (\ref{hsic_v2}), the V-statistic estimator of the normalized HSIC is
\begin{equation}\label{nr-HSIC-vstat}
\text{nr-}\widehat{\text{HSIC}}_{v}(\mathbf{X},\mathbf{Y}) = \frac{n^{-2}\text{tr}\left(\boldK_{X} \boldH_n\boldL_{Y} \boldH_n\right)}{\sqrt{n^{-2}\text{tr}\left(\boldK_{X} \boldH_n\boldK_{X} \boldH_n\right)n^{-2}\text{tr}\left(\boldL_{Y} \boldH_n\boldL_{Y}.\boldH_n\right)}}\cdot
\end{equation}
Note that, by the properties of the $\text{tr}(\cdot)$ operator, $\sqrt{\text{tr}\left(\boldK_{X} \boldH_n\boldK_{X} \boldH_n\right)} = \|\boldH_n\boldK_{X} \boldH_n\|_F$.

Furthermore, $\Lb_{v,n}(\mathbf{\theta})$ in (\ref{unpen_loss}) can be written as a least squares criterion in the same vein as in \cite{poignard2020_aistats}:
\begin{itemize}
    \item[(i)] When $\normalfont\widehat{\text{D}}_v(.,.) = \text{nr-}\widehat{\text{HSIC}}_v(.,.)$, the loss becomes
    \begin{equation*}
    \Lb_{v,n}(\mathbf{\theta}) = \frac{1}{2}\;\frac{1}{n^2}\|\overline\calL_Y - \sum_{k = 1}^p \theta_k \overline\calK_{X_k}\|^2_F = \frac{1}{2} \; \frac{1}{n^2}\overset{n}{\underset{i,j=1}{\sum}}\Big( [\overline\calL_Y]_{ij} - \sum_{k = 1}^p \theta_k [\overline\calK_{X_k}]_{ij}\Big)^2,
    \end{equation*}
    where $[\overline\calL_Y]_{ij} = [\calL_Y]_{ij}/(n^{-1}\|\boldH_n\boldL_{Y} \boldH_n\|_F)$ and $[\overline\calK_{X_k}]_{ij} = [\calK_{X_k}]_{ij}/(n^{-1}\|\boldH_n\boldK_{X_k} \boldH_n\|_F)$.
    \item[(ii)] When $\normalfont\widehat{\text{D}}_v(.,.) = \widehat{\text{PC}}_v(.,.)^2$, the loss becomes
    \begin{equation*}
    \Lb_{v,n}(\mathbf{\theta}) = \frac{1}{2}\;\frac{1}{n^3} \langle \overline\calD_Y - \sum_{k = 1}^p \theta_k \overline\calD_{X_k},\overline\calD_Y - \sum_{k = 1}^p \theta_k\overline\calD_{X_k} \rangle = \frac{1}{2} \; \frac{1}{n^3}\overset{n}{\underset{i,j,m=1}{\sum}}\Big( [\overline\calD_Y]_{ijm} - \sum_{k = 1}^p \theta_k [\overline\calD_{X_k}]_{ijm}\Big)^2,
    \end{equation*}
    where the expressions $[\calD_Y]_{ijm}$ and $[\calD_{X_k}]_{ijm}$ correspond to the quantities defined in (\ref{pc_etim_2}), $[\overline\calD_Y]_{ijm}, [\overline\calD_{X_k}]_{ijm}$ are defined as $[\overline\calD_Y]_{ijm} = [\calD_Y]_{ijm}/\sqrt{(n^{-3}\sum^n_{l,q,r}[\calD_Y]^2_{lqr})}$ and  $[\overline\calD_{X_k}]_{ijm} = [\calD_{X_k}]_{ijm}/\sqrt{(n^{-3}\sum^n_{l,q,r}[\calD_{X_k}]^2_{lqr})}$, and $\langle \calX,\calY \rangle = \sum_{k,l,r = 1}^{n}x_{klr}b_{klr}$ denotes the inner product of two same-sized tensors $\calX,\calY \in \mathbbR^{n \times n \times n}$, so that $\normalfont\widehat{\text{PC}}_v(\mathbf{X}_k,\mathbf{Y})^2 = n^{-3} \langle \overline\calD_{X_k},\overline\calD_Y \rangle = n^{-3}\sum_{i,l,r = 1}^{n}[\calD_{X_k}]_{ilr}[\calD_Y]_{ilr}/\sqrt{(n^{-3}\sum^n_{l,q,r}[\calD_{X_k}]^2_{lqr})(n^{-3}\sum^n_{l,q,r}[\calD_{Y}]^2_{lqr})}$.
\end{itemize}

\noindent Hereafter, we consider the SmRMR problem to perform feature screening with the Projection Correlation and HSIC measures. More precisely, our problem is the following one: the target is $Y \in \Rb$ (which can be extended to $q$-dimensional responses) and the vector of features is $(X_1,\ldots,X_p)^\top \in \Rb^p$. Equipped with the sample $\mathbf{Y}=(Y_1,\ldots,Y_n)$ and $\mathbf{X}_k = (X_{1k},\ldots,X_{nk})$, we compute the V-statistic estimator of $\text{D}(X_k,Y)$ and solve problem (\ref{penalized_mrmr}), which provides the set of selected features. The results we derive in the next section hold for both $\widehat{\text{PC}}_v(.,.)^2$ and $\text{nr-}\widehat{\text{HSIC}}_v(.,.)$; thus, unless stated otherwise, $\widehat{\text{D}}_v(.,.)$ will be both measures. To clarify the terminology, SmRMR with PC (resp. HSIC) refers to criterion (\ref{penalized_mrmr}) with $\widehat{\text{PC}}_v(.,.)^2$ (resp. $\text{nr-}\widehat{\text{HSIC}}_v(.,.)$).

\section{Asymptotic properties}\label{asymptotic_properties}

Our analysis is developed when the number of parameters $p$ potentially diverges with the sample size. Therefore, the number of unknown parameters, $p_n$, varies with the sample size: the sequence $(p_n)$ could tend to the infinity with $n$, that is $p=p_n\rightarrow \infty$ as $n\rightarrow \infty$. Hereafter, we denote by $\boldtheta_{n0}$ the ``pseudo-true coefficient''. Without loss of generality, we assume that the first $s_n$ components of $\boldtheta_{n0}$ are non-zero and the remaining components are zero. More formally, we assume $\theta_{n0,k} \neq 0$ for $k \leq s_n$ and $\theta_{n0,k} = 0$ for $k > s_n$ with $s_n$ the cardinality of the true sparse support, defined as $\Sc_n:=\{k = 1,\ldots,p_n:\,\theta_{n0,k}\neq 0 \}$: this is the sparsity assumption. For $\boldtheta_n \in \Theta_n$, we write $\boldtheta_{n1} = (\theta_{n,1},\ldots,\theta_{n,s_n})^\top$ and $\boldtheta_{n2} = (\theta_{n,s_n+1},\dots,\theta_{n,p_n})^\top$. We define the loss $\Lb_{v,n}(\cdot)$ as in (\ref{unpen_loss}) for a given association measure. The SmRMR problem we consider is 
\begin{equation}\label{stat_crit}
\widehat{\boldtheta}_n \, \in \, \underset{\boldtheta_n \in \Rb^{p_n}_+}{\argmin} \; n\,\Lb_{v,n}(\boldtheta_n) + n\overset{p_n}{\underset{k=1}{\sum}}\pp(\lambda_n,\theta_{n,k}),
\end{equation}
when such a minimizer exists. We assume the following regularity conditions.
\begin{assumption}\label{regularity_condition}
For a measure $\normalfont\text{D}(.,.)$, there exists a unique pseudo-true parameter value $\theta_{n0}$ minimizing the function $\normalfont\Lb(\boldtheta_n):=-\sum^{p_n}_{k=1} \theta_{n,k} \text{D}(X_k,Y) + \frac{1}{2} \sum^{p_n}_{k,l=1}{\sum}\theta_{n,k}\theta_{n,l} \text{D}(X_k,X_l)$ on $\Rb^{p_n}_+$, and an open neighbourhood of $\boldtheta_{n0}$ is contained in $\Rb^{p_n}_+$. Furthermore, for a given association measure $\normalfont\text{D}(.,.)$, any pseudo-true parameter value $\boldtheta_{n0}$ satisfies the condition, $\forall k = 1,\ldots,p_n$: $\normalfont-\text{D}(X_k,Y)+\sum^{p_n}_{l=1}\text{D}(X_k,X_l)\boldtheta_{n0,l}=0$.
\end{assumption}
\begin{assumption}\label{eigenvalue_assumption}
Let $\normalfont\text{D}_{\mathbf{X}\mathbf{X}} := (\text{D}(X_k,X_l))_{1\leq k,l \leq p_n} \in \Rb^{p_n \times p_n}$.  $\exists\underline{\mu},\overline{\mu}$ such that $\normalfont 0 < \underline{\mu} < \lambda_{\min}\big(\text{D}_{\mathbf{X}\mathbf{X}}\big) \leq \lambda_{\max}(\text{D}_{\mathbf{X}\mathbf{X}}) < \overline{\mu} < \infty$. 
\end{assumption}
Assumption \ref{regularity_condition} concerns the orthogonality condition satisfied by $\Lb(\boldtheta_{n0})$ at the true pseudo-parameter $\boldtheta_{n0}$, that is $\partial_{\theta_{n,k}}\Lb(\boldtheta_{n0})=0$, which is the derivative of the population level non-penalized loss. The parameter $\theta_{n0}$ may belong to the boundary of the parameter set in our analysis: this is a problem discussed in~\cite{poignard2024_copula}; see their Assumption 0 and the discussion relating to this point. Therefore, we define the partial derivatives w.r.t. $\theta_{n,k}$ as in the latter work when the parameter is on the boundary. Assumption \ref{eigenvalue_assumption} bounds the eigenvalues of $\text{D}_{\mathbf{X}\mathbf{X}}$, a useful condition that will facilitate the proof for consistency. As for the penalty function $\pp(\lambda_n,.)$, denoting by $\partial_2\pp(\lambda_n,x)$ (resp. $\partial^2_{2,2}\pp(\lambda_n,x)$) the first order (resp. second order) derivative of $x\mapsto \pp(\lambda_n,x)$, we assume the following conditions.
\begin{assumption}\label{assumption_penalty}
Let
$a_n := \underset{1\leq k \leq p_n}{\max}\big\{ \partial_2  \pp(\lambda_n,\theta_{n0,k}), \theta_{n0,k}\neq 0\big\}\;\text{and}
\;b_n := \underset{1\leq k \leq p_n}{\max}\big\{ \partial^2_{2,2} \pp(\lambda_n,\theta_{n0,k}),$ $\theta_{n0,k}\neq 0\big\}$,
assume that $a_n = O(n^{-1/2})$, $b_n\rightarrow 0$ as $n\rightarrow \infty$. Moreover, $\exists K_1,K_2$ such that $| \partial^2_{2,2} \pp(\lambda_n,\theta_1) - \partial^2_{2,2} \pp(\lambda_n,\theta_2) |\leq K_2 |\theta_1 -\theta_2 |$,
for any real numbers $\theta_1,\theta_2$ such that $\theta_1,\theta_2 > K_1\lambda_n  $.
\end{assumption}
\begin{assumption}\label{beta_min_assumption}
Let $\boldtheta_{n01}=(\theta_{n0,1},\ldots,\theta_{n0,s_n})^\top$ be the sub-vector of non-zero coefficients and $\boldtheta_{n02}=(\theta_{n0,s_n+1},\ldots,\theta_{n0,p_n})^\top$ be the sub-vector of zero entries, so that $\boldtheta_{n0}=(\boldtheta^\top_{n01},\boldtheta^\top_{n02})^\top$. Then $\underset{1 \leq j \leq s_n}{\min} \theta_{n0,j} / \lambda_n \rightarrow \infty, \; \text{as} \; n \rightarrow \infty$.
\end{assumption}

Assumption \ref{assumption_penalty} concerns the regularity of the penalty function satisfied by SCAD and MCP and are in the same vein as in \cite{fan2004}. For the LASSO, $a_n = \lambda_n, b_n = 0$. Assumption \ref{beta_min_assumption} is a condition on the minimum true signal and is standard in the literature on sparse M-estimation. Our first result establishes the existence of a consistent penalized M-estimator for the SmRMR problem. The proof is detailed in \ref{proofs}.
\begin{theorem}\label{bound_proba}
Suppose Assumptions \ref{regularity_condition}-\ref{assumption_penalty} are satisfied. Assume $p^{2}_ns_n\log(p_n)=o(n)$. Then, there exists a sequence $\widehat{\boldtheta}_n$ as defined in (\ref{stat_crit}) that satisfies: $\|\widehat{\boldtheta}_n-\boldtheta_{n0}\|_2 = O_p\Big(\sqrt{p_ns^2_n \log(p_n)}\Big(n^{-1/2} + a_n\Big)\Big)$.
\end{theorem}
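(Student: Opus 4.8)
\emph{Strategy.} I would run the classical ``existence of a consistent root'' argument of \cite{fan2004}, adapted to the exactly-quadratic loss and the non-negativity constraint. Set $r_n := \sqrt{p_n s_n^2\log(p_n)}\,(n^{-1/2}+a_n)$ and write $Q_n(\boldtheta_n):= n\Lb_{v,n}(\boldtheta_n)+n\sum_{k=1}^{p_n}\pp(\lambda_n,\theta_{n,k})$. It suffices to show that for every $\varepsilon>0$ there is a constant $C$ with
\begin{equation*}
\liminf_{n\to\infty}\mathbb{P}\Big(\min\big\{ Q_n(\boldtheta_{n0}+r_n\boldu):\ \|\boldu\|_2=C,\ \boldtheta_{n0}+r_n\boldu\in\Rb^{p_n}_+\big\} > Q_n(\boldtheta_{n0})\Big)\ \ge\ 1-\varepsilon .
\end{equation*}
On that event, continuity of $Q_n$ on the compact feasible ball $\{\boldtheta\in\Rb^{p_n}_+:\|\boldtheta-\boldtheta_{n0}\|_2\le Cr_n\}$ forces a minimizer into its relative interior, hence yields a local minimizer $\widehat{\boldtheta}_n$ of $Q_n$ over $\Rb^{p_n}_+$ with $\|\widehat{\boldtheta}_n-\boldtheta_{n0}\|_2\le Cr_n$, which is the claimed $O_p$-rate. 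I would treat the fact that $\boldtheta_{n0}$ may lie on the boundary by working with one-sided directional derivatives as in \cite{poignard2024_copula}, Assumption~\ref{beta_min_assumption} ensuring that the coordinates indexed by $\Sc_n$ stay strictly positive inside the ball.

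\emph{Exact expansion.} Since $\Lb_{v,n}$ is quadratic with constant Hessian $\widehat{\text{D}}_{\mathbf{X}\mathbf{X}}:=(\widehat{\text{D}}_v(\mathbf{X}_k,\mathbf{X}_l))_{1\le k,l\le p_n}$, one has, with no remainder,
\begin{equation*}
\Lb_{v,n}(\boldtheta_{n0}+r_n\boldu)-\Lb_{v,n}(\boldtheta_{n0}) = r_n\,\partial\Lb_{v,n}(\boldtheta_{n0})^\top\boldu+\tfrac12\,r_n^2\,\boldu^\top\widehat{\text{D}}_{\mathbf{X}\mathbf{X}}\boldu ,
\end{equation*}
and the orthogonality condition of Assumption~\ref{regularity_condition} makes the gradient a pure estimation error, $\partial\Lb_{v,n}(\boldtheta_{n0})_k=-\big(\widehat{\text{D}}_v(\mathbf{X}_k,\mathbf{Y})-\text{D}(X_k,Y)\big)+\sum_{l\in\Sc_n}\theta_{n0,l}\big(\widehat{\text{D}}_v(\mathbf{X}_k,\mathbf{X}_l)-\text{D}(X_k,X_l)\big)$. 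For the penalty I would split over $\Sc_n^c$, where $\theta_{n0,k}=0$ and the increment $\pp(\lambda_n,r_nu_k)\ge 0$ is simply dropped, and over $\Sc_n$, where a second-order Taylor expansion together with Assumptions~\ref{assumption_penalty}--\ref{beta_min_assumption} (so that $\theta_{n0,k}$ and the intermediate point both exceed $K_1\lambda_n$) bounds the increment by $r_n a_n|u_k|+\tfrac12 r_n^2u_k^2(b_n+K_2 r_n|u_k|)$; for the LASSO the penalty is linear and this holds exactly with $b_n=K_2=0$.

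\emph{Concentration (the crux).} The main obstacle is the uniform control of the dependence estimators. Since the kernels defining $\widehat{\text{HSIC}}_v$ and the $\arccos$ integrands defining $\widehat{\text{Pcov}}_v^2$ are bounded, Hoeffding's inequality for bounded $U$-statistics of fixed degree, the $O(n^{-1})$ gap between each $V$-statistic and its $U$-statistic version, and a union bound over the $O(p_n^2)$ index pairs give
\begin{equation*}
\Delta_n := \max_{k,l}\big|\widehat{\text{D}}_v(\mathbf{X}_k,\mathbf{X}_l)-\text{D}(X_k,X_l)\big| \vee \max_{k}\big|\widehat{\text{D}}_v(\mathbf{X}_k,\mathbf{Y})-\text{D}(X_k,Y)\big| = O_p\big(\sqrt{\log(p_n)/n}\big) ,
\end{equation*}
where for the normalized criteria $\text{nr-}\widehat{\text{HSIC}}_v$ and $\widehat{\text{PC}}_v^2$ one applies this to numerators and denominators and uses that $\text{HSIC}(X_k,X_k),\text{HSIC}(Y,Y)$ (resp. the $\text{PC}$-denominators) are bounded away from $0$. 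Two consequences follow. First, restricting the orthogonality system to $\Sc_n$ and using Assumption~\ref{eigenvalue_assumption} gives $\|\boldtheta_{n01}\|_1=O(s_n)$, hence $\|\partial\Lb_{v,n}(\boldtheta_{n0})\|_2\le\sqrt{p_n}\,(1+\|\boldtheta_{n01}\|_1)\,\Delta_n=O_p\big(s_n\sqrt{p_n\log(p_n)/n}\big)=O_p\big(\sqrt{p_n s_n^2\log p_n}\,n^{-1/2}\big)$. Second, $\|\widehat{\text{D}}_{\mathbf{X}\mathbf{X}}-\text{D}_{\mathbf{X}\mathbf{X}}\|_F\le p_n\Delta_n=O_p\big(\sqrt{p_n^2\log(p_n)/n}\big)=o_p(1)$ under $p_n^2 s_n\log(p_n)=o(n)$, so $\lambda_{\min}(\widehat{\text{D}}_{\mathbf{X}\mathbf{X}})\ge\underline{\mu}/2$ with probability tending to one.

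\emph{Putting it together.} On the intersection of the above events, and for $\|\boldu\|_2=C$ with $\boldtheta_{n0}+r_n\boldu\in\Rb^{p_n}_+$: the curvature term satisfies $\tfrac12 nr_n^2\boldu^\top\widehat{\text{D}}_{\mathbf{X}\mathbf{X}}\boldu\ge\tfrac{\underline{\mu}}{4}nr_n^2C^2$; the cross term is $\ge -nr_n\|\partial\Lb_{v,n}(\boldtheta_{n0})\|_2\,C\ge -C_0\,nr_n^2C$ since $r_n\gtrsim s_n\sqrt{p_n\log(p_n)/n}$; the first-order penalty piece is $\ge -nr_n a_n\|\boldu_{\Sc_n}\|_1\ge -nr_n a_n\sqrt{s_n}\,C\ge -nr_n^2C$ because $a_n\sqrt{s_n}\le r_n$; and the higher-order penalty piece is $\le\tfrac12 nr_n^2(b_nC^2+K_2 r_n C^3)=o(nr_n^2C^2)$ since $b_n\to0$ and $r_n\to0$ (the latter from $a_n=O(n^{-1/2})$ and $p_n s_n^2\log p_n\le p_n^2 s_n\log p_n=o(n)$). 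Hence $Q_n(\boldtheta_{n0}+r_n\boldu)-Q_n(\boldtheta_{n0})\ge nr_n^2C\big[(\tfrac{\underline{\mu}}{4}-o(1))C-(C_0+1)\big]$, which is strictly positive for all admissible $\boldu$ once $C>4(C_0+1)/\underline{\mu}$ and $n$ is large. This establishes the displayed inequality, and with it the theorem; as flagged, the one genuinely delicate step is obtaining the uniform $\sqrt{\log(p_n)/n}$ rate for the biased, possibly normalized $V$-statistics and propagating it through the sparsity of $\boldtheta_{n0}$.
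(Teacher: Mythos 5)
Your proposal is correct and follows essentially the same route as the paper: the Fan--Li ``consistent root in a ball of radius $C r_n$'' argument, the exact quadratic expansion of $\Lb_{v,n}$ with the orthogonality condition of Assumption \ref{regularity_condition} turning the gradient into a pure estimation error, exponential concentration of the bounded (normalized) V-statistics with union bounds over $p_n$ and $p_n^2$ indices (this is exactly the content of the paper's Lemmas in Appendix A, and where the condition $p_n^2 s_n\log(p_n)=o(n)$ enters), the sparsity of $\boldtheta_{n0}$ to extract the $s_n$ factor, the $a_n,b_n$ bounds on the penalty increment over $\Sc_n$, and Assumption \ref{eigenvalue_assumption} to lower-bound the curvature. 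The only cosmetic differences are that you absorb the Hessian estimation error into a high-probability lower bound on $\lambda_{\min}(\widehat{\text{D}}_{\mathbf{X}\mathbf{X}})$ rather than carrying it as a separate remainder $\mathcal{R}_n$, and that you explicitly invoke Assumption \ref{beta_min_assumption} to legitimize the second-order Taylor expansion of the SCAD/MCP penalty on $\Sc_n$, a point the paper leaves implicit.
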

In the case of the LASSO penalty, $a_n = \lambda_n$. This consistency result includes the original HSIC-LASSO when the association measure is HSIC with Gaussian kernel. 

The next result shows that the true zero coefficients can be recovered with probability one for the SCAD and MCP penalty functions, which ensures the correct identification of the non-relevant features.
\begin{theorem}\label{sparsistency}
Let $\pp(\lambda_n,\cdot)$ be the SCAD or MCP penalty. Under the conditions of Theorem \ref{bound_proba} and Assumption \ref{beta_min_assumption}, assume that $\lim \inf_{n \rightarrow \infty}\lim\inf_{x \rightarrow 0^+} \lambda^{-1}_n \partial_2\pp(\lambda_n,x) > 0$. Moreover, assume $\lambda_n \rightarrow 0$, $\sqrt{n/(p_ns^2_n\log(p_n))}\lambda_n\rightarrow \infty$. Then with probability tending to one, the $\sqrt{n/(p_ns^2_n\log(p_n))}$-consistent local minimizer $\widehat{\boldtheta}_n$ in Theorem \ref{bound_proba} satisfies $\widehat{\boldtheta}_{n2}=0$.
\end{theorem}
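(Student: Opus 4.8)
The plan is to argue by contradiction through the first-order (stationarity) conditions of problem (\ref{stat_crit}), using the standard mechanism for non-convex penalties: for SCAD and MCP the derivative $\partial_2\pp(\lambda_n,x)$ stays at least of order $\lambda_n$ over a whole right-neighbourhood of $0$, whereas the gradient of the unpenalized loss $\Lb_{v,n}$ evaluated at the consistent estimator is, coordinate by coordinate, of order $o(\lambda_n)$; these two facts are incompatible at a null coordinate unless that coordinate of $\widehat{\boldtheta}_n$ is exactly $0$. Write $r_n:=\sqrt{p_ns_n^2\log(p_n)/n}$. Assumption \ref{beta_min_assumption} forces, for SCAD and MCP, $\theta_{n0,j}>a_{\text{scad}}\lambda_n$ (resp.\ $>b_{\text{mcp}}\lambda_n$) for every $j\le s_n$ once $n$ is large, hence $a_n=0$ eventually; Theorem \ref{bound_proba} then gives $\|\widehat{\boldtheta}_n-\boldtheta_{n0}\|_2=O_p(r_n)$, and the growth condition $\sqrt{n/(p_ns_n^2\log p_n)}\,\lambda_n\to\infty$ reads $r_n=o(\lambda_n)$.

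Next I would fix $\varepsilon>0$ and work on the event $\mathcal{E}_n$ on which simultaneously $\|\widehat{\boldtheta}_n-\boldtheta_{n0}\|_2\le Cr_n$, $\max_{1\le k,l\le p_n}|\widehat{\text{D}}_v(\mathbf{X}_k,\mathbf{X}_l)-\text{D}(X_k,X_l)|\le C\sqrt{\log(p_n)/n}$, and $\max_{1\le k\le p_n}|\widehat{\text{D}}_v(\mathbf{X}_k,\mathbf{Y})-\text{D}(X_k,Y)|\le C\sqrt{\log(p_n)/n}$. Since the association measures and the kernels are bounded by $1$, each of these quantities is a bounded V-statistic, so a Hoeffding-type concentration inequality together with a union bound over the $O(p_n^2)$ indices controls the two maxima at the stated rate — these are exactly the estimates already used in proving Theorem \ref{bound_proba}. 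For $C=C_\varepsilon$ large and $n$ large, $\mathbb{P}(\mathcal{E}_n)\ge 1-2\varepsilon$.

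Now fix $k>s_n$ and suppose, for contradiction, that $\widehat{\theta}_{n,k}>0$. Since $\widehat{\theta}_{n,k}=|\widehat{\theta}_{n,k}-\theta_{n0,k}|\le\|\widehat{\boldtheta}_n-\boldtheta_{n0}\|_2=O_p(r_n)=o_p(\lambda_n)$, on $\mathcal{E}_n$ and for $n$ large $\widehat{\theta}_{n,k}<\lambda_n\wedge b_{\text{mcp}}\lambda_n$, so $\widehat{\theta}_{n,k}$ lies in the first, smooth, piece of the penalty and $\partial_2\pp(\lambda_n,\widehat{\theta}_{n,k})=\lambda_n$ (SCAD) or $=\lambda_n-\widehat{\theta}_{n,k}/b_{\text{mcp}}\ge\lambda_n/2$ (MCP); in both cases $\partial_2\pp(\lambda_n,\widehat{\theta}_{n,k})\ge\lambda_n/2$, which is the quantitative content of $\liminf_n\liminf_{x\to0^+}\lambda_n^{-1}\partial_2\pp(\lambda_n,x)>0$. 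As $\widehat{\boldtheta}_n$ is a local minimizer of (\ref{stat_crit}) over $\Rb^{p_n}_+$ and $\widehat{\theta}_{n,k}>0$ is interior in that coordinate, stationarity gives $\partial_{\theta_k}\Lb_{v,n}(\widehat{\boldtheta}_n)+\partial_2\pp(\lambda_n,\widehat{\theta}_{n,k})=0$, so $|\partial_{\theta_k}\Lb_{v,n}(\widehat{\boldtheta}_n)|\ge\lambda_n/2$. On the other hand $\partial_{\theta_k}\Lb_{v,n}(\widehat{\boldtheta}_n)=-\widehat{\text{D}}_v(\mathbf{X}_k,\mathbf{Y})+\sum_{l=1}^{p_n}\widehat{\theta}_{n,l}\widehat{\text{D}}_v(\mathbf{X}_k,\mathbf{X}_l)$, and using the population orthogonality $\text{D}(X_k,Y)=\sum_l\text{D}(X_k,X_l)\theta_{n0,l}$ of Assumption \ref{regularity_condition} together with $\text{D}_{\mathbf{X}\mathbf{X}}$ from Assumption \ref{eigenvalue_assumption}, I would decompose
\begin{equation*}
\partial_{\theta_k}\Lb_{v,n}(\widehat{\boldtheta}_n)=T_1+T_2+T_3+T_4,\quad T_1:=\text{D}(X_k,Y)-\widehat{\text{D}}_v(\mathbf{X}_k,\mathbf{Y}),\quad T_2:=\big[\text{D}_{\mathbf{X}\mathbf{X}}(\widehat{\boldtheta}_n-\boldtheta_{n0})\big]_k,
\end{equation*}
\begin{equation*}
T_3:=\sum_{l=1}^{p_n}(\widehat{\theta}_{n,l}-\theta_{n0,l})\big(\widehat{\text{D}}_v(\mathbf{X}_k,\mathbf{X}_l)-\text{D}(X_k,X_l)\big),\quad T_4:=\sum_{l\le s_n}\theta_{n0,l}\big(\widehat{\text{D}}_v(\mathbf{X}_k,\mathbf{X}_l)-\text{D}(X_k,X_l)\big).
\end{equation*}
On $\mathcal{E}_n$: $|T_1|\le C\sqrt{\log(p_n)/n}$; $|T_2|\le\overline{\mu}\,\|\widehat{\boldtheta}_n-\boldtheta_{n0}\|_2\le\overline{\mu}Cr_n$; $|T_3|\le\|\widehat{\boldtheta}_n-\boldtheta_{n0}\|_1\cdot C\sqrt{\log(p_n)/n}\le\sqrt{p_n}\,Cr_n\cdot C\sqrt{\log(p_n)/n}$; and $|T_4|\le\|\boldtheta_{n01}\|_1\cdot C\sqrt{\log(p_n)/n}$ with $\|\boldtheta_{n01}\|_1\le\sqrt{s_n}\,\|\boldtheta_{n0}\|_2\le\underline{\mu}^{-1}\sqrt{s_np_n}$, the last step because $\boldtheta_{n0}=\text{D}_{\mathbf{X}\mathbf{X}}^{-1}(\text{D}(X_l,Y))_{l=1}^{p_n}$ and $\text{D}(X_l,Y)\in[0,1]$ for both HSIC- and PC-based measures. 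A short computation shows each of these four bounds is $o(\lambda_n)$, uniformly in $k$: $|T_1|,|T_4|=O(\sqrt{s_n^2p_n\log(p_n)/n})=O(r_n)=o(\lambda_n)$ since $s_n\ge1$; $|T_2|=O(r_n)=o(\lambda_n)$; and $|T_3|=O(p_ns_n\log(p_n)/n)=o(\lambda_n)$, because $p_n^2s_n\log(p_n)=o(n)$ gives $p_n\log(p_n)=o(n)$, hence $p_ns_n\log(p_n)/n=o\big(s_n\sqrt{p_n\log(p_n)/n}\big)=o(r_n)$. Therefore $|\partial_{\theta_k}\Lb_{v,n}(\widehat{\boldtheta}_n)|=o(\lambda_n)$ on $\mathcal{E}_n$, contradicting $|\partial_{\theta_k}\Lb_{v,n}(\widehat{\boldtheta}_n)|\ge\lambda_n/2$ for $n$ large. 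So on $\mathcal{E}_n$ and for $n$ large, $\widehat{\theta}_{n,k}=0$ for every $k>s_n$, i.e.\ $\widehat{\boldtheta}_{n2}=0$; since $\mathbb{P}(\mathcal{E}_n)\ge1-2\varepsilon$ with $\varepsilon$ arbitrary, $\mathbb{P}(\widehat{\boldtheta}_{n2}=0)\to1$.

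I expect the main obstacle to be the bound on the cross term $T_3$: because $\widehat{\boldtheta}_n-\boldtheta_{n0}$ cannot be assumed sparse, only the crude $\ell_1$--$\ell_\infty$ split is available, which brings in the full dimension $p_n$, and keeping $T_3$ below $\lambda_n/2$ uniformly over all $p_n-s_n$ null coordinates is precisely what the (slightly strengthened) condition $p_n^2s_n\log(p_n)=o(n)$ buys relative to Theorem \ref{bound_proba}. The other ingredient that must be in place is the \emph{uniform} concentration of the $O(p_n^2)$ V-statistics $\widehat{\text{D}}_v(\mathbf{X}_k,\mathbf{X}_l)$ and of the $\widehat{\text{D}}_v(\mathbf{X}_k,\mathbf{Y})$; since kernels and measures are bounded by $1$, each is a bounded V-statistic obeying a Hoeffding-type inequality, so a union bound over the indices suffices, and this step is already carried out in the proof of Theorem \ref{bound_proba} and can be quoted.
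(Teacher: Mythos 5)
Your proof is correct and follows essentially the same route as the paper's: both compare, coordinate by coordinate over $\Sc^c_n$, the gradient of the unpenalized loss (shown to be $o_p(\lambda_n)$ uniformly via the orthogonality condition of Assumption \ref{regularity_condition}, the deviation inequalities for the V-statistics, and the rate from Theorem \ref{bound_proba}) against the penalty derivative, which stays of order $\lambda_n$ near the origin. The only cosmetic difference is that you phrase the conclusion as a contradiction with the first-order stationarity condition at $\widehat{\boldtheta}_n$, whereas the paper shows $\partial_{\theta_{n,j}}\Lb^{\text{pen}}_{v,n}(\boldtheta_n)>0$ throughout the relevant neighbourhood, so the penalized loss is increasing in each null coordinate and the minimizer must sit at zero.
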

The result stated in Theorem \ref{sparsistency} relates to the sparsistency property as in \cite{lam2009}: for a consistent (local) estimator, the true zero coefficients can be recovered with probability one for $n$ large enough. This slightly differs from the support recovery property $\widehat{\Sc}_{n}=\Sc_n$ with probability tending to one: the latter holds in our framework when $p_n$ is fixed, but it is not necessarily satisfied in the large-dimensional case. To obtain a support recovery result in the case of non-convex penalty functions ensuring that the estimator $\widehat{\boldtheta}_n = \widehat{\boldtheta}_{n\Sc_n}$ with probability one, with $\widehat{\boldtheta}_{n\Sc_n}$ the oracle estimator obtained from minimizing $\Lb_{v,n}$ over the true $\Sc_n$, an alternative estimation framework would be required: see e.g., \cite{loh2017} which builds upon the restricted eigenvalue condition and the primal dual witness technique.

In the LASSO case, for the estimator to be $\sqrt{n/(p_ns^2_n\log(p_n))}$-consistent, the regularization parameter $\lambda_n$ should satisfy $\lambda_n = O(\sqrt{s^2_n\log(p_n)/n})$. But it cannot simultaneously satisfy $\sqrt{n/(p_ns^2_n\log(p_n))}\lambda_n\rightarrow \infty$. One way to fix this issue would be the adaptive LASSO, which requires the introduction of stochastic weights that depend on a first step consistent estimator. In light of the multi-stage procedure we propose hereafter in the context of ultra high-dimensional data, we prefer to avoid an additional layer of complexity that would be entailed by the adaptive LASSO.

\section{FDR control}\label{mRMR_fdr}

The issue we now investigate is the estimation of the number of false discoveries in the penalized estimator $\widehat{\boldtheta}_n$. To do so, we rely on the knockoff+ filtering method developed by \cite{barber2015}. Since the seminal work of \cite{barber2015} on the knockoff procedure and its applications for FDR control, a broad range of studies has been flourishing on the extensions and applications of the knockoff method: \cite{candes2018} devised the Model-X knockoff in the context of a random design matrix; \cite{barber2019} addressed the issue of knockoff-based FDR control within the high-dimensional setting and emphasized how knockoffs can be applied to non-sparse signals; \cite{romano2020} devised a Model-X knockoff framework almost model-free with applications to unsupervised datasets; in the same vein as \cite{barber2019}, \cite{fan2020} or \cite{liu2022} considered two-step approaches for FDR control; \cite{lu2018_deep} applied the knockoff filtering to deep neural neural networks. The method can be broken down into the following three steps:
\begin{itemize}
    \item[(1)] Construct the knockoff variables $\widetilde{X}_k$ for all $k = 1,\ldots,p_n$. To do so, two methods can be performed: the equicorrelated method or the semi-definite program method. 
    \item[(2)] For each pair of original variable $X_k$ and knockoff variable $\widetilde{X}_k$, we control the FDR using the importance score $\widehat{W}_k$ defined as $\widehat{W}_k = \widehat{\theta}_{n,k}-\widetilde{\theta}_{n,k}$, $k = 1, \ldots,p_n$, where $\widehat{\boldtheta}_n = (\widehat{\theta}_{n,k})_{1 \leq k \leq p_n}$ and $\widetilde{\boldtheta}_n= (\widetilde{\theta}_{n,k})_{1 \leq k \leq p_n}$ satisfying
    \begin{equation} \label{theta_norm_knockoff}(\widehat\boldtheta^\top_n,\widetilde\boldtheta^\top_n)^\top \in \underset{\boldtheta_n \in \Rb^{2p_n}_+}{\argmin} \; n\Gb_{v,n}(\boldtheta_n)+n\overset{2p_n}{\underset{k=1}{\sum}}\pp(\lambda_n,\theta_{n,k}), \; \text{with}
    \end{equation}
    {\small{\begin{eqnarray*}
    \begin{split}
    \Gb_{v,n}(\boldtheta_n)  = -\overset{p_n}{\underset{k=1}{\sum}} \theta_{n,k} \widehat{\text{D}}_v(\mathbf{X}_k,\mathbf{Y})- \overset{2p_n}{\underset{k=p_n+1}{\sum}}\theta_{n,k} \widehat{\text{D}}_v(\widetilde{\mathbf{X}}_{k-p_n},\mathbf{Y}) + \frac{1}{2} \overset{p_n}{\underset{k,l=1}{\sum}}\theta_{n,k}\theta_{n,l} \widehat{\text{D}}_v(\mathbf{X}_k,\mathbf{X}_l) \\ + \frac{1}{2} \overset{2p_n}{\underset{k,l=p_n+1}{\sum}}\theta_{n,k}\theta_{n,l} \widehat{\text{D}}_v(\widetilde{\mathbf{X}}_{k-p_n},\widetilde{\mathbf{X}}_{l-p_n}).
    \end{split}
    \end{eqnarray*}
    }}
    Here, a higher value of $\widehat{W}_k$ gives evidence $X_k$ is a true signal. Should $X_k$ be inactive, then $|\widehat{W}_k|$ is close to zero.
    \item[(3)] Define a selection rule to carry out feature selection. To do so, we specify a data-dependent threshold $T(\alpha)$ similar to equation (13) of \cite{barber2019}  as
    \begin{equation} \label{threshold}
    T(\alpha) = \min\big\{t \in \mathcal{W}_n: \; \frac{1+\text{card}(k:\,\widehat{W}_k\leq -t)}{\text{card}(k:\,\widehat{W}_k \geq t)\vee 1} \leq \alpha\big\},
    \end{equation}
    and $T(\alpha)=+\infty$ should this set be empty and where $\mathcal{W}_n = \big\{|\widehat{W}_k|: \; 1 \leq k\leq p_n\big\} \setminus \{0\}$ is the set of unique nonzero values reached by the $|\widehat{W}_k|$'s. Then the active set is defined as 
    \begin{equation*}
    \widehat{\Sc}_n(\alpha) = \big\{1 \leq k \leq p_n: \; \widehat{W}_k \geq T(\alpha)\big\}.
    \end{equation*}
\end{itemize}
As pointed out by \cite{barber2015}, the properties of the $W_k$ imply that 
\begin{equation*}
\text{card}\big(k \in\Sc^c_n : \; \widehat{W}_k \geq t\big) \; \overset{d}{=} \; \text{card}\big(k \in\Sc^c_n : \; \widehat{W}_k \leq -t\big) \leq \text{card}\big(k: \; \widehat{W}_k \leq -t\big),
\end{equation*}
where $\Sc_n:=\{k = 1,\ldots,p_n:\, \theta_{n0,k}\neq 0 \}$. At threshold $t$, the False Discovery Proportion (FDP) is estimated as the ratio
\begin{equation*}
\widehat{\text{FDP}}(t) = \frac{\text{card}\big(k: \; \widehat{W}_k \leq -t\big) }{\text{card}\big(k: \; \widehat{W}_k \geq t\big) \vee 1}.
\end{equation*}
One key hurdle of step (1) when constructing the knockoff variables is the sample size requirement $2p_n < n$, which is often not satisfied in practice. To circumvent this issue and apply the knockoff, we consider a data splitting procedure that builds upon Section 4.1 of \cite{barber2019}: on a first sub-sample, we perform a feature screening step to select a subset of variables, say $\widehat{S}_0 \subset \{1,\ldots,p_n\}$; using the screened variables only, we employ the knockoff procedure on the remaining data. Such data splitting procedure was also employed by \cite{liu2022} or \cite{poignard2022}. Formally, the procedure can be summarized a follows:
\begin{itemize}
    \item[(1)] Split the sample into two parts $n_0$ and $n_1$ such that $n_0 + n_1 = n$. Then define the vectors of observations $\mathbf{X}^{(0)} \in \Rb^{n_0 \times {p_n}}, \mathbf{Y}^{(0)} \in \Rb^{n_0}$ and $\mathbf{X}^{(1)} \in \Rb^{n_1 \times {p_n}}, \mathbf{Y}^{(1)} \in \Rb^{n_1}$ so that $\mathbf{Y} = (\mathbf{Y}^{(0)\top},\mathbf{Y}^{(1)\top})^\top$ and $\mathbf{X} = (\mathbf{X}^{(0)\top},\mathbf{X}^{(1)\top})^\top$.
    \item[(2)] Perform a variable screening step on the sub-sample $n_0$ according to the procedure detailed in Subsection \ref{prac:screening}: this step can be performed by solving (\ref{stat_crit}) for a given penalty, or by applying a feature screening technique such as $\text{HSIC}$ or $\text{PC}^2$, to return a set of features likely to contain the active features and together with potentially many inactive features. This set $\widehat{\Sc}_{0,n}$ is selected such that its $s_{0,n}$ variables satisfies $2 s_{0,n} < n_1$. 
    \item[(3)] Construct the knockoff variables from the subset $\mathbf{X}^{(1)}_{\widehat{\Sc}_{0,n}}$.
    Then for each $k = 1,\ldots,s_{0,n}$, with $s_{0,n} = \text{card}(\widehat{\Sc}_{0,n})$, compute the statistic for each pair of original variable and knockoff variable $\widehat{W}_k = \widehat{\theta}^{\widehat{\Sc}_{0,n}}_{n_1,k}-\widetilde{\theta}^{\widehat{\Sc}_{0,n}}_{n_1,k}$, where $(\widehat{\theta}^{\widehat{\Sc}_{0,n}}_{n_1,k}, \widetilde{\theta}^{\widehat{\Sc}_{0,n}}_{n_1,k})$ are the estimators deduced from (\ref{theta_norm_knockoff}) based on $(\mathbf{Y}^{(1)},(\mathbf{X}^{(1)}_{\widehat{\Sc}_{0,n}}, \widetilde{\mathbf{X}}^{(1)}_{\widehat{\Sc}_{0,n}}))$. The number of parameters to be estimated is $s_{0,n}$ and both estimators $\widehat{\theta}^{\widehat{\Sc}_{0,n}}_{n_1,k}$ and $\widetilde{\theta}^{\widehat{\Sc}_{0,n}}_{n_1,k}$ are non-negative. This stage corresponds to the selection step in \cite{barber2019}. Then for a fixed $\alpha$, we use (\ref{threshold}) to estimate the set of active features as
    \begin{equation*}
    \widehat{\Sc}_n(\alpha) = \big\{k: k \in \widehat{\Sc}_{0,n}, \; \widehat{W}_k \geq T(\alpha)\big\}.
    \end{equation*}
\end{itemize}
In step (2), the penalty employed for screening is the same as in step (3) and the resulting set $\widehat{\Sc}_{0,n}$ depends on the first portion of the data. A pre-screening procedure is employed in the case of ultra high-dimensional data: this aims to reduce the dimension prior to solving problem (\ref{stat_crit}). In step (3), the specification of the importance score $\widehat{W}_k$ determines the success of the FDR procedure. It should satisfy the sufficiency and anti-symmetry properties, so that the statistic has a symmetric distribution for an inactive variable and is positive or negative with the same probability. The statistic $\widehat{W}_k$ satisfies these two properties. Indeed, the penalized problem is equivalent to the minimization of
\begin{equation*}
 - n \overset{p_n}{\underset{k=1}{\sum}} \theta_{n,k} \widehat{\text{D}}_v(\mathbf{X}_k,\mathbf{Y}) + \frac{n}{2} \overset{p_n}{\underset{k,l=1}{\sum}} \theta_{n,k}\theta_{n,l} \widehat{\text{D}}_v(\mathbf{X}_k,\mathbf{X}_l)+n \overset{p_n}{\underset{k=1}{\sum}}\pp(\lambda_n,\theta_{n,k}).
\end{equation*}
Take $\widehat{\text{D}}_v(\cdot,\cdot) = \text{nr-}\widehat{\text{HSIC}}_v(\cdot,\cdot)$. Then, up to a factor of $n$, the non-penalized part of the previous problem can be expressed as follows
\begin{equation*}
 \frac{1}{n^2}\overset{n}{\underset{i,j=1}{\sum}}\sum_{k = 1}^{p_n} \theta_{n,k} [\overline\calK_{X_k}]_{ij}[\overline\calL_Y]_{ij} - \frac{1}{n^2}\overset{n}{\underset{i,j=1}{\sum}}\sum_{k,l = 1}^{p_n} \theta_{n,k}\theta_{n,l} [\overline\calK_{X_k}]_{ij} [\overline\calK_{X_l}]_{ij}.
\end{equation*}
Therefore, the problem depends on $(\mathbf{X}_1,\ldots,\mathbf{X}_{p_n},\mathbf{Y})$ through $[\overline\calK_{X_k}]_{ij}[\overline\calL_Y]_{ij}, 1 \leq k \leq p_n$, and $[\overline\calK_{X_k}]_{ij} [\overline\calK_{X_l}]_{ij}, 1 \leq k,l \leq p_n$, where $\overline\calK_{X_k}, \overline\calK_{Y}$ are continuous bounded functions of $X_k,Y$, respectively. $\widehat{W}_k$ can be expressed as $\widehat{W}_k = f((\mathbf{X},\widetilde{\mathbf{X}}),\mathbf{Y})_k$ with $f: \Rb^{2p_n+1} \rightarrow \Rb^{p_n}$.

To verify the validity of the selection step with data splitting, we assume that the screening procedure of step (2) has estimated the set $\widehat{\Sc}_{0,n}$ that includes all important features. In the case of ultra high-dimensional data, prior to estimating $\widehat{\Sc}_{0,n}$, we employ a feature pre-screening as detailed in Section \ref{feature_screening_framework}: the marginal utility $\text{D}(\cdot,\cdot)$ will be the same throughout the knockoff procedure. As emphasized in Subsection 4.2 in \cite{barber2019}, we do not assume that $\widehat{\Sc}_{0,n}$ selects the true support of $\boldtheta_{n0}$. Instead, we only assume that the screening step includes the true relevant features with probability one. Then, we define the screening event $\Ec_n$ as $\Ec_n = \big\{\Sc_n \subseteq \widehat{\Sc}_{0,n}, \; 2s_{0,n}<n_1\big\}$.
Here, the screening step performed on the sub-sample $n_0$ does not necessarily need to recover the true support $\Sc_n$ exactly. Alternatively, it is only required to provide a set of features $\widehat{\Sc}_{0,n}$ likely to contain all the active features and some inactive features. Given the screening event, the following result shows that the feature-feature mRMR procedure can control the FDR of selected features for a given level $\alpha$. The proof can be found in \ref{proofs}.
\begin{theorem}\label{fdr_control}
For any level $0 < \alpha <1$, let the data-dependent threshold $\normalfont T(\alpha) = \min\big\{t \in \mathcal{W}_n: \; \frac{1+\text{card}(k: \,\widehat{W}_k\leq -t)}{\text{card}(k: \,\widehat{W}_k \geq t)\vee 1} \leq \alpha\big\}$,
with $\mathcal{W}_n = \big\{|\widehat{W}_k|: \; 1 \leq k\leq p_n\big\} \setminus \{0\}$ and the set of active features $\widehat{\Sc}_n(\alpha) = \big\{k: k \in \widehat{\Sc}_{0,n}, \; \widehat{W}_k \geq T(\alpha)\big\}$. Then
\normalfont
\begin{equation*}
\Eb[\frac{\text{card}(k: \; k \in\Sc^c_n \;\; \text{and} \;\; k \in \widehat{\Sc}_n(\alpha)}{\text{card}(k: \; k \in \widehat{\Sc}_n(\alpha)) \vee 1}|\Ec_n] \leq \alpha.
\end{equation*}
\end{theorem}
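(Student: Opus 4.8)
The plan is to condition on the screening event $\Ec_n$ and on the first subsample, and then run the knockoff+ FDR argument of \cite{barber2015} on the second, independent, subsample. Observe that $\Ec_n$, the selected set $\widehat{\Sc}_{0,n}$ and its size $s_{0,n}$ are all measurable with respect to $(\mathbf{Y}^{(0)},\mathbf{X}^{(0)})$, whereas the scores $\widehat{W}_k$ are computed only from $(\mathbf{Y}^{(1)},\mathbf{X}^{(1)}_{\widehat{\Sc}_{0,n}},\widetilde{\mathbf{X}}^{(1)}_{\widehat{\Sc}_{0,n}})$; hence, by the tower property, it suffices to bound $\Eb[\text{FDP}(T(\alpha))\mid\widehat{\Sc}_{0,n}]$ by $\alpha$ on $\Ec_n$, where $\text{FDP}(T(\alpha))$ is the ratio in the statement. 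On $\Ec_n$ one has $2s_{0,n}<n_1$, so the knockoff matrix is well defined on the second subsample, and $\Sc_n\subseteq\widehat{\Sc}_{0,n}$, so the set of null indices among the $s_{0,n}$ candidates is $\mathcal{N}:=\widehat{\Sc}_{0,n}\cap\Sc^c_n$ and every false discovery belongs to it, since $\widehat{\Sc}_n(\alpha)\cap\Sc^c_n\subseteq\widehat{\Sc}_{0,n}\cap\Sc^c_n=\mathcal{N}$.

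Next I would record the structural properties of the importance scores. \textit{Sufficiency}: as noted after (\ref{theta_norm_knockoff}), $\Gb_{v,n}$ depends on the data only through the symmetric pairwise quantities $\widehat{\text{D}}_v(\mathbf{X}_k,\mathbf{X}_l)$, $\widehat{\text{D}}_v(\widetilde{\mathbf{X}}_k,\widetilde{\mathbf{X}}_l)$, $\widehat{\text{D}}_v(\mathbf{X}_k,\mathbf{Y})$ and $\widehat{\text{D}}_v(\widetilde{\mathbf{X}}_k,\mathbf{Y})$, while the penalty is data-free, so $\widehat{W}_k=f((\mathbf{X}^{(1)},\widetilde{\mathbf{X}}^{(1)}),\mathbf{Y}^{(1)})_k$ for a fixed measurable $f$. \textit{Anti-symmetry}: swapping the columns $\mathbf{X}^{(1)}_k\leftrightarrow\widetilde{\mathbf{X}}^{(1)}_k$ leaves $\Gb_{v,n}$ unchanged up to the permutation $\theta_{n,k}\leftrightarrow\theta_{n,k+s_{0,n}}$ of its arguments, hence carries the minimiser $(\widehat{\boldtheta}_{n_1},\widetilde{\boldtheta}_{n_1})$ to the pair with its $k$-th coordinates exchanged, so $\widehat{W}_k=\widehat{\theta}_{n_1,k}-\widetilde{\theta}_{n_1,k}$ flips sign while the other $\widehat{W}_j$ are unchanged. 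Combining these with the exchangeability built into the (equicorrelated or SDP) knockoff construction — swapping columns $k\leftrightarrow k+s_{0,n}$ for $k$ in any subset of $\mathcal{N}$ preserves the joint law of $(\mathbf{X}^{(1)},\widetilde{\mathbf{X}}^{(1)})$, and on $\Ec_n$ the conditional law of $\mathbf{Y}^{(1)}$ given $\mathbf{X}^{(1)}$ is a function of the active columns only, none of which are swapped — yields the ``coin-flip'' property: conditionally on $\{|\widehat{W}_k|:k\le s_{0,n}\}$ and on $(\sgn(\widehat{W}_k))_{k\notin\mathcal{N}}$, the signs $(\sgn(\widehat{W}_k))_{k\in\mathcal{N}}$ are i.i.d.\ uniform on $\{-1,+1\}$, which is exactly the property invoked just before the theorem.

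The remaining steps are then those of \cite{barber2015,barber2019}. Abbreviating $T:=T(\alpha)$ and writing $V^+(t)=\text{card}(k\in\mathcal{N}:\widehat{W}_k\ge t)$, $V^-(t)=\text{card}(k\in\mathcal{N}:\widehat{W}_k\le -t)$, the deterministic bound is
\begin{align*}
\text{FDP}(T)&=\frac{V^+(T)}{\text{card}(k:\widehat{W}_k\ge T)\vee 1}\\
&=\frac{1+\text{card}(k:\widehat{W}_k\le -T)}{\text{card}(k:\widehat{W}_k\ge T)\vee 1}\cdot\frac{V^+(T)}{1+\text{card}(k:\widehat{W}_k\le -T)}\\
&\le \alpha\,\frac{V^+(T)}{1+V^-(T)},
\end{align*}
where the first factor on the second line is $\le\alpha$ by the definition of $T$ in (\ref{threshold}), the inequality $\text{card}(k:\widehat{W}_k\le -T)\ge V^-(T)$ is used in the last step, and $\text{FDP}(+\infty):=0$. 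It therefore remains to show $\Eb\big[V^+(T)/(1+V^-(T))\mid\widehat{\Sc}_{0,n}\big]\le 1$ on $\Ec_n$: ordering the null statistics by decreasing magnitude and using the coin-flip property, $t\mapsto V^+(t)/(1+V^-(t))$ is a supermartingale along this ordering, $T$ is a stopping time for the associated filtration (it is the first $t\in\mathcal{W}_n$, scanned downward, at which the ratio in (\ref{threshold}) falls to $\alpha$ or below), and the optional-stopping argument underlying the knockoff+ guarantee of \cite{barber2015} gives the bound $1$. Taking expectations in the display above and integrating over $\widehat{\Sc}_{0,n}$ on $\Ec_n$ completes the proof.

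I expect the main obstacle to be the coin-flip step: one must verify that exchanging a null feature with its knockoff leaves the joint distribution of \emph{all} the kernel- or projection-based sufficient statistics $\widehat{\text{D}}_v(\cdot,\cdot)$ invariant, which requires the symmetry of $\widehat{\text{D}}_v$, the exchangeability of the knockoff construction, and the conditional independence (valid on $\Ec_n$, by the definition of inactive features) of the response from the null features given the active ones. A secondary point deserving care is checking that $T(\alpha)$ is a genuine stopping time for the filtration revealing the null signs in decreasing order of magnitude, which is where the ``$1+$'' in (\ref{threshold}) and the removal of zeros from $\mathcal{W}_n$ enter.
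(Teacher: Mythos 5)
Your proposal is correct and follows essentially the same route as the paper: the paper's proof simply verifies the sufficiency and anti-symmetry of $\widehat{W}_k$ and then defers to Lemma 1 and Theorem 4 of \cite{liu2022} (which in turn rest on the knockoff+ supermartingale argument of \cite{barber2015} and the data-splitting/screening-event reduction of \cite{barber2019}), whereas you have written out those deferred steps — the conditioning on $\Ec_n$ and $\widehat{\Sc}_{0,n}$, the coin-flip property for the null signs, the deterministic FDP bound via the definition of $T(\alpha)$, and the optional-stopping bound $\Eb[V^+(T)/(1+V^-(T))]\leq 1$ — explicitly. No gap; the only difference is level of detail, not substance.
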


\section{Experiments}\label{applications}

% \url{https://github.com/TANEO-bio/Sparse_modeling/blob/master/SCAD_ADMM.py}
In this section, for
simplicity of notation, we denote by $p$ the number of features, and skip the index $n$ hereafter for the parameter $\boldtheta$ and set $\Sc$ (and their estimators).
Our proposed method will be named SmRMR(D, Pen) with D (resp. Pen) referring to the association measure (resp. penalty function).
We will consider the following experiments to illustrate the theoretical guarantees and assess the relevance of our method. 
\begin{itemize}
    \item We verify that the FDR is controlled for each setting described below.
    \item We present a benchmark of the methods to compare our method based on the following metrics: accuracy, mean squared error, true positive rate, false detection rate, false positive rate on the causal features and number of features selected.
    \item Finally, we apply our method to real datasets, a collection of GWAS datasets known for their low-sample high-dimensionality configuration.
\end{itemize}

We first summarize our proposed procedure for feature selection with extra details followed by the data generating processes for the synthetic data.

\subsection{Feature selection procedure}

In Figure \ref{fig:fs_proc} we summarize the main steps of our proposed method for feature selection at a level $\alpha$. 
The pipeline takes in input a data matrix $\mathbf{X}$ with the corresponding response variable $\mathbf{Y}$, a level $\alpha$ and a set of hyper-parameters $HP$. 
The set of $HP$ can include the association measures (and kernel if appropriate), penalty function with the scalar $\lambda_n$ and optimization parameters.
The general idea of the procedure is described in \cite{barber2019} where the example of the LASSO model is given.
In particular, in the ultra high-dimensional setting, the trick involves reducing the number of covariates to a reasonable size with the screening step.
This step is followed by the knockoff filter which solves, for a given $h \in HP$, problem (\ref{stat_crit}).
We then find the best set of hyper parameters and derive the knockoff statistic that enable us to control the FDR.
The reasonable size described in the screening step is solely the minimum requirements to build the knockoff features in the model-X paradigm described in \cite{candes2016panning}, in particular the minimum requirement on the data size is $n > 2p$.

\begin{figure}[h]
    \centering
    %\Large
\tikzset{every picture/.style={line width=0.75pt}} %set default line width to 0.75pt        
\resizebox{\linewidth}{!}{%<----
\begin{tikzpicture}[x=0.75pt,y=0.75pt,yscale=-1,xscale=1]
%uncomment if require: \path (0,207); %set diagram left start at 0, and has height of 207

%Straight Lines [id:da34097341127342995] 
\draw    (122.45,94.33) -- (154.1,94.43) ;
\draw [shift={(156.1,94.43)}, rotate = 180.17] [color={rgb, 255:red, 0; green, 0; blue, 0 }  ][line width=0.75]    (10.93,-3.29) .. controls (6.95,-1.4) and (3.31,-0.3) .. (0,0) .. controls (3.31,0.3) and (6.95,1.4) .. (10.93,3.29)   ;
%Shape: Rectangle [id:dp8993023627467112] 
\draw  [color={rgb, 255:red, 74; green, 144; blue, 226 }  ,draw opacity=1 ] (157.07,81.63) -- (266.1,81.63) -- (266.1,107.43) -- (157.07,107.43) -- cycle ;
%Shape: Rectangle [id:dp3318712848493096] 
\draw  [color={rgb, 255:red, 65; green, 117; blue, 5 }  ,draw opacity=1 ] (133,9) -- (505,9) -- (505,165.43) -- (133,165.43) -- cycle ;
%Straight Lines [id:da8170349089865212] 
\draw    (266.45,94.33) -- (294.1,94.71) ;
\draw [shift={(296.1,94.73)}, rotate = 180.77] [color={rgb, 255:red, 0; green, 0; blue, 0 }  ][line width=0.75]    (10.93,-3.29) .. controls (6.95,-1.4) and (3.31,-0.3) .. (0,0) .. controls (3.31,0.3) and (6.95,1.4) .. (10.93,3.29)   ;
%Curve Lines [id:da6395918385391562] 
\draw    (150.1,55.73) .. controls (137.3,57.7) and (143.89,52.88) .. (142.18,91.92) ;
\draw [shift={(142.1,93.73)}, rotate = 272.79] [color={rgb, 255:red, 0; green, 0; blue, 0 }  ][line width=0.75]    (10.93,-3.29) .. controls (6.95,-1.4) and (3.31,-0.3) .. (0,0) .. controls (3.31,0.3) and (6.95,1.4) .. (10.93,3.29)   ;
%Shape: Rectangle [id:dp4601142979033054] 
\draw  [color={rgb, 255:red, 208; green, 2; blue, 27 }  ,draw opacity=1 ] (297.07,50.73) -- (501.5,50.73) -- (501.5,158.79) -- (297.07,158.79) -- cycle ;
%Straight Lines [id:da36815994840635413] 
\draw    (596.1,64.43) -- (596.1,78.43) ;
\draw [shift={(596.1,80.43)}, rotate = 270] [color={rgb, 255:red, 0; green, 0; blue, 0 }  ][line width=0.75]    (10.93,-3.29) .. controls (6.95,-1.4) and (3.31,-0.3) .. (0,0) .. controls (3.31,0.3) and (6.95,1.4) .. (10.93,3.29)   ;
%Shape: Rectangle [id:dp2895546533982847] 
\draw  [color={rgb, 255:red, 37; green, 37; blue, 37 }  ,draw opacity=1 ] (516,9.73) -- (679.5,9.73) -- (679.5,64.43) -- (516,64.43) -- cycle ;
%Curve Lines [id:da5747644597846235] 
\draw    (502.5,96.79) .. controls (506.38,79.33) and (508.38,74.1) .. (515.34,57.32) ;
\draw [shift={(516,55.73)}, rotate = 112.55] [color={rgb, 255:red, 0; green, 0; blue, 0 }  ][line width=0.75]    (10.93,-3.29) .. controls (6.95,-1.4) and (3.31,-0.3) .. (0,0) .. controls (3.31,0.3) and (6.95,1.4) .. (10.93,3.29)   ;
%Shape: Rectangle [id:dp5258314041928452] 
\draw  [color={rgb, 255:red, 37; green, 37; blue, 37 }  ,draw opacity=1 ] (516,81.01) -- (683.5,81.01) -- (683.5,120.43) -- (516,120.43) -- cycle ;
%Shape: Rectangle [id:dp5282146586877238] 
\draw  [color={rgb, 255:red, 37; green, 37; blue, 37 }  ,draw opacity=1 ] (537,134.43) -- (659.1,134.43) -- (659.1,163.43) -- (537,163.43) -- cycle ;
%Straight Lines [id:da6396236656538771] 
\draw    (597.1,119.43) -- (597.1,133.43) ;
\draw [shift={(597.1,135.43)}, rotate = 270] [color={rgb, 255:red, 0; green, 0; blue, 0 }  ][line width=0.75]    (10.93,-3.29) .. controls (6.95,-1.4) and (3.31,-0.3) .. (0,0) .. controls (3.31,0.3) and (6.95,1.4) .. (10.93,3.29)   ;

% Text Node
\draw  [color={rgb, 255:red, 37; green, 37; blue, 37 }  ,draw opacity=1 ]  (71.5, 95.67) circle [x radius= 50.2, y radius= 31.82]   ;
\draw (37,75.17) node [anchor=north west][inner sep=0.75pt]   [align=left] {Input data\\$\displaystyle \mathbf{X} \in \mathbb{R}^{n\times p}$};
% Text Node
\draw (135,12) node [anchor=north west][inner sep=0.75pt]   [align=left] {\textcolor[rgb]{0.25,0.46,0.02}{Cross-validation on the set of }\textcolor[rgb]{0.25,0.46,0.02}{$\displaystyle HP \ $}\textcolor[rgb]{0.25,0.46,0.02}{ by minimising }\textcolor[rgb]{0.25,0.46,0.02}{$\displaystyle \mathbb{G}_{v,n}$}};
% Text Node
\draw (159.07,84.63) node [anchor=north west][inner sep=0.75pt]  [color={rgb, 255:red, 74; green, 144; blue, 226 }  ,opacity=1 ] [align=left] {Screening step};
% Text Node
\draw (152,45) node [anchor=north west][inner sep=0.75pt]   [align=left] {Iterate over $\displaystyle h\in HP$};
% Text Node
\draw (299.07,52.73) node [anchor=north west][inner sep=0.75pt]  [color={rgb, 255:red, 74; green, 144; blue, 226 }  ,opacity=1 ] [align=left] {\textcolor[rgb]{0.82,0.01,0.11}{- Build knockoff features $\displaystyle \widetilde{\mathbf{X}}$\vspace{0.5cm}} \\ \textcolor[rgb]{0.82,0.01,0.11}{- Solve (\ref{theta_norm_knockoff}) and} \\ \textcolor[rgb]{0.82,0.01,0.11}{return $\displaystyle \theta_h = \left(\widehat{\theta}^{\widehat{\Sc}_0\top},\widetilde{\theta}^{\widehat{\Sc}_0\top}\right)^\top $}\\ \textcolor[rgb]{0.82,0.01,0.11}{- Compute $\displaystyle \mathbb{G}_{v,n}(\theta_h)$}};
% Text Node
\draw (527,25) node [anchor=north west][inner sep=0.75pt]   [align=left] {$\displaystyle h^* = \underset{h\ \in \ HP}{\text{argmin}} \ \mathbb{G}_{v,n}(\theta_h)$};
% Text Node
\draw (577,137.58) node [anchor=north west][inner sep=0.75pt]   [align=left] {$\displaystyle \widehat{\Ac}( \alpha )$};
% Text Node
\draw (543,84.4) node [anchor=north west][inner sep=0.75pt]    {$\widehat{W}_{j} =\widehat{\theta} _{h^*,j}^{\widehat{\Sc}_0} - \widetilde{\theta} _{h^*,j}^{\widehat{\Sc}_0} $};
% Text Node
\draw (273,101.73) node [anchor=north west][inner sep=0.75pt]  [font=\footnotesize] [align=left] {\begin{minipage}[lt]{13.09pt}\setlength\topsep{0pt}
\begin{center}
$\displaystyle \widehat{\mathcal{S}_{0}}$
\end{center}

\end{minipage}};

\end{tikzpicture}

}
    \caption{Overview of the proposed method}
    \label{fig:fs_proc}
\end{figure}
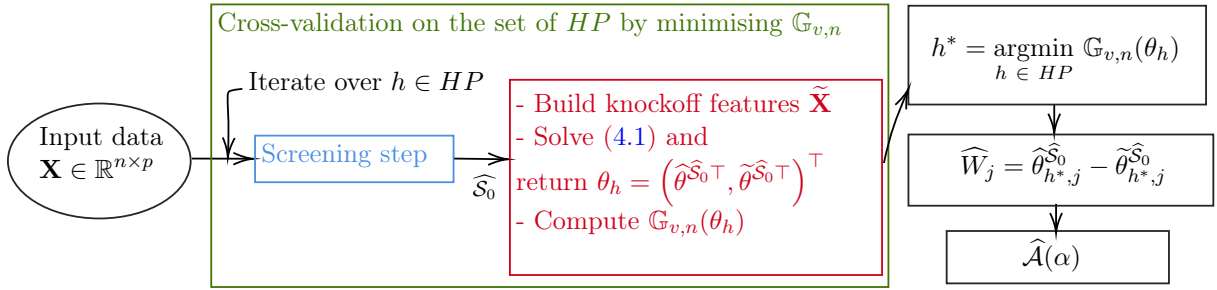

\subsubsection{Screening step} \label{prac:screening}

The screening step is essential in the high-dimensional setting.
Indeed, if $n < 2p$, we can not build the knockoff features that lead to the knockoff filter essential for controlling the FDR.
Therefore, in the same vein as in \cite{barber2019}, we split the data in two where a fraction $n_0$ of the data is used to reduce the set of features to the set $\widehat{\Sc}_{0}$ and the fraction $n_1$ is used to build the knockoff features, we have $n_0+n_1 =n$ .
Similarly, we first reduce this set by applying a marginal screening to return $\widehat{\Sc}_{0,b}$, where the cardinality of $\widehat{\Sc}_{0,b}$ is 4 times the size of $\widehat{\Sc}_{0}$.
We then further reduce its size by solving (\ref{stat_crit}) and picking the $\text{card}(\widehat{\Sc}_{0})$-top features. 
We use this method similarly to the LARS procedure described in \cite{efron2004least}, as fine tuning the LARS penalty only leads to modifying the number of retained covariates.
In this setting Data Recycling will be used and this indication is given later in the pipeline via the boolean variable \textit{DR} which is set to True.
In Figure \ref{fig:s_proc} we show a panel summarizing this step.

In addition, to increase the statistical power for the knockoff filter at no extra cost, we implement the data recycling approach introduced in \cite{barber2019} when we split the data.

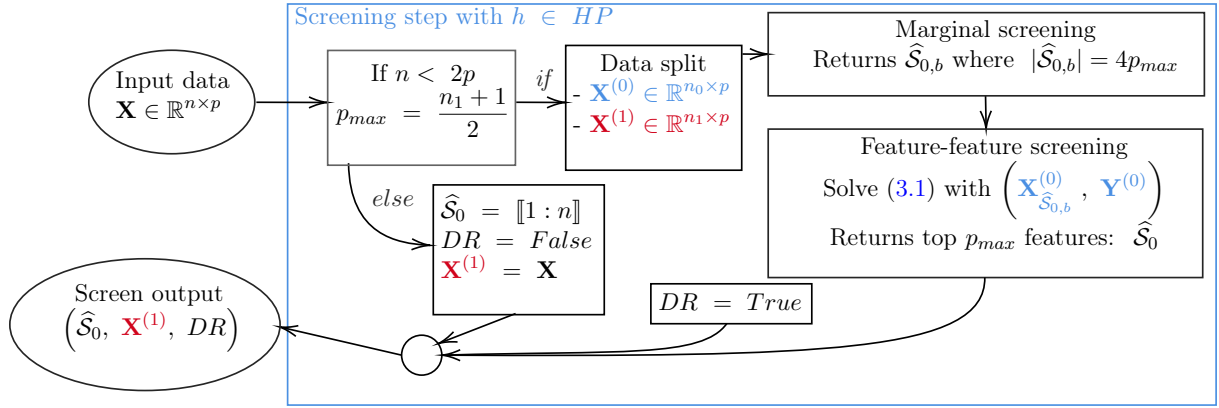
\begin{figure}[h]
    \centering
    %\Large
\tikzset{every picture/.style={line width=0.75pt}} %set default line width to 0.75pt        
\resizebox{\linewidth}{!}{%<----
\begin{tikzpicture}[x=0.75pt,y=0.75pt,yscale=-1,xscale=1]
%uncomment if require: \path (0,298); %set diagram left start at 0, and has height of 298

%Shape: Rectangle [id:dp5454510836136399] 
\draw  [color={rgb, 255:red, 74; green, 144; blue, 226 }  ,draw opacity=1 ] (173.1,25.25) -- (727,25.25) -- (727,266.35) -- (173.1,266.35) -- cycle ;
%Straight Lines [id:da6314815586154183] 
\draw    (154.45,84.33) -- (194.8,84.44) ;
\draw [shift={(196.8,84.45)}, rotate = 180.16] [color={rgb, 255:red, 0; green, 0; blue, 0 }  ][line width=0.75]    (10.93,-3.29) .. controls (6.95,-1.4) and (3.31,-0.3) .. (0,0) .. controls (3.31,0.3) and (6.95,1.4) .. (10.93,3.29)   ;
%Straight Lines [id:da40863576529798007] 
\draw    (309.45,84.33) -- (336.8,84.44) ;
\draw [shift={(338.8,84.45)}, rotate = 180.23] [color={rgb, 255:red, 0; green, 0; blue, 0 }  ][line width=0.75]    (10.93,-3.29) .. controls (6.95,-1.4) and (3.31,-0.3) .. (0,0) .. controls (3.31,0.3) and (6.95,1.4) .. (10.93,3.29)   ;
%Curve Lines [id:da5940677010370145] 
\draw    (209.45,120.93) .. controls (210.08,164.76) and (238.97,169.09) .. (256.27,170.24) ;
\draw [shift={(258.1,170.35)}, rotate = 183.37] [color={rgb, 255:red, 0; green, 0; blue, 0 }  ][line width=0.75]    (10.93,-3.29) .. controls (6.95,-1.4) and (3.31,-0.3) .. (0,0) .. controls (3.31,0.3) and (6.95,1.4) .. (10.93,3.29)   ;
%Straight Lines [id:da01570561538583659] 
\draw    (444.45,55.33) -- (457.8,55.43) ;
\draw [shift={(459.8,55.45)}, rotate = 180.44] [color={rgb, 255:red, 0; green, 0; blue, 0 }  ][line width=0.75]    (10.93,-3.29) .. controls (6.95,-1.4) and (3.31,-0.3) .. (0,0) .. controls (3.31,0.3) and (6.95,1.4) .. (10.93,3.29)   ;
%Straight Lines [id:da29210324784332853] 
\draw    (589.45,80.33) -- (589.77,98.45) ;
\draw [shift={(589.8,100.45)}, rotate = 269] [color={rgb, 255:red, 0; green, 0; blue, 0 }  ][line width=0.75]    (10.93,-3.29) .. controls (6.95,-1.4) and (3.31,-0.3) .. (0,0) .. controls (3.31,0.3) and (6.95,1.4) .. (10.93,3.29)   ;
%Shape: Rectangle [id:dp36604082460363074] 
\draw  [color={rgb, 255:red, 37; green, 37; blue, 37 }  ,draw opacity=1 ] (459.8,30.45) -- (720,30.45) -- (720,80.45) -- (459.8,80.45) -- cycle ;
%Shape: Rectangle [id:dp031361439504354216] 
\draw  [color={rgb, 255:red, 37; green, 37; blue, 37 }  ,draw opacity=1 ] (459.8,100.45) -- (719,100.45) -- (719,189.7) -- (459.8,189.7) -- cycle ;
%Straight Lines [id:da9082632432720141] 
\draw    (309.1,211.35) -- (264.96,228.62) ;
\draw [shift={(263.1,229.35)}, rotate = 338.63] [color={rgb, 255:red, 0; green, 0; blue, 0 }  ][line width=0.75]    (10.93,-3.29) .. controls (6.95,-1.4) and (3.31,-0.3) .. (0,0) .. controls (3.31,0.3) and (6.95,1.4) .. (10.93,3.29)   ;
%Shape: Circle [id:dp9633651051804901] 
\draw   (241.1,236.35) .. controls (241.1,229.72) and (246.47,224.35) .. (253.1,224.35) .. controls (259.73,224.35) and (265.1,229.72) .. (265.1,236.35) .. controls (265.1,242.98) and (259.73,248.35) .. (253.1,248.35) .. controls (246.47,248.35) and (241.1,242.98) .. (241.1,236.35) -- cycle ;
%Straight Lines [id:da7721659494425008] 
\draw    (241.1,236.35) -- (170.05,219.8) ;
\draw [shift={(168.1,219.35)}, rotate = 13.11] [color={rgb, 255:red, 0; green, 0; blue, 0 }  ][line width=0.75]    (10.93,-3.29) .. controls (6.95,-1.4) and (3.31,-0.3) .. (0,0) .. controls (3.31,0.3) and (6.95,1.4) .. (10.93,3.29)   ;
%Curve Lines [id:da24653435065536455] 
\draw    (589,189.7) .. controls (589.65,235.12) and (492.1,237.35) .. (265.1,236.35) ;
\draw [shift={(265.1,236.35)}, rotate = 0.25] [color={rgb, 255:red, 0; green, 0; blue, 0 }  ][line width=0.75]    (10.93,-3.29) .. controls (6.95,-1.4) and (3.31,-0.3) .. (0,0) .. controls (3.31,0.3) and (6.95,1.4) .. (10.93,3.29)   ;
%Curve Lines [id:da17703243812804137] 
\draw    (435.1,217.35) .. controls (420.1,238.35) and (371.1,233.35) .. (265.1,236.35) ;

% Text Node
\draw (176,25.5) node [anchor=north west][inner sep=0.75pt]   [align=left] {\textcolor[rgb]{0.29,0.56,0.89}{Screening step with }\textcolor[rgb]{0.29,0.56,0.89}{$\displaystyle h\ \in \ HP$}};
% Text Node
\draw (476.8,33.45) node [anchor=north west][inner sep=0.75pt]   [align=left] {\begin{minipage}[lt]{175.86pt}\setlength\topsep{0pt}
\begin{center}
 Marginal screening\\Returns $\displaystyle \widehat{\mathcal{S}}_{0,b}$ where \ $\displaystyle |\widehat{\mathcal{S}}_{0,b} |=4p_{max}$ 
\end{center}

\end{minipage}};
% Text Node
\draw (491,104) node [anchor=north west][inner sep=0.75pt]   [align=left] {\begin{minipage}[lt]{153.3pt}\setlength\topsep{0pt}
\begin{center}
Feature-feature screening\\Solve (\ref{stat_crit}) with $\displaystyle \left(\textcolor[rgb]{0.29,0.56,0.89}{\mathbf{X}_{\widehat{\mathcal{S}}_{0,b}}^{(0)}} \ ,\ \textcolor[rgb]{0.29,0.56,0.89}{\mathbf{Y}^{(0)}}\right)$\\Returns top $\displaystyle p_{max}$ features: \ $\displaystyle \widehat{\mathcal{S}}_{0}$
\end{center}

\end{minipage}};
% Text Node
\draw    (339,50.17) -- (444,50.17) -- (444,125.17) -- (339,125.17) -- cycle  ;
\draw (342,54.17) node [anchor=north west][inner sep=0.75pt]   [align=left] {\begin{minipage}[lt]{71.59pt}\setlength\topsep{0pt}
\begin{center}
Data split\\\mbox{-} $\displaystyle \textcolor[rgb]{0.29,0.56,0.89}{\mathbf{X}^{(0)} \in \mathbb{R}^{n_{0} \times p}}$ \\\mbox{-} $\displaystyle \textcolor[rgb]{0.82,0.01,0.11}{\mathbf{X}^{(1)} \in \mathbb{R}^{n_{1} \times p}}$ 
\end{center}

\end{minipage}};
% Text Node
\draw  [color={rgb, 255:red, 99; green, 99; blue, 99 }  ,draw opacity=1 ]  (197,53.17) -- (309,53.17) -- (309,122.17) -- (197,122.17) -- cycle  ;
\draw (200,60.17) node [anchor=north west][inner sep=0.75pt]   [align=left] {\begin{minipage}[lt]{78.53pt}\setlength\topsep{0pt}
\begin{center}
If $\displaystyle n< \ 2p$\\ $\displaystyle p_{max} \ =\ \frac{n_1+1}{2}$
\end{center}

\end{minipage}};
% Text Node
\draw (319,63.17) node [anchor=north west][inner sep=0.75pt]  [color={rgb, 255:red, 37; green, 37; blue, 37 }  ,opacity=1 ] [align=left] {\textit{if}};
% Text Node
\draw  [color={rgb, 255:red, 37; green, 37; blue, 37 }  ,draw opacity=1 ]  (104, 84.17) circle [x radius= 49.5, y radius= 31.11]   ;
\draw (69,64.17) node [anchor=north west][inner sep=0.75pt]   [align=left] {Input data\\$\displaystyle \mathbf{X} \in \mathbb{R}^{n\times p}$};
% Text Node
\draw    (389.8,193.45) -- (485.8,193.45) -- (485.8,217.45) -- (389.8,217.45) -- cycle  ;
\draw (392.8,197.85) node [anchor=north west][inner sep=0.75pt]    {$DR\ =\ True$};
% Text Node
\draw  [color={rgb, 255:red, 37; green, 37; blue, 37 }  ,draw opacity=1 ]  (88, 218) circle [x radius= 80.61, y radius= 39.6]   ;
\draw (37,192) node [anchor=north west][inner sep=0.75pt]   [align=left] {\begin{minipage}[lt]{77.82pt}\setlength\topsep{0pt}
\begin{center}
Screen output\\$\displaystyle \left(\widehat{\mathcal{S}}_{0} ,\ \textcolor[rgb]{0.82,0.01,0.11}{\mathbf{X}^{(1)}} ,\ DR\right)$
\end{center}

\end{minipage}};
% Text Node
\draw    (260,134) -- (362,134) -- (362,212) -- (260,212) -- cycle  ;
\draw (263,138) node [anchor=north west][inner sep=0.75pt]   [align=left] {$\displaystyle \widehat{\mathcal{S}}_{0} \ =\ \llbracket 1:n\rrbracket $\\$\displaystyle DR\ =\ False$\\$\displaystyle \textcolor[rgb]{0.82,0.01,0.11}{\mathbf{X}^{(1)}} \ =\ \mathbf{X}$};
% Text Node
\draw (222,135.17) node [anchor=north west][inner sep=0.75pt]  [color={rgb, 255:red, 37; green, 37; blue, 37 }  ,opacity=1 ] [align=left] {$\displaystyle else$};

\end{tikzpicture}

}
    \caption{Screening step in details, \textit{DR} is a boolean variable indicating if \textit{Data Recycling} will be used.}
    \label{fig:s_proc}
\end{figure}

\subsubsection{Knockoff filter} \label{sec:knock}

After applying the screening step to the high-dimensional data, we have $\mathbf{X}^{(1)} \in \mathbbR^{n_1\times p}$ with $n_1 > 2p$ and we can apply the knockoff filter on $\mathbf{X}^{(1)}$.
We show a summarizing panel of this step in Figure \ref{fig:kf_proc}.

\begin{figure}[h]
    \centering
    %\Large
\tikzset{every picture/.style={line width=0.75pt}} %set default line width to 0.75pt        
\resizebox{\linewidth}{!}{%<----

\begin{tikzpicture}[x=0.75pt,y=0.75pt,yscale=-1,xscale=1]
%uncomment if require: \path (0,298); %set diagram left start at 0, and has height of 298

%Shape: Rectangle [id:dp15146918701790257] 
\draw  [color={rgb, 255:red, 208; green, 2; blue, 27 }  ,draw opacity=1 ] (173.1,9) -- (616.33,9) -- (616.33,282.25) -- (173.1,282.25) -- cycle ;
%Shape: Rectangle [id:dp4952100109536256] 
\draw   (183,32.42) -- (354.1,32.42) -- (354.1,147.65) -- (183,147.65) -- cycle ;
%Shape: Rectangle [id:dp7693689296872662] 
\draw   (401,14.42) -- (608.33,14.42) -- (608.33,146.65) -- (401,146.65) -- cycle ;
%Shape: Rectangle [id:dp16659868641203934] 
\draw   (404,184.42) -- (614.33,184.42) -- (614.33,276.42) -- (404,276.42) -- cycle ;
%Shape: Rectangle [id:dp8007595728779457] 
\draw   (183,185.65) -- (354.1,185.65) -- (354.1,267.9) -- (183,267.9) -- cycle ;
%Straight Lines [id:da9003969753722135] 
\draw    (160.1,81.9) -- (180.1,81.9) ;
\draw [shift={(182.1,81.9)}, rotate = 180] [color={rgb, 255:red, 0; green, 0; blue, 0 }  ][line width=0.75]    (10.93,-3.29) .. controls (6.95,-1.4) and (3.31,-0.3) .. (0,0) .. controls (3.31,0.3) and (6.95,1.4) .. (10.93,3.29)   ;
%Straight Lines [id:da2969999902217123] 
\draw    (354.1,147.65) -- (402.41,184.44) ;
\draw [shift={(404,185.65)}, rotate = 217.29] [color={rgb, 255:red, 0; green, 0; blue, 0 }  ][line width=0.75]    (10.93,-3.29) .. controls (6.95,-1.4) and (3.31,-0.3) .. (0,0) .. controls (3.31,0.3) and (6.95,1.4) .. (10.93,3.29)   ;
%Straight Lines [id:da7566260138020819] 
\draw    (500.1,146.55) -- (500.1,183.55) ;
\draw [shift={(500.1,185.55)}, rotate = 270] [color={rgb, 255:red, 0; green, 0; blue, 0 }  ][line width=0.75]    (10.93,-3.29) .. controls (6.95,-1.4) and (3.31,-0.3) .. (0,0) .. controls (3.31,0.3) and (6.95,1.4) .. (10.93,3.29)   ;
%Straight Lines [id:da38674883877526955] 
\draw    (183.1,222.9) -- (149.1,222.57) ;
\draw [shift={(147.1,222.55)}, rotate = 0.56] [color={rgb, 255:red, 0; green, 0; blue, 0 }  ][line width=0.75]    (10.93,-3.29) .. controls (6.95,-1.4) and (3.31,-0.3) .. (0,0) .. controls (3.31,0.3) and (6.95,1.4) .. (10.93,3.29)   ;
%Straight Lines [id:da8752871249014019] 
\draw    (354.1,91.9) -- (399.1,91.9) ;
\draw [shift={(401.1,91.9)}, rotate = 180] [color={rgb, 255:red, 0; green, 0; blue, 0 }  ][line width=0.75]    (10.93,-3.29) .. controls (6.95,-1.4) and (3.31,-0.3) .. (0,0) .. controls (3.31,0.3) and (6.95,1.4) .. (10.93,3.29)   ;
%Straight Lines [id:da7686029027053436] 
\draw    (356.1,222.9) -- (404.1,222.9) ;
\draw [shift={(354.1,222.9)}, rotate = 0] [color={rgb, 255:red, 0; green, 0; blue, 0 }  ][line width=0.75]    (10.93,-3.29) .. controls (6.95,-1.4) and (3.31,-0.3) .. (0,0) .. controls (3.31,0.3) and (6.95,1.4) .. (10.93,3.29)   ;

% Text Node
\draw  [color={rgb, 255:red, 37; green, 37; blue, 37 }  ,draw opacity=1 ]  (80, 81) circle [x radius= 80.61, y radius= 39.6]   ;
\draw (28,55) node [anchor=north west][inner sep=0.75pt]   [align=left] {\begin{minipage}[lt]{77.82pt}\setlength\topsep{0pt}
\begin{center}
Input\\$\displaystyle \left(\widehat{\mathcal{S}}_{0} ,\ \mathbf{\textcolor[rgb]{0.82,0.01,0.11}{X}}\textcolor[rgb]{0.82,0.01,0.11}{^{( 1)}} ,\ DR\right)$
\end{center}

\end{minipage}};
% Text Node
\draw (185,37.17) node [anchor=north west][inner sep=0.75pt]   [align=left] { Knockoff features \ $\displaystyle \textcolor[rgb]{0.82,0.01,0.11}{\widetilde{\mathbf{X}}_{_{\widehat{\mathcal{S}}_{0}}}^{( 1)}}$\\ \ \ $\displaystyle \mathbf{X}_{_{S}} =\left(\textcolor[rgb]{0.82,0.01,0.11}{\mathbf{X}_{_{\widehat{\mathcal{S}}_{0}}}^{( 1)}},\textcolor[rgb]{0.82,0.01,0.11}{\ \widetilde{\mathbf{X}}_{_{\widehat{\mathcal{S}}_{0}}}^{( 1)}}\right)$ \\ \ \ $\displaystyle \mathbf{Y}_{s} \ =\textcolor[rgb]{0.82,0.01,0.11}{\ \mathbf{Y}^{(1)}}$};
% Text Node
\draw (405,17.17) node [anchor=north west][inner sep=0.75pt]   [align=left] { \ \ \ \ \ \ \ \ \ \ Data recycling\\ \ \ \ \ \ \ \ \ \ \ \ \ \ (Row wise)\\ \textcolor[rgb]{1.,1.,1.}{white} \\ $\displaystyle \mathbf{X}_{_{S}} =\left(\left(\textcolor[rgb]{0.29,0.56,0.89}{\mathbf{X}{_{_{\widehat{\mathcal{S}}_{0}}}^{( 0)}}^\top},\textcolor[rgb]{0.29,0.56,0.89}{\mathbf{X}{_{_{\widehat{\mathcal{S}}_{0}}}^{( 0)}}^\top}\right) ,\ \mathbf{X}_{_{S}}^\top\right)^\top$ \\ $\displaystyle \mathbf{Y}_{s} \ =\ \left(\textcolor[rgb]{0.29,0.56,0.89}{\mathbf{Y}_{0}^\top} ,\ \mathbf{Y}_{s}^\top\right)^\top$};
% Text Node
\draw (412,194.65) node [anchor=north west][inner sep=0.75pt]   [align=left] {Model fitting\\Solve (4.1) with $\displaystyle (\mathbf{X}_{S} ,\ \mathbf{Y}_{S})$\\to obtain \textcolor[rgb]{0.82,0.01,0.11}{$\displaystyle \theta _{h} =\left(\widehat{\theta }^{\widehat{\mathcal{S}}_{0} ,\top} ,\widetilde{\theta }^{\widehat{\mathcal{S}}_{0} ,\top}\right)^\top$}};
% Text Node
\draw (190,215.17) node [anchor=north west][inner sep=0.75pt]   [align=left] {\begin{minipage}[lt]{109.99pt}\setlength\topsep{0pt}
\begin{center}
Compute \ $\displaystyle \mathbb{G}_{v,n}( \theta _{h})$
\end{center}

\end{minipage}};
% Text Node
\draw (175.1,12) node [anchor=north west][inner sep=0.75pt]  [color={rgb, 255:red, 208; green, 2; blue, 27 }  ,opacity=1 ] [align=left] {Knockoff filter with $\displaystyle h\ \in \ HP$};
% Text Node
\draw (356,72) node [anchor=north west][inner sep=0.75pt]  [font=\footnotesize] [align=left] {$\displaystyle If\ DR$};
% Text Node
\draw (373,150) node [anchor=north west][inner sep=0.75pt]  [font=\footnotesize] [align=left] {$\displaystyle else$};
% Text Node
\draw  [color={rgb, 255:red, 37; green, 37; blue, 37 }  ,draw opacity=1 ]  (104, 222) circle [x radius= 43.84, y radius= 15.56]   ;
\draw (74,213) node [anchor=north west][inner sep=0.75pt]   [align=left] {\begin{minipage}[lt]{42.93pt}\setlength\topsep{0pt}
\begin{center}
$\displaystyle \mathbb{G}_{v,n}( \theta _{h})$
\end{center}

\end{minipage}};

\end{tikzpicture}

}
    \caption{Knockoff feature steps in details}
    \label{fig:kf_proc}
\end{figure}
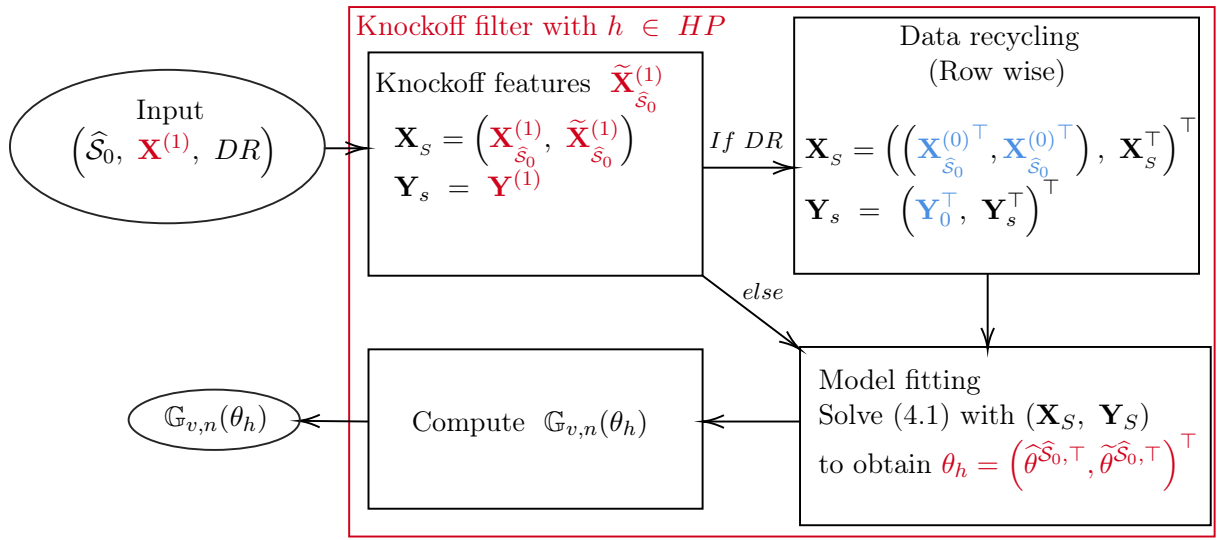

\paragraph{Solving problem (\ref{theta_norm_knockoff})} 
For convex penalties such as the LASSO, the objective function can be re-written as a Quadratic programming function and the sum of convex functions, hence it can be solved very efficiently with open-source software such as CVXPY from \cite{diamond2016cvxpy}. When the loss function is strictly non-convex such as is the case with SCAD or MCP, we have to use gradient based methods or local linear approximations (LLA), as described in \cite{zou2008one} and \cite{fan2014strong}. Solving via gradient descent involves many additional parameters and a correctly calibrated optimizer which can be quite unstable and difficult to implement. In contrast the LLA algorithm allows its implementation in a straightforward way and the re-use of already available written software.
We give the process for the LLA in Algorithm~\ref{alg:cap}. For SCAD and MCP, we initialize the algorithm by solving the LASSO penalized equation. $m$ is set to 2, following \cite{fan2014strong}. The stopping criteria defined by $||.||$ is set in our case to the $L_2$-norm.

\begin{algorithm}
\caption{Local linear approximation (LLA) algorithm}\label{alg:cap}
\begin{algorithmic}
\Require $(m, \varepsilon)$
\Return $\widehat{\theta}^{(s)}$
\State Initialize $\beta^{(0)}$ \Comment{With the LASSO penalization for example}
\State $\widehat{\theta}^{(0)} \gets \pp(\lambda_n,\widehat{\beta}^{(0)})$
\State $s \gets 1$
\While{$s \neq m$ and $|| \widehat{\theta}^{(s)} - \widehat{\theta}^{(s-1)} || < \varepsilon$} \Comment{Stopping criteria}
    \State Solve $\widehat{\beta}^{(s)} = \underset{\beta: \beta_k>0}{\min} \ \mathbb{L}_{v,n}(\beta)+\sum_k \widehat{\theta}^{(s-1)}_k \cdot|\beta_k|$
    \State $\widehat{\theta}^{(s)} \gets \pp(\lambda_n,\widehat{\beta}^{(s)})$
    \State $s \gets s+1$
\EndWhile
\end{algorithmic}
\end{algorithm}

\paragraph{Cross-validation and hyper-parameter optimization}
We wish to minimize the non-penalized part in (\ref{stat_crit}) on a left out validation set to find the best set of hyper parameters in an unbiased manner.
This procedure can be fine-tuned, in particular for solving problem (\ref{stat_crit}) in step 2  of the screening step and problem (\ref{theta_norm_knockoff}) with the knockoff features.
However, we do not perform any fine-tuning in the screening step for two reasons: we retain only the top performing features to reduce the input space dimensionality as in LARS, modifying the penalization scalar only leads to shifting the feature acceptance threshold, which leads only to more or less features (their relative importance remains the same); to reduce the computational burden.
By default we take $\lambda_n = 0.01$ for this step.
In step 3 of the knockoff filter, we apply cross-validation for fine-tuning.
In particular, when appropriate we fine-tune with respect to $\lambda_n$ via Bayes optimization methods, see \cite{DewanckerMC16}.

\paragraph{Constructing knockoff variables}
We use the second-order model-X knockoff of \cite{candes2018} described in their Section 3.2: instead of requiring $(\mathbf{X},\widetilde{\mathbf{X}})_{\text{swap}(\Sc)} \overset{d}{=} (\mathbf{X},\widetilde{\mathbf{X}})$, where  $(\mathbf{X},\widetilde{\mathbf{X}})_{\text{swap}(\Sc)}$ is obtained from $(\mathbf{X},\widetilde{\mathbf{X}})$ when swapping the entries $X_k$ and $\widetilde{X}_k$ for each $k \in \Sc$, the second-order method requires that $(\mathbf{X},\widetilde{\mathbf{X}})_{\text{swap}(\Sc)}$ and $(\mathbf{X},\widetilde{\mathbf{X}})$ to have the same mean and variance-covariance. To ensure the positive semidefinite construction of the latter variance-covariance, we employ the equicorrelated method.

\paragraph{Setting $\alpha$} We consider that a good value for $\alpha$ would be in the range of $(0.2, 0.4)$ depending on the data. 
The model can be conservative and consider that the risk is too high and return an empty set of features. 
This situation can be problematic when we fit the proposed procedure into a larger pipeline where the ultimate end-goal is to perform a classification or a regression.
In this situation only, i.e., when no features are returned for the set $\alpha$, we add a while loop in the last step of the knockoff filter to increase the value of $\alpha$ by $0.05$. If however we reach $\alpha =1$ and still no features are retained, we modify the algorithm to return the feature that maximises the knockoff statistic. This variant of SmRMR will be named SmRMR\textsubscript{2}. Unless, stated otherwise, $\alpha$ is set to a default value of $0.3$.

\paragraph{Choice of the knockoff statistic}
Several statistics are proposed in \cite{barber2015,barber2019} and satisfy the sufficiency and anti-symmetry properties. Throughout our empirical applications, we employ the statistic 
$\forall k \in \widehat{\Sc}, \ \widehat{W}_k = \widehat{\theta}_k - \tilde{\theta}_k$.

\subsection{Synthetic data} \label{s:data}

Similar to \cite{poignard2022}, we consider multiple data generating processes (DGP) with different sample sizes and relationships with the output.
We denote by $X \in \Rb^{n \times p}$ the matrix of covariates containing $n \in \Nb^*$ samples and $p \in \Nb^*$ features and by $Y\in \Rb^n$ the target output. 
Each input sample is drawn from a Gaussian distribution, $X_i \sim \mathcal{N}(0_p, \Sigma)$ where $\Sigma = (\sigma_{k,l})_{p\times p}$ and $\sigma_{k,l} = c^{|k-l|}$, where we set $c=0.5$, except stated otherwise. 
We denote by $\Sc$ the index of the true covariates.
We denote by $\varepsilon$ the error term, which is set by default as $\varepsilon \sim \mathcal{N}(0,1)$ except stated otherwise. 
% We denote by $\mathcal{T}(k)$ the Student distribution with $k$ degrees of freedom. 
We denote by $\mathbf{1}_n$ the $n$-dimensional vector of one's. Finally we denote by $\cdot$ the vector product. Equipped with these notations, we consider the following DGPs:
\paragraph{Linear models:}
\begin{itemize}
    \item[\textbf{1.a}] : $Y = \beta_0 . X_\Sc + \varepsilon, \text{ where } \beta_0 = (4, 8), \, \Sc=\lbrace 0, 5\rbrace \text{ and  } c=0$.
    \item[\textbf{1.b}] : $\text{Similar to 1.a except that }  \beta_0 = (1, 2, 4, 8) \text{ and  } \Sc=\lbrace 0, 10, 20, 30\rbrace$.
    \item[\textbf{1.c}] : $\text{Similar to 1.b except that } c=0.5$.
    \item[\textbf{1.d}] : $Y = \mathbf{1}_{10} . X_\Sc + \varepsilon, \text{ where } \Sc=\lbrace 0, 10, ..., 80, 90\rbrace$.
\end{itemize}
\paragraph{Non linear models:}
\begin{itemize}
    \item[\textbf{2.a}] : $Y=5 X_{0}+2 \sin \left(\pi X_{10} / 2\right)+2 X_{20} \mathds{1}\left\{X_{20}>0\right\}+ 2 \exp \left(5 X_{30}\right)+\varepsilon$.
    \item[\textbf{2.b}] : $Y = 3 X_{0}+3 X_{10}^{3}+3 X_{20}^{-1}+5 \mathds{1}\left\{X_{30}>0\right\}+\varepsilon$.
    \item[\textbf{2.c}] : $Y \sim \mathcal{P}(\mathbf{1}_{10}.X_\Sc), \text{ where } \mathcal{P} \text{ is the Poisson distribution}  \text{ and } \Sc=\lbrace 0, 10, ..., 80, 90\rbrace$.
\end{itemize}

\paragraph{Categorical data:}
\begin{itemize}
    \item[\textbf{3.a}] : $Y= \mathds{1}\lbrace t > 1 \rbrace \; \text{ where} \;  t= \exp\left(X_0 + X_5\right)) \text{ and  } c=0 \;$.
    \item[\textbf{3.b}] : $\text{Similar to 3.a except that } t=\exp\left(\sum_{i=0}^9 X_{10i}\right)$.
    \item[\textbf{3.c}] : $\text{Similar to 3.b except that } c=0.5$.
\end{itemize}

For each DGP, we vary the sample size $n \in \left\{100, 500 \right\}$ and the number of features $p \in \left\{100, 500, 5000 \right\}$, except for $n=500$ paired with $p=100$ and $p=500$ for computational reasons. When running the knockoff procedure, we set $n_0 = \lfloor 0.4n \rfloor$, $s_0$ is set to take the maximum amount feature while respecting $2n > p$, in particular we set $p_{max} = (n-1) / 2$.

\subsection{Benchmark performance}

We run a benchmark on all the configurations given in Subsection \ref{s:data} with the mRMR algorithm, the HSIC-LASSO implementation from \cite{climente2019block}, and our method. When we use HSIC with a Gaussian kernel, we set the width to the median heuristic. 
HSIC-LASSO and our method with a Gaussian kernel are similar but differ in two ways. Firstly, what we name as HSIC-LASSO is actually the Block HSIC-LASSO from \cite{climente2019block} where the kernel matrix is not fully computed but partitioned to scale the method to high-dimensional data. Secondly, our method has a pre-screening step to allow the creation of knockoff features. For each method, depending whether it is a classification or regression task, we retain the selected features from the previous method and apply a logistic regression (classification) or a LASSO (regression).
In this section we will show a single figure, but all the results can be found in Figures \ref*{fig:linearDGP}, \ref*{fig:linearDGP_smrmr}, \ref*{fig:FnonlinearDGP}, \ref*{fig:catDGP} and \ref*{fig:catDGP_smrmr} in Section \ref*{app:exp} of the Supplementary Material. 
In particular, we use two different association measures, PC and HSIC (with a Gaussian kernel).
In addition, we experiment by modifying the penalty in the SmRMR framework to: no penalty, L1, MCP or SCAD.

%% performance measures
\paragraph{The general trend} 
All models performance decrease with more complex DGP processes, for example, in the simplest DGP, i.e. linear 1.a (Section \ref*{app:exp} in the Supplementary Material, Figure~\ref*{fig:linearDGP}), increasing $p$ from 100 to 500 and 5,000 while keeping $n = 100$ leads to the MSE of SmRMR(PC, None) to increase from $3.8$, to $6.7$, and $9.5$ respectively.
Similarly, increasing $n$ to 500 reduces its MSE to $1.7$.
The TPR and the FDR follow a similar trend for all DGP and methods (Section \ref*{app:exp} in the Supplementary Material).
Increasing the complexity of the DGP leads to a decrease in performance. For example, for $n=100$ and $p=100$ shown in Figure~\ref*{fig:linearDGP}, going from DGP 1.a, to 1.b (adding two causal features), to 1.c (adding correlation between the variables), modifies the MSE from $3.8$ to $9.1$ and $15$ respectively.
The best performing model, HSIC-LASSO, outperforms our method in terms of accuracy or MSE. This however comes at the cost of a much larger number of selected features as we notice in every DGP, except for 1.a: this is due to the bias generated by the LASSO that prevents data from choosing
a large $\lambda_n$. This is in line with the empirical findings of \cite{fan2009}. This higher number of selected features directly affects the FDR, in addition in most cases, our method exhibits a lower FDR.

\paragraph{FDR control} It was demonstrated that the SIS property was satisfied and that FDR is controlled on average when following the knockoff+ procedure, as shown in \citeauthor{poignard2022} \cite{poignard2022}.
From Figure \ref{fig:nonlinearDGP}, we notice that the FDR is controlled for SmRMR with FDR rates below $\alpha$ but that the FDR is not controlled in every situation for SmRMR\textsubscript{2}, the modified method to prevent the empty set return.
Described in Section \ref{sec:knock}, the method was modified to return at least one feature, even in the most conservative situations.
This modification upper biases and naturally leads to an increase in the FDR. In \cite{poignard2022}, it was shown that the empirical probability of returning an empty set for the knockoff+ method can be as high as 80\% for the values of $\alpha$ set to $0.3$ as is the case here. 
This effect was also reported in \cite{liu2022}, where there put in relation the number of active features and $\alpha$ to returning the empty set and this is not ideal for assessing the predictive performance of the method.

Inspection of the TPR, FDR, and number of selected features ($N$) in Figure \ref{fig:nonlinearDGP}, show that for 2.a and 2.c, the TPR and FDR values for the PC association measure outperform HSIC-LASSO. Specially for 2.c when $n=500$ and $p=5000$ where SmRMR(PC, MCP) reaches TPR $=0.58$ and FDR $=0.32$ compared to TPR $=0.20$ and FDR $=0.32$ on average. 

%% Effect of penalty
\paragraph{The effect of the penalty function} Between no penalty, L1, SCAD and MCP penalty no clear and strong trend can be observed expect that the use of a penalty is positive and that MCP seems to perform better by a large margin on the categorical DGP (DGP defined by 3.a, 3.b and 3.c) for both association measures. 
On the contrary, for the non-linear process, we notice that MCP combined with PC is the best performing method on 2.b, and 2.c. However, we do not notice the same trend for the HSIC association measure and MCP.

%%% ADD figure
\begin{figure}[h]
\includegraphics[width=\textwidth]{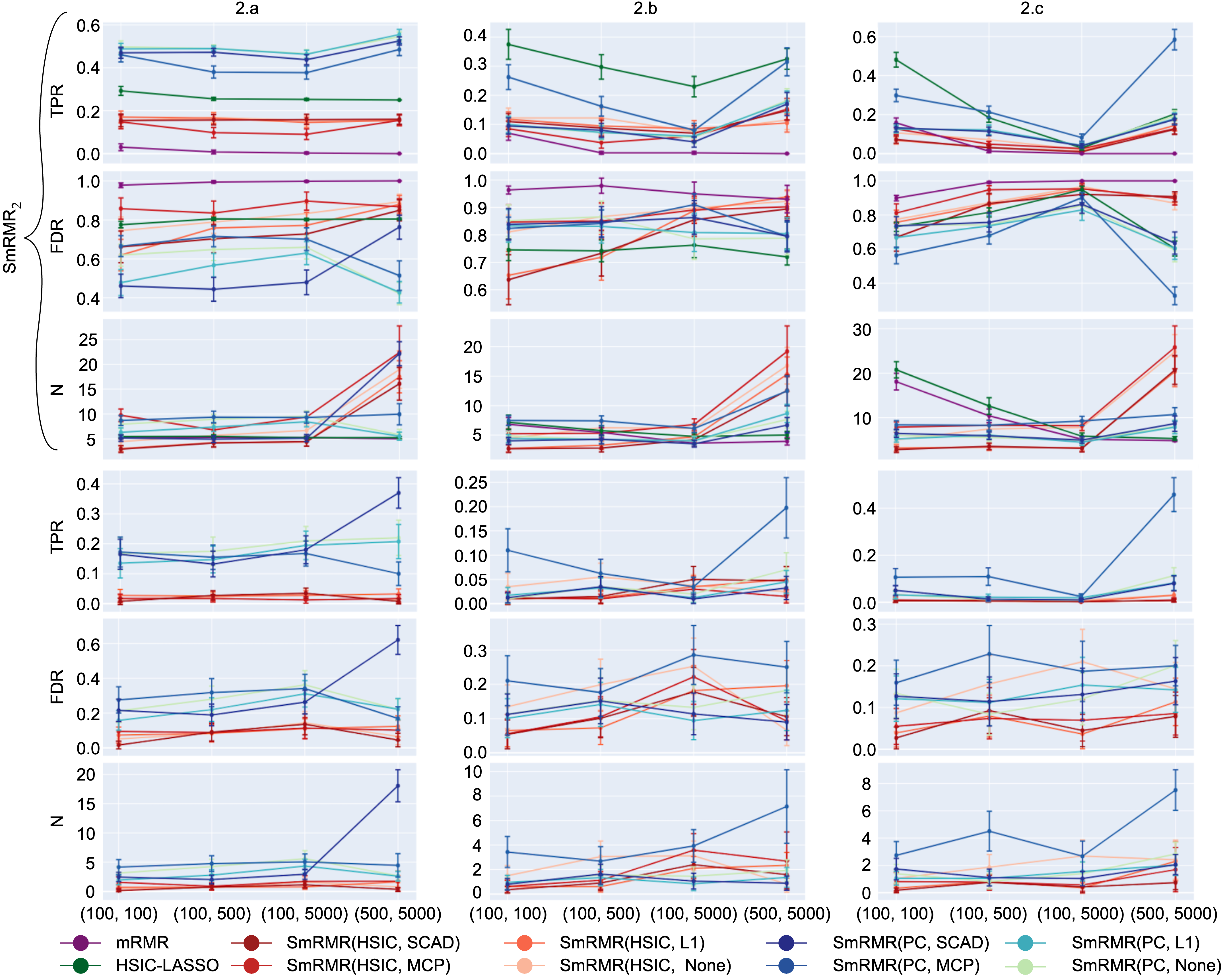}
\caption{Results for the non-linear DGP. For HSIC, we use the Gaussian kernel. $\alpha$ is set to $0.3$. In the top 3 row we use the modified version SmRMR\textsubscript{2} and for the final 3 rows we use the unmodified version.}
\label{fig:nonlinearDGP}
\end{figure}

\subsection{Real-world data}

We evaluate the proposed method on eight real-world datasets commonly used to benchmark feature selection algorithms: see Table~\ref{tab:rwd} for a description of these datasets. To select the hyper-parameters and to estimate the performance in an unbiased fashion, we use a 5 fold nested cross-validation, similar to \cite{naylor2022prediction}. In the inner cross-validation, we select the hyper-parameters ($k$ in HSIC-LASSO, $\alpha$ in SmRMR), using 64\% of the samples for training, and 16\% for validation. We choose the hyper-parameters that maximize the accuracy on the validation set. Then, we select the features on the train set using the best hyper-parameters, and train a random forest model on the full train and validation set using only the selected features. Last, we estimate the performance of the model in the outer loop, containing the remaining 20\% of the samples. We used a random forest model, instead of a logistic regression, as tree-based models are good at modeling non-linear dependencies in tabular data: see \cite{climente2019block}. We repeat 10 times this whole procedure and average the performance. We show the classification performance on outer-cross-validation test set in Figure \ref{fig:bio_scores} and the number of selected features in Figure \ref{fig:deform}.

For non-linear dependencies, we favoured the PC association measure as shown in Figure~\ref{fig:nonlinearDGP} and the use of SCAD or MCP, hence we compare SmRMR with the PC association with either SCAD or MCP penalty on the real-world datasets.

We did not include mRMR as this method was computationally expensive and it was highlighted in \cite{climente2019block} that its performance was inferior to that of HSIC-LASSO. SmRMR achieves better performance than HSIC-LASSO on both TOX\_171 and orlraws10P; on the other datasets, HSIC-LASSO outperforms SmRMR. However, as shown in Figure \ref{fig:features}, these better scores come at the cost of about 3 times more selected features.
Finally, from the real-world datasets experiments, we notice that SCAD seems to be slightly more conservative in terms of feature selection with respect to MCP.

\begin{table}
\caption{Description of the real-world datasets used for benchmark.}
\label{tab:rwd}
\begin{center}
\resizebox{\textwidth}{!}{
\begin{tabular}{ |c|c|c|c|c| } 
\hline
Name                 & Data modality                        & Number of samples & Number of features & Reference                        \\ \hline
CLL\_SUB\_111        &                                      & 111               & 11,340             &                                  \\ \cline{1-1}\cline{3-4}
GLIOMA               &                                      & 50                & 4,434              &                                  \\ \cline{1-1}\cline{3-4}
SMK\_CAN\_187        &                                      & 187               & 19,993             &                                  \\ \cline{1-1}\cline{3-4}
TOX\_171             &  \multirow{-4}{*}{Gene expression}   & 171               & 5,748              &                                  \\ \cline{1-4}
orlraws10P           &                                      & 100               & 10,304             &                                  \\ \cline{1-1}\cline{3-4}
pixraw10P            &                                      & 100               & 10,000             &                                  \\ \cline{1-1}\cline{3-4}
warpAR10P            &  \multirow{-3}{*}{Image}             & 130               & 2,400              & \multirow{-7}{*}{\cite{li2018}} \\ \hline
Toxicity             & Chemical properties                  & 171               & 1,203              & \cite{Gul2021}                  \\ \hline
\end{tabular}}
\end{center}
\end{table}

\begin{figure}[h]
\centering
\begin{subfigure}{0.44\textwidth}
\centering
    \includegraphics[width=1.0\linewidth]{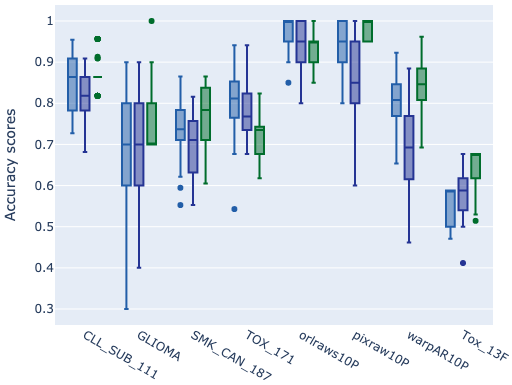}
    \caption{Accuracy scores}
    \label{fig:bio_scores}
\end{subfigure}%
\begin{subfigure}{0.56\textwidth}
\centering    \includegraphics[width=1.0\linewidth]{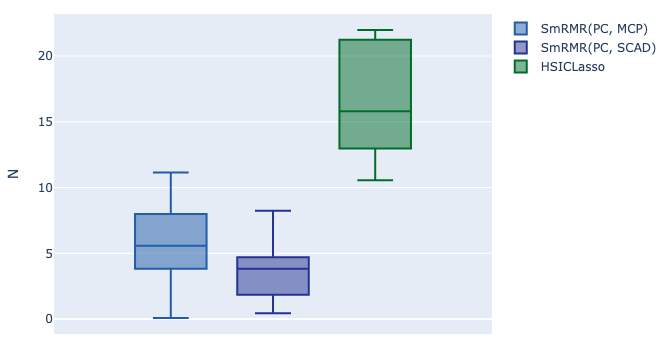}\vspace{0.8cm}
    \caption{Average number of selected features}
    \label{fig:deform}
\end{subfigure}
\caption{Real-world biological datasets application with SmRMR and HSIC-LASSO.}
\label{fig:features}
\end{figure}

\section{Conclusion}

We introduced a novel framework for feature screening that leverages both feature-feature and feature-target relationships. By employing the penalized mRMR procedure, we have successfully identified inactive features and provided theoretical conditions for accurate zero parameter recovery. 

The proposed multi-stage approach, incorporating the knockoff filter, based on our empirical evaluations on simulated and real-world datasets demonstrate the effectiveness of the proposed method in terms of performance and in terms of feature selection. While being at similar levels of performance to HSIC-LASSO, our method returns fewer parameters, thus decreasing the FDR levels. In addition, it only requires a threshold level for FDR rather than a predetermined number of features in contrast to the other approaches. This advantage simplifies the practical application of the method.

In conclusion, the proposed framework offers a valuable theoretical tool for feature screening, providing a robust and efficient approach for identifying inactive features. Its potential applications extend to various domains where feature selection is a critical step in data analysis and modeling such as bio-informatics.

\acks{Benjamin Poignard was supported by JSPS KAKENHI (22K13377) and RIKEN AIP. Héctor Climente-González was supported by the RIKEN Special Postdoctoral Researcher Program.}

{
\vskip 0.2in
\bibliography{main.bib}
%\printbibliography
}
\newpage
%\appendixhead

\appendix
    
\numberwithin{equation}{section}
% \makeatletter 
% % "activate" the preparatory code, but for section-level headers only
\renewcommand{\thesection}{Appendix \Alph{section}}
\section{Preliminary results}\label{deviation}
\renewcommand{\thesection}{\Alph{section}}

Our asymptotic results rely on the deviation inequality $|\widehat{\text{D}}_v(\mathbf{X},\mathbf{Y})-\text{D}(X,Y)| > \eps$, for $\eps>0$. In this Appendix, we present two deviation inequalities: an inequality when $\text{D}(\cdot,\cdot)$ is the Projection Correlation; an inequality when $\text{D}(\cdot,\cdot)$ is the normalized HSIC. 

\begin{lemma}\label{exponential_pc}[Theorem 1 of \cite{liu2022}]\\
For any $0 < \eps <1$, there exists a positive constant $a_1>0$ finite such that
\begin{equation*}
\normalfont\Pb\Big(\big|\widehat{\text{PC}}_v(\mathbf{X},\mathbf{Y})^2-\text{PC}(X,Y)^2\big|>\eps\Big) \leq O\Big(\exp\big(-a_1 n \eps^2\big)\Big).
\end{equation*}
\end{lemma}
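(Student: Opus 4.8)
The plan is to reduce the claim to exponential concentration of three bounded V-statistics and then to control the ratio that defines $\widehat{\text{PC}}_v^2$. Set $V_1:=\widehat{\text{Pcov}}_v(\mathbf{X},\mathbf{Y})^2$, $V_2:=\widehat{\text{Pcov}}_v(\mathbf{X},\mathbf{X})$, $V_3:=\widehat{\text{Pcov}}_v(\mathbf{Y},\mathbf{Y})$, with population counterparts $\tau_1,\tau_2,\tau_3$, so that $\widehat{\text{PC}}_v(\mathbf{X},\mathbf{Y})^2=V_1/(V_2V_3)$ and $\text{PC}(X,Y)^2=\tau_1/(\tau_2\tau_3)$. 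By Theorem~1 of \cite{zhu2017}, each of $V_1,V_2,V_3$ is a V-statistic of the form $n^{-3}\sum_{i,l,r}[\calK]_{ilr}[\calL]_{ilr}$ in which $[\calK]_{ilr}$ and $[\calL]_{ilr}$ are obtained by double-centering kernels whose entries are values of $\mathrm{arccos}$, hence bounded by $\pi$ (with the convention that the entry is $0$ when $i=r$ or $l=r$). Consequently each summand is bounded by an absolute constant $c_0$ of order $\pi^2$, and $V_1,V_2,V_3$ take values in $[0,c_0]$.

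\textbf{Step 1 (concentration of each V-statistic).} Fix one of the three statistics, call it $V$, with mean $\mu_n:=\Eb[V]$ and limit $\tau$. Because $V$ is a bounded V-statistic of fixed degree, changing a single observation $(X_i,Y_i)$ alters $V$ by at most $O(1/n)$: the index $i$ appears in $O(n^2)$ of the $n^3$ terms, each term moves by at most $2c_0$, and the centering averages contribute the same order. McDiarmid's bounded-differences inequality then gives $\Pb\big(|V-\mu_n|>t\big)\le 2\exp(-c_1 n t^2)$ for an absolute constant $c_1>0$ and every $t>0$; equivalently one may write $V$ as the associated U-statistic plus an $O(1/n)$ remainder and invoke Hoeffding's inequality for bounded U-statistics. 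Since the V-statistic bias satisfies $|\mu_n-\tau|=O(1/n)$, there is $n_0$ with $|\mu_n-\tau|\le \eps/2$ for all $n\ge n_0$, whence $\Pb\big(|V-\tau|>\eps\big)\le 2\exp(-c_1 n\eps^2/4)$ for $n\ge n_0$ after shrinking the constant.

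\textbf{Step 2 (from the three statistics to the ratio).} Since $\text{PC}(X,Y)^2\le 1$ and $\widehat{\text{PC}}_v(\mathbf{X},\mathbf{Y})^2\le 1$ always, only the region where the normalising quantities are bounded away from $0$ is relevant; by the very definition of $\text{PC}$ we have $\tau_2,\tau_3>0$, so set $m:=\min\{\tau_2,\tau_3\}>0$. The map $(a,b,c)\mapsto a/(bc)$ is Lipschitz on $\{0\le a\le c_0,\ b\ge m/2,\ c\ge m/2\}$ with a constant $L$ depending only on $c_0$ and $m$. Hence, on the event $\mathcal{A}_\eps:=\{|V_1-\tau_1|\le \eps/(3L)\}\cap\{|V_2-\tau_2|\le \min(m/2,\eps/(3L))\}\cap\{|V_3-\tau_3|\le \min(m/2,\eps/(3L))\}$ one has $|\widehat{\text{PC}}_v(\mathbf{X},\mathbf{Y})^2-\text{PC}(X,Y)^2|\le \eps$. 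Taking complements and applying Step~1 to each of the three events with a union bound gives $\Pb\big(|\widehat{\text{PC}}_v(\mathbf{X},\mathbf{Y})^2-\text{PC}(X,Y)^2|>\eps\big)\le 6\exp(-a_1 n\eps^2)$ for all $n\ge n_0$ and a suitable $a_1>0$, which is the asserted $O(\exp(-a_1 n\eps^2))$ bound; finitely many small values of $n$ are absorbed into the constant hidden by $O(\cdot)$.

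The main obstacle is Step~2: one must confirm that the normalising quantities $\tau_2,\tau_3$ — and therefore their empirical versions on a high-probability event — are bounded away from zero, which is precisely the non-degeneracy built into the definition of $\text{PC}$, and one must keep the Lipschitz constant $L$ uniform over the relevant region so that $a_1$ is independent of $n$. By contrast, Step~1 is routine once the boundedness of the double-centered $\mathrm{arccos}$ kernels is recorded: both the $O(1/n)$ bounded-differences constant and the $O(1/n)$ bias are immediate, and the convention $K_{ilr}=L_{ilr}=0$ when $i=r$ or $l=r$ affects neither.
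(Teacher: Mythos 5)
Your argument is correct, but it is worth noting that the paper does not actually prove this lemma: it is imported verbatim as Theorem~1 of \cite{liu2022}, so there is no in-paper proof to compare against. The closest internal analogue is the paper's proof of Lemma~\ref{exponential_hsic}, which proceeds differently from you: there the V-statistic is decomposed exactly into its constituent U-statistics plus explicit $O(n^{-1})$ remainders (following Gretton et al.), each U-statistic is controlled via Serfling's Hoeffding-type inequality (Theorem 5.6.1.A of \cite{serfling1980}), and the passage from the unnormalized quantities to the normalized ratio is delegated to ``Lemma S.2 of \cite{liu2022}.'' You instead apply McDiarmid's bounded-differences inequality directly to the bounded V-statistics and handle the ratio by an explicit Lipschitz bound on $(a,b,c)\mapsto a/(bc)$ over a region where the denominators are bounded away from zero. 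Both routes are sound; yours avoids the V-to-U bookkeeping and the external ratio lemma at the price of slightly looser constants, and your bounded-differences computation (index appearing in $O(n^2)$ of the $n^3$ triples, centering averages contributing the same $O(1/n)$ order) is correct. Two small points to tighten: under the paper's notation the denominator factors $\widehat{\text{Pcov}}_v(\mathbf{X},\mathbf{X})$ and $\widehat{\text{Pcov}}_v(\mathbf{Y},\mathbf{Y})$ are square roots of V-statistics rather than V-statistics themselves, so Step~1 should be applied to their squares and the square-root map (Lipschitz away from $0$ since $\tau_2,\tau_3>0$) absorbed into your constant $L$; and the positivity $\tau_2,\tau_3>0$ requires $X$ and $Y$ to be non-degenerate, which is an implicit standing assumption you should state rather than attribute to ``the very definition of $\text{PC}$.''
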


\begin{lemma}\label{exponential_hsic}
Assume that $k(\cdot,\cdot)$ and $l(\cdot,\cdot)$ are non-negative and symmetric kernels bounded by $1$. Then for any $0 < \eps <1$, there exists a positive constant $b_1>0$ finite such that
\begin{equation*}
\normalfont\Pb\Big(\big|\text{nr-}\widehat{\text{HSIC}}_v(\mathbf{X},\mathbf{Y})-\text{nr-HSIC}(X,Y)\big|>\eps\Big) \leq O\Big(\exp\big(-b_1 n \eps^2\big)\Big),
\end{equation*}
where $\normalfont\text{nr-}\widehat{\text{HSIC}}_v(\mathbf{X},\mathbf{Y})$ and $\normalfont\text{nr-HSIC}(X,Y)$ are given by (\ref{nr-HSIC-vstat}) and (\ref{nr-HSIC}), respectively.
\end{lemma}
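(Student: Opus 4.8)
The plan is to reduce the claim to a concentration statement for the three \emph{unnormalized} HSIC V-statistics entering the ratio (\ref{nr-HSIC-vstat}), namely $\widehat{\text{HSIC}}_v(\mathbf{X},\mathbf{Y})$, $\widehat{\text{HSIC}}_v(\mathbf{X},\mathbf{X})$ and $\widehat{\text{HSIC}}_v(\mathbf{Y},\mathbf{Y})$, and then to propagate these deviations to $\text{nr-}\widehat{\text{HSIC}}_v$ via a deterministic Lipschitz estimate for the map $(a,b,c)\mapsto a/\sqrt{bc}$ together with a union bound. Write $a^\ast=\text{HSIC}(X,Y)$, $b^\ast=\text{HSIC}(X,X)$, $c^\ast=\text{HSIC}(Y,Y)$, with $b^\ast,c^\ast>0$ by assumption, and let $\widehat a,\widehat b,\widehat c$ denote the corresponding V-statistic estimators from (\ref{hsic_v2}).

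First I would show that, for a constant $c_0>0$ and every $0<t<1$,
\begin{equation*}
\Pb\big(|\widehat a-a^\ast|>t\big)\leq O\big(\exp(-c_0 n t^2)\big),
\end{equation*}
and likewise for $\widehat b$ and $\widehat c$. For this I would pass to the unbiased estimator $\widehat{\text{HSIC}}_u$ of (\ref{hsic_u}): symmetrizing its three sums of orders $2,3,4$ expresses it as a single U-statistic of order $4$ in the i.i.d.\ variables $Z_i=(X_i,Y_i)$ with a kernel $h$ that is uniformly bounded, since $0\le K_{ij},L_{ij}\le 1$ forces $\|h\|_\infty\le C$ for an absolute constant $C$; moreover $\Eb[\widehat{\text{HSIC}}_u]=a^\ast$ because (\ref{hsic_u}) is unbiased for $\text{HSIC}$. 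The classical Hoeffding exponential inequality for U-statistics with a bounded kernel then yields $\Pb(|\widehat{\text{HSIC}}_u-a^\ast|>s)\le 2\exp(-c_1 n s^2)$. Finally, $\widehat{\text{HSIC}}_v$ and $\widehat{\text{HSIC}}_u$ differ only through the ``diagonal'' index contributions and the slightly different normalizing factors, a discrepancy of order $O(1/n)$ holding deterministically because the kernels are bounded by $1$; absorbing it into the $O(\cdot)$ (it only matters when $t\lesssim 1/n$, a range in which the asserted bound is vacuous) gives the displayed inequality. The identical argument applies with the pair $(X,Y)$ replaced by $(X,X)$ or by $(Y,Y)$.

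Next I would carry out the ratio step. Put $\phi(a,b,c)=a/\sqrt{bc}$, so that $\text{nr-HSIC}(X,Y)=\phi(a^\ast,b^\ast,c^\ast)$ and $\text{nr-}\widehat{\text{HSIC}}_v(\mathbf{X},\mathbf{Y})=\phi(\widehat a,\widehat b,\widehat c)$. On the box $B=\{|a|\le|a^\ast|+1,\ b\ge b^\ast/2,\ c\ge c^\ast/2\}$ the gradient of $\phi$ is bounded by a constant $L=L(a^\ast,b^\ast,c^\ast)$, so $\phi$ is $L$-Lipschitz on $B$. Hence, on the event $\calA_\delta:=\{|\widehat a-a^\ast|\le\delta,\ |\widehat b-b^\ast|\le\delta,\ |\widehat c-c^\ast|\le\delta\}$ with $\delta\le\min(b^\ast,c^\ast)/2\wedge 1$, the triple $(\widehat a,\widehat b,\widehat c)$ lies in $B$ and therefore $|\text{nr-}\widehat{\text{HSIC}}_v(\mathbf{X},\mathbf{Y})-\text{nr-HSIC}(X,Y)|\le L\delta$. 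Choosing $\delta=\epsilon/L$ — which is admissible once $\epsilon$ is small enough, the lemma being trivial otherwise — gives the inclusion $\{|\text{nr-}\widehat{\text{HSIC}}_v(\mathbf{X},\mathbf{Y})-\text{nr-HSIC}(X,Y)|>\epsilon\}\subseteq\calA_\delta^c$.

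To conclude, a union bound and the concentration inequality above give
\begin{equation*}
\Pb\big(\calA_\delta^c\big)\leq\Pb\big(|\widehat a-a^\ast|>\delta\big)+\Pb\big(|\widehat b-b^\ast|>\delta\big)+\Pb\big(|\widehat c-c^\ast|>\delta\big)\leq O\big(\exp(-c_0 n\delta^2)\big)=O\big(\exp(-b_1 n\epsilon^2)\big)
\end{equation*}
with $b_1=c_0/L^2$, which is exactly the claim. I expect the ratio step to be the main obstacle: one must ensure the denominators $\widehat b,\widehat c$ stay bounded away from $0$ with overwhelming probability — this is precisely where the hypotheses $\text{HSIC}(X,X)>0$ and $\text{HSIC}(Y,Y)>0$ are used — and one must check that $L$ depends only on the fixed population quantities so that $b_1$ in the final bound does not deteriorate. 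The U-statistic concentration itself is routine once the boundedness of the order-$4$ kernel, guaranteed by the normalization $k,l\le 1$, has been recorded; this part parallels the PC case of Lemma \ref{exponential_pc}.
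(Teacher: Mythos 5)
Your proposal is correct and follows essentially the same route as the paper: an exponential (Hoeffding-type) bound for the unnormalized HSIC V-statistics obtained by relating them to bounded-kernel U-statistics and absorbing the deterministic $O(1/n)$ V-versus-U discrepancy, followed by a union bound and a ratio/Lipschitz step that uses $\text{HSIC}(X,X)>0$ and $\text{HSIC}(Y,Y)>0$. The only cosmetic differences are that the paper treats the three component V-statistics $\widehat{S}^v_1,\widehat{S}^v_2,\widehat{S}^v_3$ separately rather than symmetrizing into a single order-$4$ U-statistic, and it delegates your explicit Lipschitz argument for $(a,b,c)\mapsto a/\sqrt{bc}$ to Lemma S.2 of \cite{liu2022}.
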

\begin{remark}
Since both exponential bounds have the same convergence rate, we do not distinguish the normalized HSIC and Projection Correlation cases in the proofs of Theorem \ref{bound_proba} and Theorem \ref{sparsistency}. Furthermore, Lemma \ref{exponential_hsic} can be stated for any symmetric bounded kernels $\|k\|_{\infty}=a_1$ and $\|l\|_{\infty}=a_2$.
\end{remark}

\begin{proof}[Proof of Lemma \ref{exponential_hsic}.]
First, we derive a deviation inequality for $|\widehat{\text{HSIC}}_v(\mathbf{X},\mathbf{Y})-\text{HSIC}(X,Y)|$. Then, using, e.g., Lemma S.2 of \cite{liu2022}, we deduce Lemma \ref{exponential_hsic}.\\
To derive the deviation inequality, it is useful to consider the decomposition of the V-statistic estimator of $\text{HSIC}(X,Y)$ and analyze its bias. This estimator can be written as a combination of three V-statistics:
\begin{eqnarray*}
\widehat{\text{HSIC}}_v(\mathbf{X},\mathbf{Y}) 
& = & \frac{1}{n^2}\overset{n}{\underset{i,j=1}{\sum}} K_{ij}L_{ij} + \frac{1}{n^4} \overset{n}{\underset{i,j,m,l=1}{\sum}} K_{ij}L_{ml} - \frac{2}{n^3} \overset{n}{\underset{i,j,m=1}{\sum}} K_{ij} L_{im} := \widehat{S}^v_1 + \widehat{S}^v_2 - 2 \widehat{S}^v_3.
\end{eqnarray*}
We decompose each V-statistic following Subsection A.1 in \cite{gretton2007kernel}. First, noting that $(n)_c/n^c = n!/((n-c)!n^c)=1+O(n^{-1})$, we have:
{\small{\begin{eqnarray*}
\begin{split}
\Eb[\widehat{S}^v_1]= \frac{1}{n^2}\Eb[\overset{n}{\underset{i=1}{\sum}}K_{ii}L_{ii}+\underset{(i,j) \in \mathbf{i}^n_2}{\sum} K_{ij}L_{ij}]
&=\frac{1}{n^2}\left(O(n)+(n)_2\Eb_{XYX'Y'}[k(X,X')l(Y,Y')]\right) \\ &= O(n^{-1})+\Eb_{XYX'Y'}[k(X,X')l(Y,Y')].
\end{split}
\end{eqnarray*}}}
As for $\widehat{S}^v_3$:
\begin{eqnarray*}
\begin{split}
\lefteqn{\Eb[\widehat{S}^v_3] = \frac{1}{n^3}\Eb[\overset{n}{\underset{i=1}{\sum}}K_{ii}L_{ii}+ \underset{(i,j) \in \mathbf{i}^n_2}{\sum}\Big(K_{ii}L_{ij}+K_{ij}L_{ii}\Big)+\underset{(i,j,m) \in \mathbf{i}^n_3}{\sum}K_{ij}L_{im}]} \\
&=  \frac{1}{n^3}\left(O(n)+O(n^2)+(n)_3\Eb_{XY}[\Eb_{X'}[k(X,X')]\Eb_{Y'}[l(Y,Y')]]\right) \\ 
&= O(n^{-1})+\Eb_{XY}[\Eb_{X'}[k(X,X')]\Eb_{Y'}[l(Y,Y')]].
\end{split}
\end{eqnarray*}
Finally, let us treat $\widehat{S}^v_2$ by the same reasoning:
\begin{eqnarray*}
\begin{split}
\Eb[\widehat{S}^v_2] &=\frac{1}{n^4}\left(O(n)+O(n^2)+O(n^3)+(n)_4\Eb_{XX'}[k(X,X')]\Eb_{YY'}[l(Y,Y')]\right) \\
&= O(n^{-1})+\Eb_{XX'}[k(X,X')]\Eb_{YY'}[l(Y,Y')]. 
\end{split}
\end{eqnarray*}
So $\text{HSIC}(X,Y) = \Eb[\widehat{\text{HSIC}}_v(\mathbf{X},\mathbf{Y})]+O(n^{-1})$, in line with Theorem 1 of \cite{ALT:Gretton+etal:2005}. Now, to derive the deviation inequality, we consider the difference $\widehat{\text{HSIC}}_v(\mathbf{U},\mathbf{V})-\widehat{\text{HSIC}}_u(\mathbf{U},\mathbf{V})$. Here, the U-statistic estimator of $\text{HSIC}$ can be written as a combination of three U-statistics defined in (\ref{hsic_u}) as follows: $\widehat{\text{HSIC}}_u(\mathbf{U},\mathbf{V}) =: \widehat{S}^u_1 + \widehat{S}^u_2 - 2 \widehat{S}^u_3$. Then: $$\widehat{\text{HSIC}}_v(\mathbf{U},\mathbf{V})-\widehat{\text{HSIC}}_u(\mathbf{U},\mathbf{V})=\widehat{S}^v_1-\widehat{S}^u_1+\widehat{S}^v_2-\widehat{S}^u_2-2(\widehat{S}^v_3-\widehat{S}^u_3).$$
We now treat each difference $\widehat{S}^v_1-\widehat{S}^u_1$, $\widehat{S}^v_2-\widehat{S}^u_2$ and $\widehat{S}^v_3-\widehat{S}^u_3$, again following Subsection A.1 of \cite{gretton2007kernel}.

\mds

\noindent First,
$\widehat{S}^v_1 = \frac{n-1}{n}\widehat{S}^u_1 + \frac{1}{n} \widehat{S}^u_0, \; \text{where} \; \widehat{S}^u_0 = \frac{1}{n}\overset{n}{\underset{i=1}{\sum}} K_{ii} L_{ii}$. As for $\widehat{S}^v_3$, we get
\begin{equation*}
\widehat{S}^v_3 = \widehat{S}^u_3 -\frac{3}{n}\widehat{S}^u_3 + \frac{1}{n^3}\underset{(i,j)\in \mathbf{i}^n_2}{\sum}\big( K_{ii}L_{ij}+K_{ij}L_{ii}+K_{ij}L_{ij}\big) + O(n^{-2}),
\end{equation*}
so we write $\widehat{S}^v_3 = \frac{n-3}{n}\widehat{S}^u_3 + \frac{n-1}{n^2}\widehat{T}^u + O(n^{-2})$, with $\widehat{T}^u := \frac{1}{n(n-1)}\underset{(i,j)\in \mathbf{i}^n_2}{\sum}\big( K_{ii}L_{ij}+K_{ij}L_{ii}+K_{ij}L_{ij}\big)$.
Finally, for $\widehat{S}^v_2$, we obtain:
\begin{equation*}
\widehat{S}^v_2 = \widehat{S}^u_2 -\frac{6}{n}\widehat{S}^u_2 + \frac{1}{n^4} \underset{(i,j,m)\in \mathbf{i}^n_3}{\sum}\big(K_{ii}L_{jm}+4K_{ij}L_{im}+K_{ij}L_{mm}\big)+O(n^{-2}),
\end{equation*}
so we write $\widehat{S}^v_2 = \frac{n-6}{n}\widehat{S}^u_2 + \frac{(n-1)(n-2)}{n^3} \widehat{P}^u + O(n^{-2})$, with $\widehat{P}^u := \frac{1}{n(n-1)(n-2)}\underset{(i,j,m)\in \mathbf{i}^n_3}{\sum}\big(K_{ii}L_{jm}+4K_{ij}L_{im}+K_{ij}L_{mm}\big)$.

\mds

\noindent First, we derive a deviation inequality for $|\widehat{S}^v_1-S_1|$, with $S_1 = \Eb_{XX'YY'}[k(X,X')l(Y,Y')]$. Under the assumption of non-negative and symmetric kernels $k(\cdot,\cdot), l(\cdot,\cdot)$ bounded by $1$, for any $0<\eps<1$, for $n$ such that $n \geq 1/\eps$, then $\widehat{S}^u_0/n\leq \eps$ and $S_1 /n \leq \eps$. We have
\begin{eqnarray*}
\Pb\Big(|\widehat{S}^v_1-S_1|\geq 3\eps\Big) \leq \Pb\Big(|\frac{n-1}{n}\widehat{S}^u_1 + \frac{1}{n} \widehat{S}^u_0 - S_1|\geq3\eps\Big)\leq \Pb\Big(|\frac{n-1}{n}\big(\widehat{S}^u_1-S_1\big)|\geq 2\eps-|\frac{1}{n}S_1|\Big) \\ \leq \Pb\Big(|\widehat{S}^u_1-S_1|\geq \eps\Big).
\end{eqnarray*}
By Theorem 5.6.1.A of \cite{serfling1980}, since the  bound holds for deviations in the opposite direction, we deduce
\begin{equation*}
\forall \eps >0, \; \Pb\Big(|\widehat{S}^u_1-S_1|\geq \eps\Big) \leq 2\exp\Big(-2\lfloor n/2\rfloor\eps^2\Big),
\end{equation*}
We now treat $\widehat{S}^v_3$. Take $0<\eps<1$, and $n$ such that $n \geq 3/\eps$, then $\frac{n-1}{n^2}\widehat{T}^u \leq \eps$ and $3S_3 / n\leq \eps$ with $S_3$ defined by $S_3 = \Eb_{XY}[\Eb_{X'}[\psi(X,X')]\Eb_{Y'}[\phi(Y,Y')]]$. We obtain for a bounded constant $K>0$:
\begin{eqnarray*}
\begin{split}
\lefteqn{\Pb\Big(|\widehat{S}^v_3-S_3|\geq 4\eps\Big)}\\
& \leq  \Pb\Big(|\frac{n-3}{n}\widehat{S}^u_3+\frac{n-1}{n^2}\widehat{T}^u+O(n^{-2})-S_3|\geq 4\eps\Big) \\ &\leq\Pb\Big(|\frac{n-3}{n}\big(\widehat{S}^u_3-S_3\big)| \geq 3\eps-\frac{3}{n}S_3-\frac{K}{n^2}\Big) \\ &\leq  \Pb\Big(|\widehat{S}^u_3-S_3|\geq \eps\Big).
\end{split}
\end{eqnarray*}
Using the same argument for bounding $\widehat{S}^v_1$, we deduce
\begin{equation*}
\Pb\Big(|\widehat{S}^v_3-S_3|\geq 4\eps\Big) \leq 2\exp\Big(-2\lfloor n/3 \rfloor \eps^2\Big).
\end{equation*}
Finally, we consider $\widehat{S}^v_2$. For $0 < \eps <1$, let $n$ such that $n \geq 6/\eps$, then $\frac{(n-1)(n-2)}{n^3}\widehat{P}^u \leq \eps$ and $6S_2 / n\leq \eps$ where $S_2 = \Eb_{XX'}[\psi(X,X')]\Eb_{YY'}[\phi(Y,Y')]$. We obtain for any $\eps>0$, for a bounded constant $K'>0$:
\begin{eqnarray*}
\lefteqn{\Pb\Big(|\widehat{S}^v_2-S_2|\geq 4\eps\Big)\leq \Pb\Big(|\frac{n-6}{n}\widehat{S}^u_2+\frac{(n-1)(n-2)}{n^3}\widehat{P}^u+O(n^{-2})-S_3|\geq 4\eps\Big)}\\
& \leq & \Pb\Big(|\frac{n-6}{n}\big(\widehat{S}^u_3-S_3\big)|\geq 3\eps-\frac{6}{n}S_3-\frac{K'}{n^2}\Big) \leq  \Pb\Big(|\widehat{S}^u_3-S_3|\geq \eps\Big) \leq 2\exp\Big(-2\lfloor n/4 \rfloor (\eta_1\eta_2)^2\Big).
\end{eqnarray*}
We then obtain, for any $\eps>0$:
\begin{eqnarray*}
\lefteqn{ \Pb\Big(|\widehat{\text{HSIC}}_v(\mathbf{X},\mathbf{Y}) - \text{HSIC}(X,Y)|\geq \eps\Big)\leq\Pb\Big(|\widehat{S}^v_1+\widehat{S}^v_2-2\widehat{S}^v_3 - (S_1+S_2-2S_3)|\geq \eps\Big)}\\
& \leq & \Pb\Big(|\widehat{S}^v_1-S_1|\geq \eps/3\Big)+ \Pb\Big(|\widehat{S}^v_2-S_2|\geq \eps/3\Big)+\Pb\Big(|\widehat{S}^v_3-S_3|\geq \eps/6\Big) \leq O\Big(\exp\left(-c_0 n \eps^2\right)\Big),
\end{eqnarray*}
with $c_0>0$ a finite constant. Finally, applying Lemma S.2 of \cite{liu2022} to the previous inequality (as $\text{HSIC}(X,X)>0, \text{HSIC}(X,Y)>0$ and $\widehat{S}^v_k,S_k$ with $k=1,2,3$, are bounded) and deduce the desired bound.
\end{proof}

\renewcommand{\thesection}{Appendix. \Alph{section}}
\renewcommand\theequation{B.\arabic{equation}} 
\section{Proofs}\label{proofs}

\subsection{Proof of Theorem \ref{bound_proba}}

Let $\nu_n =\sqrt{p_ns^2_n\log(p_n)}\big(n^{-1/2}+a_{n}\big)$, with $a_n = \underset{1\leq k \leq p_n}{\max}\big\{ \partial_2  \pp(\lambda_n,\theta_{n0,k}), \theta_{n0,k}\neq 0\big\}$ for the SCAD and MCP, and $a_n = \lambda_n$ for the LASSO. We aim to prove that, for any $\eps>0$, there exists $C_{\eps} > 0$ such that
\begin{equation}\label{objective_bound_proba}
\Pb\left(\|\widehat{\boldtheta}_n-\boldtheta_{n0}\|_2/\nu_n \geq C_{\eps}\right)  < \eps.
\end{equation}
Now, following \cite{fan2001variable}, Theorem 1, and denoting $n\,\Lb^{\text{pen}}_{v,n}(\boldtheta_n) = n\,\Lb_{v,n}(\boldtheta_n) + n\overset{p_n}{\underset{k=1}{\sum}}\pp(\lambda_n,|\theta_{nk}|)$, we have
\begin{equation}
\Pb\left(\|\widehat{\boldtheta}_n-\boldtheta_{n0}\|_2/\nu_n \geq C_{\eps}\right) \leq \Pb\left(\exists \uu \in \Rb^{p_n}_+,\|\uu\|_2=C'_{\eps}\geq  C_\eps: n\,\Lb^{\text{pen}}_{v,n}(\boldtheta_{n0}+\nu_n \uu) \leq n\,\Lb^{\text{pen}}_{v,n}(\boldtheta_{n0})\right),
\label{ineg1_FanLi}
\end{equation}
and we can set $C'_\eps=C_\eps$, our choice hereafter.
If the r.h.s. of~(\ref{ineg1_FanLi}) is less than $\eps$, there is a local minimum in the ball $\big\{\boldtheta_{n0}+\nu_n \uu, \|\uu\|_2 \leq C_{\eps}\big\}$ with a probability larger than $1-\eps$. So, (\ref{objective_bound_proba}) is satisfied, $\|\widehat{\boldtheta}_n-\boldtheta_{n0}\|_2 = O_p(\nu_n)$. We have
\begin{eqnarray*}
\begin{split}
n\,\Lb^{\text{pen}}_{v,n}(\boldtheta_{n0}+\nu_n\uu)-n\,\Lb^{\text{pen}}_{v,n}(\boldtheta_{n0})
& = n\,\Lb_{v,n}(\boldtheta_{n0}+\nu_n\uu)-n\,\Lb_{v,n}(\boldtheta_{n0})  \\ &+ n\overset{p_n}{\underset{k=1}{\sum}}\Big\{\pp(\lambda_n,\theta_{n0,k}+\nu_nu_k)-\pp(\lambda_n,\theta_{n0,k})\Big\} \\
& \geq n\,\Lb_{v,n}(\boldtheta_{n0}+\nu_n\uu)-n\,\Lb_{v,n}(\boldtheta_{n0}) \\ &+ n\underset{k\in \Sc_n}{\sum}\Big\{\pp(\lambda_n,\theta_{n0,k}+\nu_nu_k)-\pp(\lambda_n,\theta_{n0,k})\Big\}. 
\end{split}
\end{eqnarray*}
We have used $\pp(\lambda_n,0)=0$. Then, we have the decomposition
\begin{eqnarray*}
\lefteqn{\Lb_{v,n}(\boldtheta_{n0}+\nu_n\uu)-\Lb_{v,n}(\boldtheta_{n0}) = -  \overset{p_n}{\underset{k=1}{\sum}} (\theta_{n0,k}+\nu_n\uu_k) \widehat{\text{D}}_v(\mathbf{X}_k,\mathbf{Y}) }\\
& & + \frac{1}{2} \overset{p_n}{\underset{k,l=1}{\sum}} (\theta_{n0,k}+\nu_n\uu_k)(\theta_{n0,l}+\nu_n\uu_l) \widehat{\text{D}}_v(\mathbf{X}_k,\mathbf{X}_l) + \overset{p_n}{\underset{k=1}{\sum}} \theta_{n0,k} \widehat{\text{D}}_v(\mathbf{X}_k,\mathbf{Y}) \\ & & - \frac{1}{2} \overset{p_n}{\underset{k,l=1}{\sum}} \theta_{n0,k}\theta_{n0,l} \widehat{\text{D}}_v(\mathbf{X}_k,\mathbf{X}_l)\\
& = & -\nu_n\uu^\top\Jb_{v,n}+\nu_n\uu^\top\KK_{v,n}\boldtheta_{n0}+ \frac{\nu^2_n}{2}\uu^\top\KK_{v,n}\uu,
\end{eqnarray*}
where $\Jb_{v,n} = (\widehat{\text{D}}_{v}(\mathbf{X}_1,\mathbf{Y}),\ldots,\widehat{\text{D}}_{v}(\mathbf{X}_p,\mathbf{Y}))^\top \in \Rb^{p_n}, \; \KK_{v,n} = (\widehat{\text{D}}_{v}(\mathbf{X}_k,\mathbf{X}_l))_{1\leq k\leq l \leq p_n} \in \Rb^{p_n \times p_n}$. Therefore, it is sufficient to prove that there exists $C_{\eps}$ such that
\begin{equation}
\Pb\left(\exists \uu \in \Rb^{p_n}_+, \|\uu\|_2=C_{\eps}:
nT_1+nT_2+nT_3 \leq 0\right) < \eps, \label{bound_obj_double}
\end{equation}
with $T_1 :=  -\nu_n\uu^\top\Jb_{v,n}+\nu_n\uu^\top\KK_{v,n}\boldtheta_{n0}$, $T_2:= \frac{\nu^2_n}{2}\uu^\top\KK_{v,n}\uu$, $T_3:=\sum_{k\in \Sc_n}\Big\{\pp(\lambda_n,\theta_{n0,k}+\nu_nu_k)-\pp(\lambda_n,\theta_{n0,k})\Big\}$. Hereafter, we divide by $n$ in (\ref{bound_obj_double}). First, note that $|T_1| \leq \nu_n \|\uu\|_2\|\KK_{v,n}\boldtheta_{n0}-\Jb_{v,n}\|_2$. By Assumption \ref{regularity_condition}, $\partial_{\boldtheta_{n,k}}\Lb(\boldtheta_{n0})=0$ for any $1 \leq k \leq p_n$, so for any $a>0$:
\begin{eqnarray*}
\lefteqn{\Pb\Big(\underset{\uu:\|\uu\|_2=C_{\eps}}{\sup}|T_1|>a\Big) \leq \Pb\Big(\underset{\uu:\|\uu\|_2=C_{\eps}}{\sup}\nu_n \|\uu\|_2 \sqrt{p_n}\|\KK_{v,n}\boldtheta_{n0}-\Jb_{v,n} \|_{\infty}>a \Big) }\\
& = & \Pb\Big(\underset{\uu:\|\uu\|_2=C_{\eps}}{\sup} \nu_n \|\uu\|_2 \sqrt{p_n}\underset{1 \leq j \leq p_n}{\max}|-\widehat{\text{D}}_{v}(\mathbf{X}_j,\mathbf{Y})+\text{D}(X_j,Y) + \overset{p_n}{\underset{l=1}{\sum}}\big(\widehat{\text{D}}_{v}(\mathbf{X}_j,\mathbf{X}_l)-\text{D}(X_j,X_l)\big)\theta_{n0,l}|>a\Big) \\
& \leq & \Pb\Big(\underset{\uu:\|\uu\|_2=C_{\eps}}{\sup} \nu_n \|\uu\|_2 \sqrt{p_n} \underset{1 \leq j \leq p_n}{\max}|\widehat{\text{D}}_{v}(\mathbf{X}_j,\mathbf{Y})-\text{D}(X_j,Y) |>a/2\Big)\\
& & + \Pb\Big(\underset{\uu:\|\uu\|_2=C_{\eps}}{\sup} \nu_n \|\uu\|_2 \sqrt{p_n}\underset{1 \leq j \leq p_n}{\max}|\overset{p_n}{\underset{l=1}{\sum}}\big(\widehat{\text{D}}_{v}(\mathbf{X}_j,\mathbf{X}_l)-\text{D}(X_j,X_l)\big)\theta_{n0,l}|>a/2\Big).
\end{eqnarray*}
Using the deviation inequalities $|\widehat{\text{D}}_{v}(\mathbf{X}_j,\mathbf{Y})-\text{D}(X_j,Y) |>\eps$, for any $\eps>0$, provided in Section \ref{deviation}, we deduce by Bonferroni's inequality:
\begin{eqnarray*}
\lefteqn{\Pb\Big(\underset{\uu:\|\uu\|_2=C_{\eps}}{\sup} \nu_n \|\uu\|_2 \sqrt{p_n} \underset{1 \leq j \leq p_n}{\max}|\widehat{\text{D}}_{v}(\mathbf{X}_j,\mathbf{Y})-\text{D}(X_j,Y) |>a/2\Big)}\\
& \leq & \Pb\Big(\underset{1 \leq j \leq p_n}{\max}|\widehat{\text{D}}_{v}(\mathbf{X}_j,\mathbf{Y})-\text{D}(X_j,Y) |>a/(2C_{\eps}\nu_n\sqrt{p_n})\Big)
\leq p_n\, O\Big(\exp(-c_1n\frac{a^2}{4C^2_{\eps}\nu^2_np_n})\Big).
\end{eqnarray*}
Second, we have $\sum^{p_n}_{l=1}\big(\widehat{\text{D}}_{v}(\mathbf{X}_j,\mathbf{X}_l)-\text{D}(X_j,X_l)\big)\boldtheta_{n0,l} =\sum_{l \in \Sc_n}\big(\widehat{\text{D}}_{v}(\mathbf{X}_j,\mathbf{X}_l)-\text{D}(X_j,X_l)\big)\theta_{n0,l}$ for any $j = 1,\ldots,p_n$, due to the sparsity assumption, that is, without loss of generality, $\boldtheta_{n0}$ can be re-written as $ \boldtheta_{n0}= (\theta_{n0,1},\ldots,\theta_{n0,s_n},\theta_{n0,(s_n+1)},\ldots,\theta_{n0,p_n})^\top$, where $\theta_{n0,k} \neq 0$ for $k \leq s_n$ and $\theta_{n0,k}=0$ for all $k=s_n+1,\ldots,p_n$. Let $L_{nj} = \big(\widehat{\text{D}}_{v}(\mathbf{X}_j,\mathbf{X}_l)-\text{D}(X_j,X_l)\big)_{l \in \Sc_n}\in \Rb^{s_n}$. Then, by Cauchy-Schwarz inequality and by Bonferroni's inequality:
\begin{eqnarray*}
\lefteqn{\Pb\Big(\underset{\uu:\|\uu\|_2=C_{\eps}}{\sup} \nu_n \|\uu\|_2 \sqrt{p_n}\underset{1 \leq j \leq p_n}{\max}|\overset{p_n}{\underset{l=1}{\sum}}\big(\widehat{\text{D}}_{v}(\mathbf{X}_j,\mathbf{X}_l)-\text{D}(X_j,X_l)\big)\theta_{n0,l}|>a/2\Big)} \\
& \leq & \Pb\Big(\underset{1 \leq j \leq p_n}{\max}|{\sum}_{l\in \Sc_n}\big(\widehat{\text{D}}_{v}(\mathbf{X}_j,\mathbf{X}_l)-\text{D}(X_j,X_l)\big)\theta_{n0,l}|>a/(2C_{\eps}\nu_n\sqrt{p_n})\Big)\\
& \leq & \Pb\Big(\underset{1 \leq j \leq p_n}{\max}\big(\|L_{nj}\|_2\|\boldtheta_{n0}\|_2\big)>a/(2C_{\eps}\nu_n\sqrt{p_n})\Big)\\
& \leq & \Pb\Big(s_n\underset{1 \leq j \leq p_n}{\max}\,\underset{l \in \Sc_n}{\max}\,|\widehat{\text{D}}_{v}(\mathbf{X}_j,\mathbf{X}_l)-\text{D}(X_j,X_l)|\|\boldtheta_{n0}\|_{\infty}>a/(2C_{\eps}\nu_n\sqrt{p_n})\Big) \\
& \leq & s_n\,p_n\, O\Big(\exp(-c_1n\frac{a^2}{4C^2_{\eps}\nu^2_np_ns^2_n\|\boldtheta_{n0}\|^2_{\infty}})\Big).
\end{eqnarray*}
Putting the pieces together, we obtain
\begin{eqnarray*}
\Pb\Big(\underset{\uu:\|\uu\|_2=C_{\eps}}{\sup}|T_1|>a\Big)& \leq & p_n\, O\Big(\exp(-c_1n\frac{a^2}{4C^2_{\eps}\nu^2_np_n})\Big) + s_n\,p_n \,O\Big(\exp(-c_1n\frac{a^2}{4C^2_{\eps}\nu^2_np_ns^2_n\|\boldtheta_{n0}\|^2_{\infty}})\Big).
\end{eqnarray*}
Furthermore, we have $T_2 = \frac{\nu^2_n}{2}\uu^\top \text{D}_{\mathbf{X}\mathbf{X}}\uu + \frac{\nu^2_n}{2}\uu^\top \mathcal{R}_n$, $\mathcal{R}_n := \sum^{p_n}_{k,l=1}\uu_k\uu_l\big(\widehat{\text{D}}_{v}(\mathbf{X}_k,\mathbf{X}_l)-\text{D}(X_k,X_l)\big)$. For any $b>0$, using $\|A\|_s \leq p_n \|A\|_{\max}$ for $A \in \Rb^{p_n \times p_n}$, by Bonferroni's inequality, we have
\begin{eqnarray*}
\begin{split}
\Pb\Big(\underset{\uu:\|\uu\|_2=C_{\eps}}{\sup}|\mathcal{R}_n|>b\Big) & \leq \Pb\Big(\underset{\uu:\|\uu\|_2=C_{\eps}}{\sup}\|\uu\|^2_2\,p_n\,\underset{1\leq k,l \leq p_n}{\max}|\widehat{\text{D}}_{v}(\mathbf{X}_k,\mathbf{X}_l)-\text{D}(X_k,X_l)|>b\Big) \\
&\leq p^2_n O\Big(\exp(-c_1 n \frac{b^2}{C^4_{\eps}p^2_n})\Big).
\end{split}
\end{eqnarray*}
Let us now consider the penalization part. By Assumption \ref{assumption_penalty}, we obtain the bound
\begin{equation*}
|\underset{k \in \Sc_n}{\sum} \pp(\lambda_n,\theta_{n0,k}+ \nu_n \uu_{k})-\pp(\lambda_n,\theta_{n0,k}) | \leq \nu_n \|\uu\|_1 a_n + \frac{\nu^2_n}{2} \|\uu\|^2_2 b_n \big(1+o(1)\big).
\end{equation*}
Using $\|\uu\|_1 \leq \sqrt{s_n} \|\uu\|_2$, we get $|\underset{k \in \Sc_n}{\sum} \pp(\lambda_n,\theta_{n0,k}+ \nu_n  \uu_{k})-\pp(\lambda_n,\theta_{n0,k}) | \leq \nu_n \sqrt{s_n} \|\uu\|_2 a_n + \nu^2_n \|\uu\|^2_2 b_n$.\\
In the case of the LASSO penalty, we have
\begin{eqnarray*}
|\lambda_n \underset{k \in \Sc_n}{\sum}|\big\{|\theta_{n0,k}+\nu_n\uu_k|-|\theta_{0n,k}|\big\}|\leq\lambda_n \underset{k \in \Sc_n}{\sum}\nu_n|\uu_k|O_p(1) \leq \lambda_n  \nu_n\sqrt{s_n}\|\uu\|_2.
\end{eqnarray*}
Now, let $\delta_n = \lambda_{\min}(\text{D}_{\mathbf{X}\mathbf{X}}) C^2_{\eps} \nu^2_n/2$, where $\lambda_{\min}(\text{D}_{\mathbf{X}\mathbf{X}})>0$ by Assumption \ref{eigenvalue_assumption}. For $n$, $C_{\eps}$ sufficiently large, for $a_1>0$ finite:
\begin{eqnarray*}
\lefteqn{\Pb\Big(\exists \uu \in \Rb^{p_n}, \|\uu\|_2=C_{\eps}:| \nu_n\uu^\top\big(\KK_{v,n}\boldtheta_{n0}-\Jb_{v,n}\big)|>a_1\delta_n\Big)}\\
& \leq & O\Big(\exp(-c_1n\frac{a^2_1\delta^2_n}{4C^2_{\eps}\nu^2_np_n}+\log(p_n))\Big) + O\Big(\exp(-c_1 n\frac{a^2_1\delta^2_n}{4C^2_{\eps}\nu^2_np_ns^2_n\|\boldtheta_{n0}\|^2_{\infty}}+\log(p_n)+\log(s_n))\Big)\\
& \leq & O\Big(\exp(-c'_1n\frac{C^4_{\eps}\nu^4_n}{C^2_{\eps}\nu^2_np_n}+\log(p_n))\Big) + O\Big(\exp(-c''_1 n\frac{C^4_{\eps}\nu^4_n}{C^2_{\eps}\nu^2_np_ns^2_n\|\boldtheta_{n0}\|^2_{\infty}}+\log(p_n)+\log(s_n))\Big)\\
& \leq & O\Big(\exp(-c'_1n\frac{C^2_{\eps}\nu^2_n}{p_n}+\log(p_n))\Big) + O\Big(\exp(-c''_1 n\frac{C^2_{\eps}\nu^2_n}{p_ns^2_n\|\boldtheta_{n0}\|^2_{\infty}}+2\log(p_n))\Big)\\
& \leq & O\Big(\exp(-c'_1n\frac{C^2_{\eps}p_ns^2_n\log(p_n)n^{-1}}{p_n}+\log(p_n))\Big)+O\Big(\exp(-c''_1n\frac{C^2_{\eps}p_ns^2_n\log(p_n)n^{-1}}{p_ns^2_n\|\boldtheta_{n0}\|^2_{\infty}}+2\log(p_n))\Big),
\end{eqnarray*}
which will tend to zero, where $c'_1, c''_1>0$ are finite constant. So we deduce $|T_1| \leq \nu_n \|\uu\|_2\|\KK_{v,n}\boldtheta_{n0}-\Jb_{v,n}\|_2 = O_p(\delta_n)$.
As for $T_2$, for $C_{\eps}>0$ large enough, we have for $a_2>0$ a finite constant:
\begin{eqnarray*}
\begin{split}
\Pb\Big(\exists \uu \in \Rb^{p_n}, \|\uu\|_2=C_{\eps}:|\frac{\nu^2_n}{2}\mathcal{R}_n|>a_2\delta_n\Big)
&\leq O\Big(\exp(-c_1n\frac{a^2_2\delta^2_n}{4\nu^4_nC^4_{\eps}p^2_n}+2\log(p_n))\Big) \\ &=O\Big(\exp(-c'''_1\frac{n}{p^2_n}+2\log(p_n))\Big),
\end{split}
\end{eqnarray*}
with $c'''_1>0$, so that under $p^2_ns_n\log(p_n) = o(n)$, the right-hand side tends to zero. Hence, $T_2 = \frac{\nu^2_n}{2}\uu^\top \text{D}_{\mathbf{X}\mathbf{X}}\uu + O_p(\delta_n)$.
Finally, we have $|T_3| \leq \sqrt{s_n} \nu_n a_n \|\uu\|_2 + \frac{\nu^2}{2}b_n \|\uu\|^2_2(1+o(1))$.
Therefore, we get $nT_1+nT_2+nT_3 = n \frac{\nu^2_n}{2} \uu^\top \text{D}_{\mathbf{X}\mathbf{X}}\uu (1+o_p(1))$.
As a consequence, since $n\frac{\nu^2_n}{2} \uu^\top \text{D}_{\mathbf{X}\mathbf{X}}\uu \geq n\delta_n$, with $C^2_{\eps}$ large enough, (\ref{bound_obj_double}) is satisfied, which implies the probability bound $\|\widehat{\boldtheta}_n-\boldtheta_{n0}\|_2=O_p(\nu_n)$.

\subsection{Proof of Theorem \ref{sparsistency}}

To prove the recovery of the zero entries, we show that with probability tending to one, for any $\boldtheta_{n1}$ such that $\|\boldtheta_{n1}-\boldtheta_{n01}\|_2=O_p(\sqrt{p_ns^2_n\log(p_n)/n})$, and any constant $C>0$:
\begin{equation*}
n\,\Lb^{\text{pen}}_{v,n}((\boldtheta^\top_{n1},0^\top)^\top) = \underset{\|\boldtheta_{n2}\|_2 \leq C \sqrt{p_ns^2_n\log(p_n)/n}}{\min} n\,\Lb^{\text{pen}}_{v,n}((\boldtheta^\top_{n1},\boldtheta^\top_{n2})^\top).
\end{equation*}
Let $u_n = C \sqrt{p_ns^2_n\log(p_n)/n}$. To prove this statement, we show that for any $\boldtheta_{n}\in [0,+\infty[^n$ such that $\|\boldtheta_{n}-\boldtheta_{n0}\|_2 \leq u_n$, we have with probability one that $\partial_{\theta_{n,j}}n\,\Lb^{\text{pen}}_{v,n}(\boldtheta_n)>0, \;\; \text{for all} \;\; j \in \Sc^c_n, 0<\theta_{n,j}<u_n$.
By an expansion of this derivative:
\begin{eqnarray*}
\begin{split}    
n\,\partial_{\theta_{n,j}}\Lb^{\text{pen}}_{v,n}(\boldtheta_n) &=  n\,\partial_{\theta_{n,j}}\Lb_{v,n}(\boldtheta_{n0}) + n\overset{p_n}{\underset{k=1}{\sum}}\partial^2_{\theta_{n,j}\theta_{n,k}}\Lb_{v,n}(\boldtheta_{n0})(\theta_{n,k}-\theta_{n0,k}) + n\,\partial_2\pp(\lambda_n,\theta_{n,j}) \\ &=:nT_1+nT_2+nT_3.
\end{split}
\end{eqnarray*}
Under Assumption \ref{regularity_condition}, $-\text{D}(X_j,Y)+\sum^{p_n}_{l=1}\text{D}(X_j,X_l)\theta_{n0,l}=0$ for any $1 \leq j \leq p_n$. So, for any $a>0$:
\begin{eqnarray*}
\Pb\big(|T_1|>a\big)
\leq \Pb\Big(|\widehat{\text{D}}_v(\mathbf{X}_j,\mathbf{Y}))-\text{D}(X_j,Y)|>a/2\Big) + \Pb\Big(|\overset{p_n}{\underset{l=1}{\sum}}\big(\widehat{\text{D}}_{v}(\mathbf{X}_j,\mathbf{X}_l)-\text{D}(X_j,X_l)\big)\theta_{n0,l}|>a/2\Big).
\end{eqnarray*}
Here, for any $j \in \Sc^c_n$, under the sparsity assumption:
{\small{\begin{eqnarray*}\begin{split}
\Pb\Big(|\overset{p_n}{\underset{l=1}{\sum}}\Big(\widehat{\text{D}}_{v}(\mathbf{X}_j,\mathbf{X}_l)-\text{D}(X_j,X_l)\Big)\theta_{n0,l}|>a/2\Big)
& \leq \Pb\Big(\|L_{nj}\|_2\|\boldtheta_{n0}\|_2>a/2\Big) \\ & \leq \Pb\Big(\sqrt{s_n}\underset{l \in \Sc_n}{\max}|\widehat{\text{D}}_{v}(\mathbf{X}_j,\mathbf{X}_l)-\text{D}(X_j,X_l)|\sqrt{s_n}\|\boldtheta_{n0}\|_{\infty}>a/2\Big).
\end{split}
\end{eqnarray*}}}
We deduce
\begin{equation*}
\Pb\big(|T_1|>a\big) \leq O\Big(\exp\big(-c_1na^2\big)\Big) + s_n\,O\Big(\exp\Big(-c_1n\frac{a^2}{4s^2_n\|\boldtheta_{n0}\|^2_{\infty}}\Big)\Big).
\end{equation*}
As for $T_2$, we have $T_2 = \overset{p_n}{\underset{k=1}{\sum}}\big(\partial^2_{\theta_{n,j}\theta_{n,k}}\Lb_{v,n}(\boldtheta_n)-\text{D}_{n,\mathbf{X}\mathbf{X},jk}\big)(\theta_{n,k}-\theta_{n0,k})+\overset{p_n}{\underset{k=1}{\sum}}\text{D}_{n,\mathbf{X}\mathbf{X},jk}(\theta_{n,k}-\theta_{n0,k})=:K_1+K_2$.
By the Cauchy–Schwarz inequality and $\|\boldtheta_{n}-\boldtheta_{n0}\|_2=O_p(\sqrt{p_ns^2_n\log(p_n)/n})$:
\begin{equation*}
|K_2|=|\overset{p_n}{\underset{k=1}{\sum}}\text{D}_{n,\mathbf{X}\mathbf{X},jk}(\theta_{n,k}-\theta_{n0,k})|\leq O_p(\sqrt{p_ns^2_n\log(p_n)/n})\Big[\overset{p_n}{\underset{k=1}{\sum}}\text{D}^2_{n,\mathbf{X}\mathbf{X},jk}\Big]^{1/2}.
\end{equation*}
Under the bounded eigenvalue of Assumption \ref{eigenvalue_assumption}, $\sum^{p_n}_{k=1}\text{D}^2_{n,\mathbf{X}\mathbf{X},jk}=O(1)$, so that $K_2 = O_p\big(\sqrt{p_ns^2_n\log(p_n)/n}\big)$. Moreover, let $U_{nj}$ be the vector
$U_{nj}=\big(\widehat{\text{D}}_v(\mathbf{X}_j,\mathbf{X}_k)-\text{D}(X_j,X_k)\big)_{1\leq k \leq p_n}\in \Rb^{p_n}$. B the Cauchy–Schwarz inequality, for any $b>0$:
\begin{eqnarray*}
\begin{split}
\lefteqn{\Pb\big(|K_1|>b\big)\leq  \Pb\Big(\|\boldtheta_{n}-\boldtheta_{n0}\|_2\|U_{nj}\|_2>b\Big)}\\
& \leq \Pb\Big(C_{\eps}\sqrt{\frac{p_ns^2_n\log(p_n)}{n}}\sqrt{p_n}\underset{1 \leq k \leq p_n}{\max}|\widehat{\text{D}}_v(\mathbf{X}_j,\mathbf{X}_k)-\text{D}(X_j,X_k)|>b\Big) \\ &+ \Pb\Big(\|\boldtheta_{n}-\boldtheta_{n0}\|_2> C_{\eps}\sqrt{\frac{p_ns^2_n\log(p_n)}{n}}\Big)\\
& \leq  p_n \, O\Big(\exp\Big(-c_1n\frac{nb^2}{p^2_ns^2_n\log(p_n)C^2_{\eps}}\Big)\Big)+\Pb\Big(\|\boldtheta_{n}-\boldtheta_{n0}\|_2> C_{\eps}\sqrt{\frac{p_ns^2_n\log(p_n)}{n}}\Big).
\end{split}
\end{eqnarray*}
As a consequence, we deduce $K_1=O_p\big(\sqrt{p_ns^2_n\log(p_n)n}\big)$, and $T_2=O_p\big(\sqrt{p_ns^2_n\log(p_n)n}\big)$. Therefore, we get
\begin{equation*}
\forall j \in \Sc^c_n, \; n\,\partial_{\theta_{n,j}}\Lb^{\text{pen}}_{v,n}(\boldtheta_n) = n\lambda_n\left[O_p\left(\sqrt{\frac{p_ns^2_n\log(p_n)}{n}}\lambda^{-1}_n\right)+\lambda^{-1}_n\partial_2\pp(\lambda_n,\theta_{n,j})\right].
\end{equation*}
Under $\underset{n \rightarrow \infty}{\lim \, \inf} \; \underset{x \rightarrow 0^+}{\lim \, \inf} \; \lambda^{-1}_n \partial_2\pp(\lambda_n,x) > 0$ and $\sqrt{n/(p_ns^2_n\log(p_n))}\lambda_n\rightarrow \infty$, we get $\partial_{\theta_{n,j}}n\,\Lb^{\text{pen}}_{v,n}(\boldtheta_n)>0$, $\forall j \in \Sc^c_n$.

\subsection{Proof of Theorem \ref{fdr_control}}

The way $\widehat{W}_k$ is built determines the success of the FDR control. More precisely, it is employed as a signal for active/inactive feature: if $\widehat{W}_k$ takes a sufficiently large positive value, then $X_k$ is active; $\widehat{W}_k$ is expected to take a small positive or negative value with the same probability when $X_k$ is inactive. Since $\widehat{W_k} = \widehat{\theta}_{n,k}-\widetilde{\theta}_{n,k}$, with $\widehat{\theta}_{n,k}, \widetilde{\theta}_{n,k}$ the solutions of penalized OLS problems, we may apply Lemma 1 of \cite{liu2022}. \\
The proof of the FDR control follows the same steps as in the proof of Theorem 4 of \cite{liu2022}.

\clearpage
\renewcommand{\thesection}{S\arabic{section}}
\renewcommand\theequation{S\arabic{equation}} 
\renewcommand{\theremark}{S\arabic{remark}}
\renewcommand{\theassumption}{S\arabic{assumption}}
\renewcommand{\thefigure}{S.\arabic{figure}}

\begin{center}
    {\huge  Supplementary Material to ``Sparse minimum Redundancy Maximum Relevance for feature selection''}
\\ \vspace{1cm}

  {\Large Peter~Naylor, Benjamin~Poignard, Héctor~Climente-González, Makoto~Yamada}
       
\end{center}

\setcounter{section}{0}
\setcounter{figure}{0}

\section{Experimental results for the simulated data} \label{app:exp}

The results of this section relate to the experimental setup for the simulated data described in Subsection \ref*{s:data} of the main text. 
In particular, we show the Accuracy (Acc), when categorical data, the Mean Squared Error, when continuous data, the true positive rate (TPR), the false positive rate (FPR), the false detection rate (FDR) and the number of selected features ($N$). For Acc and TPR, it is the higher the better. For the other, it is the lower the better.
PC denotes the projection correlation and our method is SmRMR for Sparse mRMR. 
% For all the figures of this supplement, we use the following nomenclature:
% \begin{itemize}
%     % \item $n$ is number of samples and $p$ is the number of features.
%     % \item AM: Association Measure.
%     \item Acc: Accuracy. The higher the better.
%     \item MSE: mean square error. The lower the better.
%     \item TPR: True Positive Rate. The higher the better.
%     \item FPR: False Positive Rate. The lower the better.
%     \item FDR: False Detection Rate. The lower the better.
%     \item $N$ is the number of selected features.
%     \item PC: projection correlation and our method is SmRMR: Sparse mRMR.
% \end{itemize}
We use the Gaussian kernel with the HSIC association measure.
% AM: Association Measure, Acc: Accuracy, MSE: mean square error, TPR: True Positive Rate, FPR: False Positive Rate, FDR: False Detection Rate, N is the number of selected features, PC: projection correlation and our method is SmRMR: Sparse mRMR.
% For Acc and TPR: the higher the better, for MSE, FPR and FDR: the lower the better. In the figures, we use the Gaussian kernel with the HSIC association measure.

\newpage

\begin{figure}[H]
\centering
\includegraphics[width=0.8\textwidth]{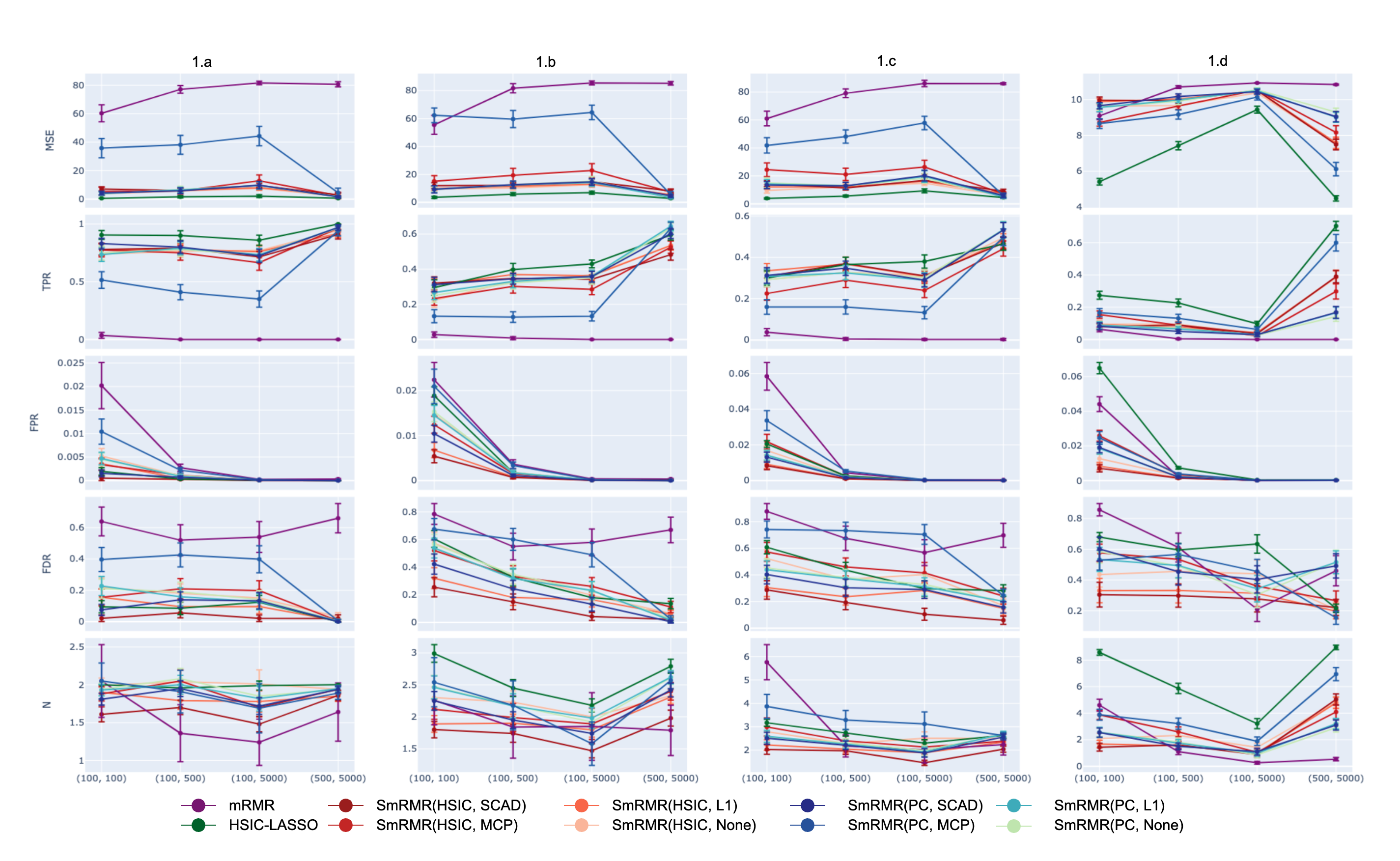}
\caption{Full results for the linear DGP with SmRMR\textsubscript{2}.}
\label{fig:linearDGP}
\end{figure}

\begin{figure}[H]
\centering
\includegraphics[width=0.8\textwidth]{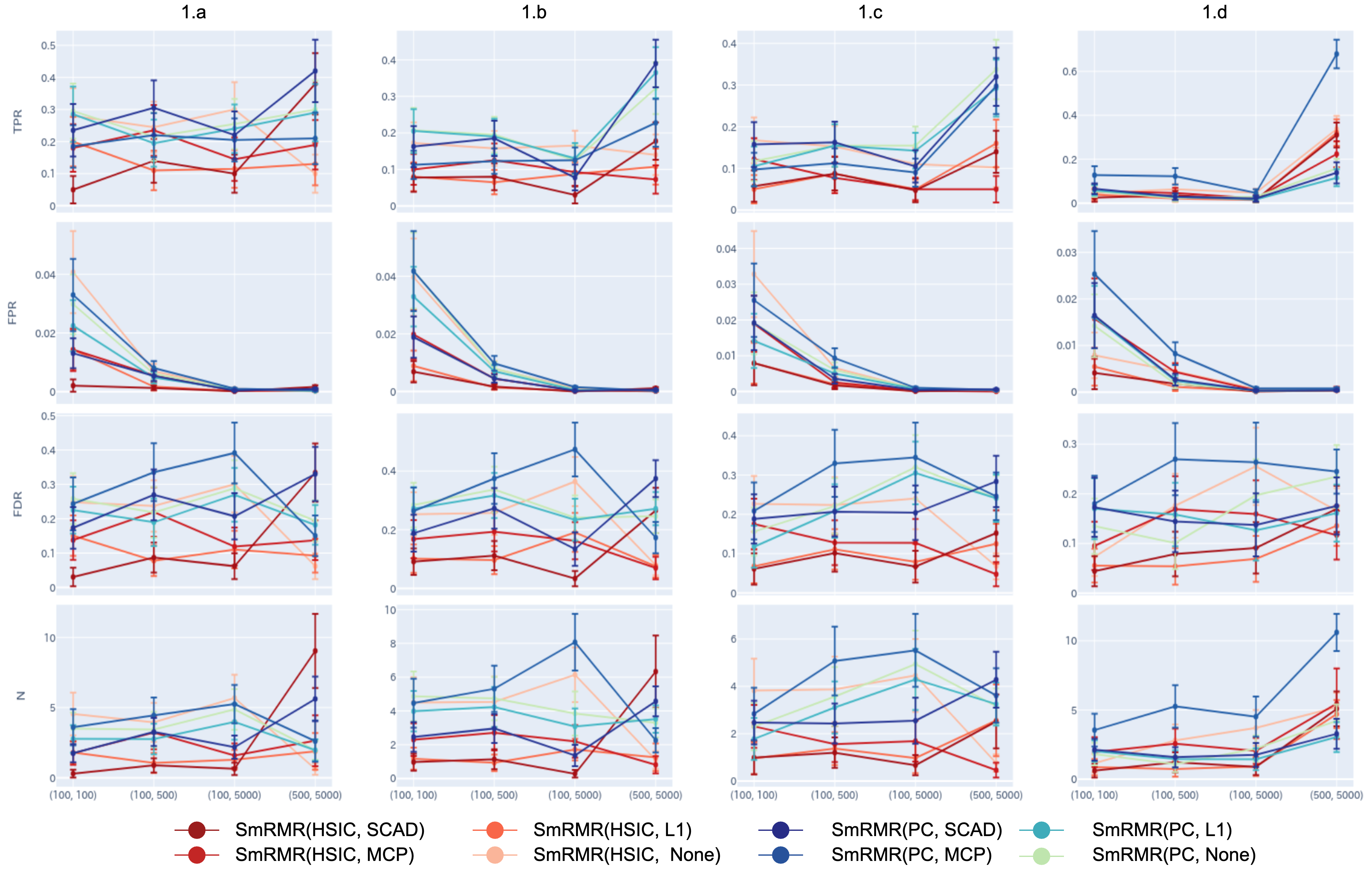}
\caption{Full results for the linear DGP with the unmodified SmRMR.}
\label{fig:linearDGP_smrmr}
\end{figure}

\begin{figure}[H]
\centering
\includegraphics[width=0.8\textwidth]{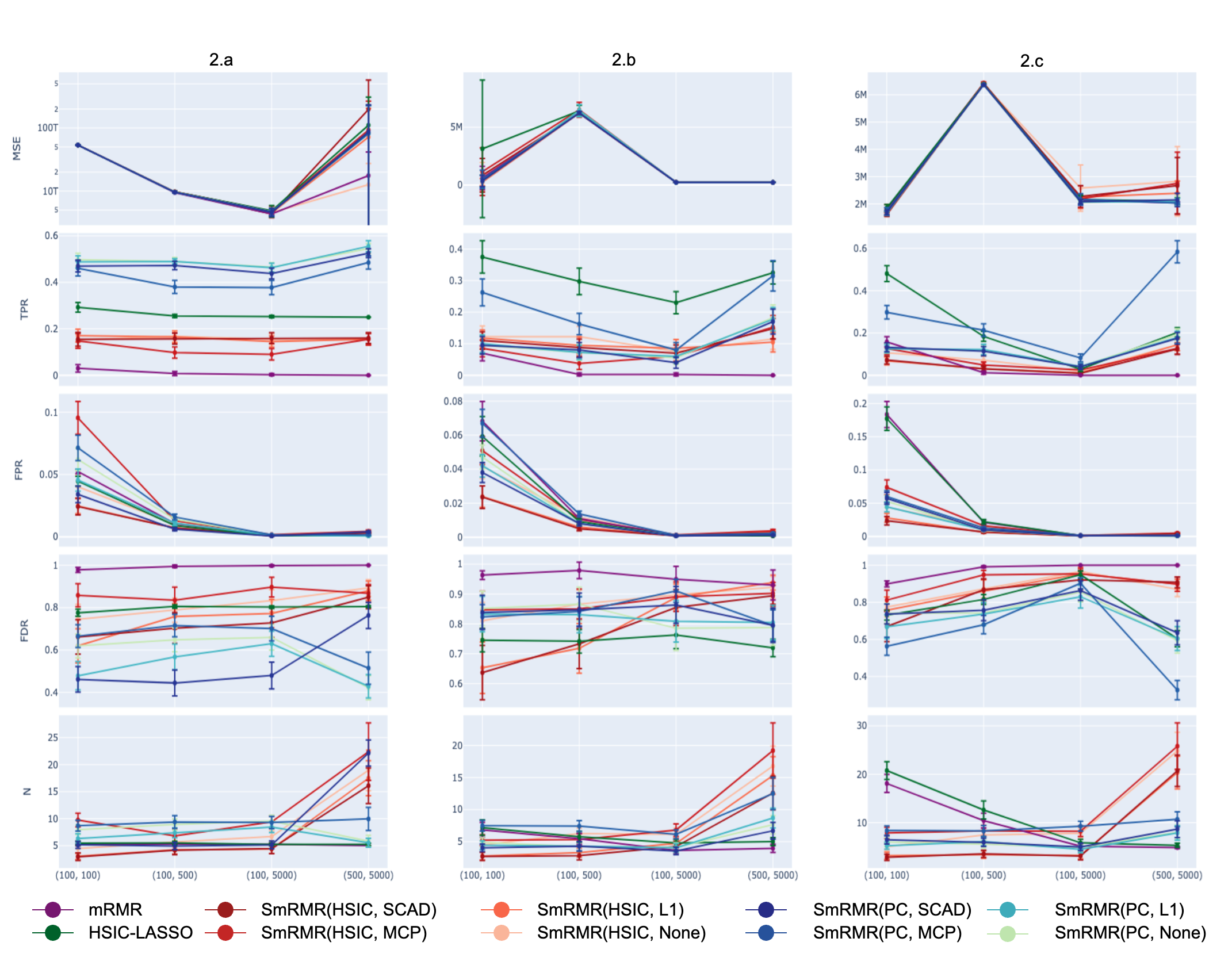}
\caption{Full results for the non-linear DGP with SmRMR\textsubscript{2}.}
\label{fig:FnonlinearDGP}
\end{figure}

\begin{figure}[H]
\centering
\includegraphics[width=0.8\textwidth]{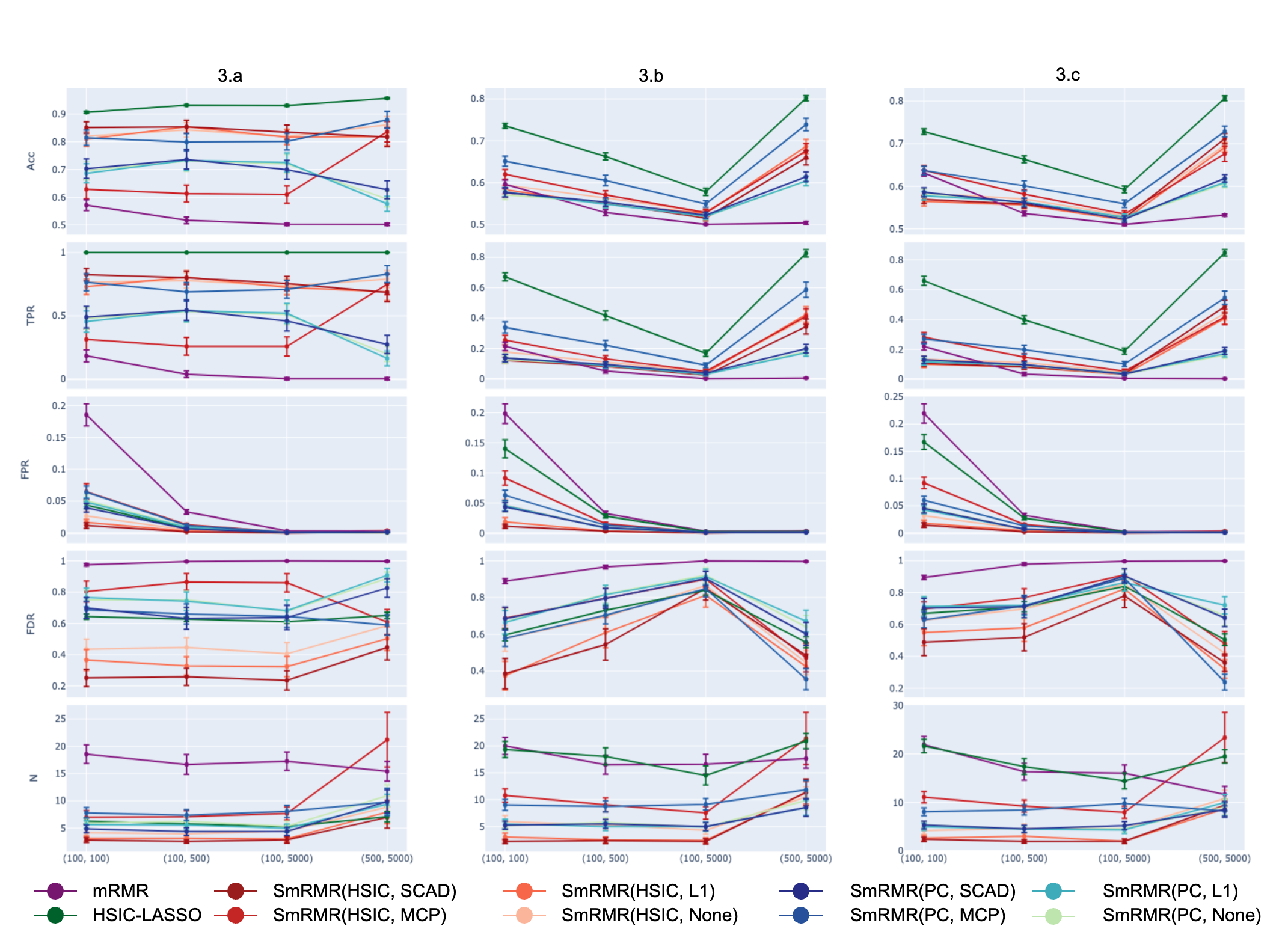}
\caption{Full results for the categorical DGP with the unmodified SmRMR\textsubscript{2}.}
\label{fig:catDGP}
\end{figure}

\begin{figure}[H]
\centering
\includegraphics[width=0.8\textwidth]{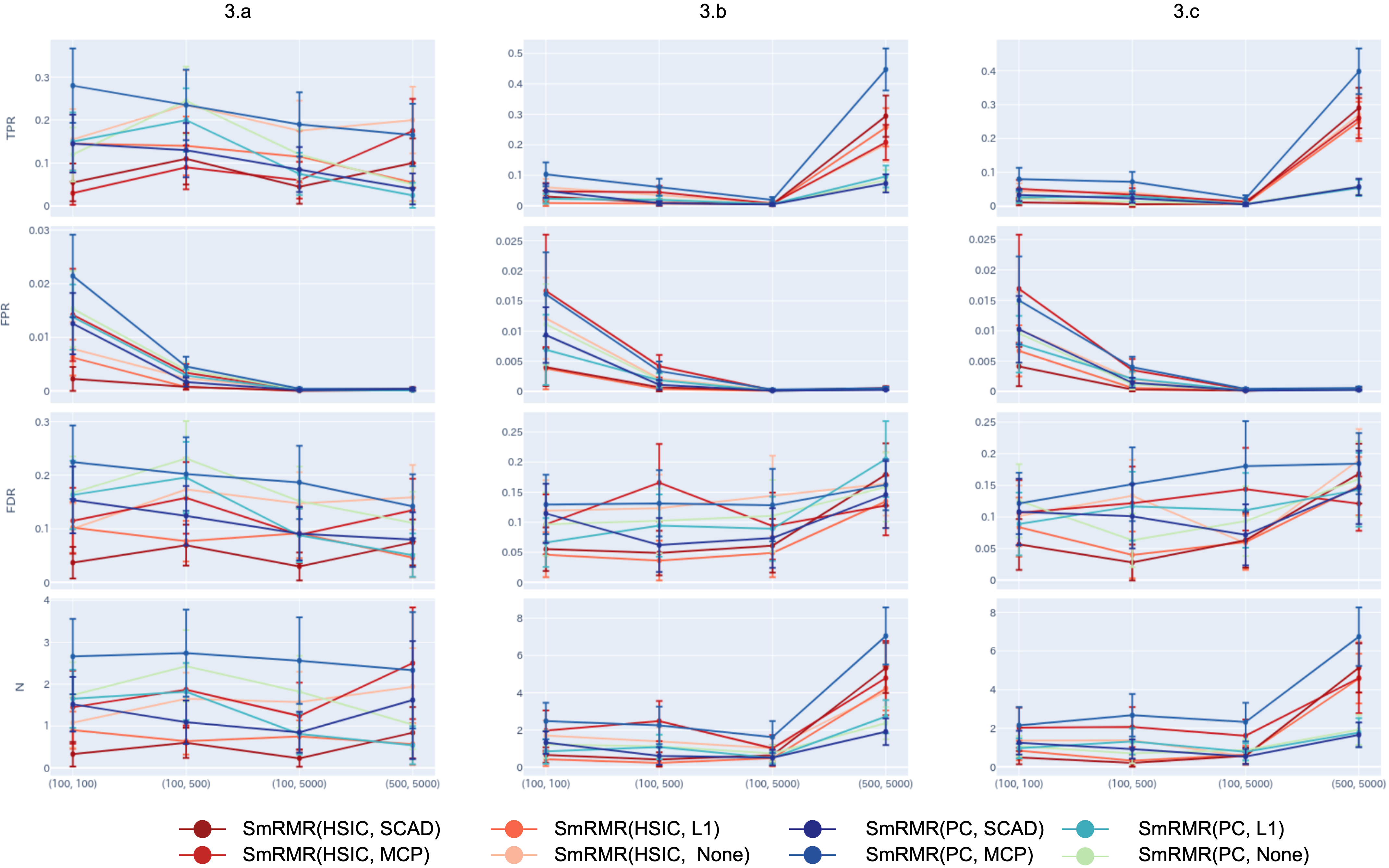}
\centering\caption{Full results for the categorical DGP with the unmodified SmRMR.}
\label{fig:catDGP_smrmr}
\end{figure}

\end{document}